
\documentclass{article}

\usepackage{microtype}
\usepackage{graphicx}
\usepackage{subcaption}
\usepackage{booktabs} 

\usepackage{hyperref}
\urlstyle{same}




\usepackage[accepted]{icml2026}

\usepackage{amsmath}
\usepackage{amssymb}
\usepackage{mathtools}
\usepackage{amsthm}

\usepackage{dsfont}
\usepackage{natbib}
\usepackage{amssymb, amsmath,amsthm,mathtools}
\usepackage{amsfonts}
\usepackage{algorithm}
\usepackage{algorithmic}
\usepackage{paralist}
\usepackage{multirow}
\usepackage{times}
\usepackage{ulem}
\usepackage{wrapfig}

\usepackage{url,enumerate}
\usepackage{color,xcolor}
\usepackage{makeidx}  
\usepackage[small, compact]{titlesec}
\usepackage{xspace}
\usepackage{epstopdf}
\usepackage{mathrsfs}
\usepackage{times}
\usepackage{enumerate}
\usepackage{color}
\usepackage{graphicx,epsfig}
\usepackage{url}
\usepackage{hyperref}
\usepackage{bm}
\usepackage{bbm}
\usepackage{upgreek}
\usepackage{multirow}
\usepackage{ulem}
\usepackage{cancel}
\usepackage{subcaption}
\usepackage{enumitem}

\usepackage[capitalize,noabbrev]{cleveref}












\newcommand{\ben}{\begin{enumerate}}
\newcommand{\een}{\end{enumerate}}

\newcommand{\argmax}{\operatornamewithlimits{argmax}}


\newcommand{\cmt}[1]{}

\input{formats.sty}

\theoremstyle{plain}
\newtheorem{theorem}{Theorem}[section]
\newtheorem{proposition}[theorem]{Proposition}

\theoremstyle{definition}
\newtheorem{definition}[theorem]{Definition}

\theoremstyle{remark}

\newcommand{\ours}{{{RAMBO}}\xspace}

\newcommand{\alanc}[1]{}




\usepackage[disable,textsize=tiny]{todonotes}

\icmltitlerunning{Regime-Adaptive Bayesian Optimization via Dirichlet Process Mixtures of Gaussian Processes}

\newcommand{\chong}[1]{}
\newcommand{\XL}[1]{}
\newcommand{\sli}[1]{}
\newcommand{\yan}[1]{}

\begin{document}

\twocolumn[
  \icmltitle{Regime-Adaptive Bayesian Optimization via\\ Dirichlet Process Mixtures of Gaussian Processes}



  \icmlsetsymbol{equal}{*}

  \begin{icmlauthorlist}
    \icmlauthor{Yan Zhang}{fsu}
    \icmlauthor{Xuefeng Liu}{stanford}
    \icmlauthor{Sipeng Chen}{fsu}
    \icmlauthor{Sascha Ranftl}{purdue}
    \icmlauthor{Chong Liu}{albany}
    \icmlauthor{Shibo Li}{fsu}
  \end{icmlauthorlist}

  \icmlaffiliation{fsu}{Department of Computer Science, Florida State University, Tallahassee, FL, USA}
  \icmlaffiliation{stanford}{School of Medicine, Stanford University, Stanford, CA, USA}
  \icmlaffiliation{purdue}{School of Mechanical Engineering, Purdue University, West Lafayette, IN, USA}
  \icmlaffiliation{albany}{Department of Computer Science, University at Albany, State University of New York, Albany, NY, USA}

  \icmlcorrespondingauthor{Shibo Li}{shiboli@cs.fsu.edu}

  \icmlkeywords{Machine Learning, ICML}

  \vskip 0.3in
]



\printAffiliationsAndNotice{}  

\begin{abstract}
	Standard Bayesian Optimization (BO) assumes uniform smoothness across the search space—an assumption violated in multi-regime problems such as molecular conformation search through distinct energy basins or drug discovery across heterogeneous molecular scaffolds. A single GP either oversmooths sharp transitions or hallucinates noise in smooth regions, yielding miscalibrated uncertainty. We propose \ours, a Dirichlet Process Mixture of Gaussian Processes that automatically discovers latent regimes during optimization, each modeled by an independent GP with locally-optimized hyperparameters. We derive collapsed Gibbs sampling that analytically marginalizes latent functions for efficient inference, and introduce adaptive concentration parameter scheduling for coarse-to-fine regime discovery. Our acquisition functions decompose uncertainty into intra-regime and inter-regime components. Experiments on synthetic benchmarks and real-world applications—including molecular conformer optimization, virtual screening for drug discovery, and fusion reactor design—demonstrate consistent improvements over state-of-the-art baselines on multi-regime objectives. Code is available at \url{https://github.com/AnthonyZhangYan/RAMBO}.
\end{abstract}

\section{Introduction}

Bayesian Optimization (BO) has become the standard approach for optimizing expensive black-box functions, with applications spanning hyperparameter tuning~\citep{snoek2012practical, feurer2019hyperparameter}, neural architecture search~\citep{zoph2017neural, elsken2019neural, kandasamy2018neural}, materials discovery~\citep{lookman2019active, xue2016accelerated, kusne2020on}, and drug design~\citep{gomez2018automatic, sanchez2018inverse}. By fitting a Gaussian Process (GP) surrogate to observed data $\mathcal{D}$~\citep{rasmussen2006gaussian} and using acquisition functions to guide sampling~\citep{jones1998efficient, mockus1978application}, BO efficiently navigates high-dimensional spaces with minimal function evaluations.

However, standard BO with stationary kernels assumes uniform smoothness and noise characteristics across the search space. While non-stationary kernel constructions exist—such as input-dependent length scales~\citep{paciorek2004nonstationary, higdon1999non,  plagemann2008nonstationary} or deep GP compositions~\citep{damianou2013deep, wilson2016deep}—they model smoothly-varying hyperparameters and require specifying the functional form of this variation a priori. In contrast, many scientific design problems exhibit discrete regime structure with abrupt transitions rather than gradual parameter drift. In molecular conformation search~\citep{hawkins2010conformer, riniker2015better}, rotatable bonds create distinct energy basins separated by torsional barriers—each basin locally smooth, but the global landscape comprising hundreds of such basins with incommensurable curvature. Drug discovery landscapes~\citep{gomez2018automatic, griffiths2020constrained, korovina2020chembo} are fragmented across molecular scaffolds, where different chemical families exhibit fundamentally different structure-activity relationships. Fusion reactor design~\citep{pedersen2020stellarator,cadena2025constellaration} traverses qualitatively different stability regimes as plasma geometry varies. In each domain, the objective function is not a smooth surface with slowly-varying properties but a patchwork of locally coherent regions separated by sharp boundaries. This discrete heterogeneity is poorly captured by continuous non-stationary kernels, which must interpolate smoothly between regimes and cannot represent the abrupt transitions that characterize real scientific landscapes. A mixture model, by contrast, naturally represents this structure: each component captures a distinct regime with its own hyperparameters, while the probabilistic assignment mechanism identifies regime boundaries directly from data without requiring their functional form to be specified in advance.

Building on this insight, we propose \emph{R}egime-\emph{A}daptive \emph{M}ixture \emph{B}ayesian \emph{O}ptimization (\ours), which replaces the monolithic GP surrogate with a Dirichlet Process Mixture Model of Gaussian Processes (DPMM-GP). This nonparametric Bayesian framework adaptively partitions the search space into an unknown number of regimes inferred directly from data, with each regime modeled by an independent GP with locally-optimized hyperparameters. 
The Dirichlet Process prior provides automatic model selection: given $n$ observations, the expected number of discovered regimes $K$ grows as $\mathbb{E}[K \mid n] \approx \alpha \log(n/\alpha + 1)$~\citep{antoniak1974mixtures}, adapting complexity to data without manual specification. We further introduce \textbf{adaptive concentration parameter scheduling} to control regime discovery dynamics throughout optimization. The concentration parameter $\alpha$ governs the propensity to create new regimes, and its optimal value varies with the amount of available data. Early stages benefit from small $\alpha$ to avoid premature fragmentation when observations are sparse; later stages permit larger $\alpha$ to discover fine-grained structure as evidence accumulates. This scheduling mirrors the exploration-exploitation tradeoff inherent to BO, but operates at the model complexity level rather than the sampling location level. Our contributions are as follows:
\begin{itemize}[leftmargin=*, itemsep=0pt, topsep=0pt]
	\item We develop a complete DPMM-GP surrogate for BO with collapsed Gibbs sampling that analytically marginalizes latent functions, improving mixing efficiency over HMC-based inference~\citep{rasmussen2002infinite}.
	\item We introduce adaptive $\alpha$-scheduling to dynamically adjust model complexity—starting with small $\alpha$ to avoid premature fragmentation, then increasing to enable fine-grained regime discovery as data accumulates.
	\item We derive closed-form Expected Improvement for the DPMM-GP posterior, naturally decomposing uncertainty into intra-regime variance and inter-regime disagreement.
	\item We conduct extensive experiments on synthetic benchmarks and scientific applications—molecular conformer optimization, virtual screening for drug discovery, and fusion reactor design—demonstrating consistent improvements over state-of-the-art baselines on multi-regime objectives.
\end{itemize}

\paragraph{Conflict of Interest Disclosure.} The authors declare no financial conflicts of interest. None of the empirical evaluations in this paper involves a model, system, or commercial product developed by an entity employing any of the authors. The benchmark datasets used (ConStellaration, ZINC15/6T2W docking scores) are publicly released by third parties, and all baselines are run using their official open-source implementations cited in Section~\ref{sec:experiments}.

\section{Background}

\paragraph{Gaussian Processes.} A Gaussian Process (GP) defines a distribution over functions $f(\mathbf{x})$~\citep{rasmussen2006gaussian, williams1996gaussian, mackay1998introduction}, fully specified by a mean function $m(\mathbf{x})$ and covariance (kernel) function $k(\mathbf{x}, \mathbf{x}')$, written $f(\mathbf{x}) \sim \mathcal{GP}(m(\mathbf{x}), k(\mathbf{x}, \mathbf{x}'))$. In BO, we typically assume a zero-mean prior and use stationary kernels~\citep{genton2001classes, scholkopf2002learning} such as the Squared Exponential (SE):
$
	k(\mathbf{x}, \mathbf{x}') = \sigma_f^2 \exp\left(-\frac{\|\mathbf{x} - \mathbf{x}'\|^2}{2\ell^2}\right),
$
where $\sigma_f^2$ is the signal variance and $\ell$ is the length scale controlling smoothness or Mat\'{e}rn~\citep{matern1960spatial}. Given observations $\mathcal{D}_n = \{(\mathbf{x}_i, y_i)\}_{i=1}^n$ with $y_i = f(\mathbf{x}_i) + \epsilon_i$ and $\epsilon_i \sim \mathcal{N}(0, \sigma_n^2)$, the posterior at a test point $\mathbf{x}_*$ is Gaussian: $p(f(\mathbf{x}_*) \mid \mathcal{D}_n) = \mathcal{N}(\mu_n(\mathbf{x}_*), \sigma_n^2(\mathbf{x}_*))$, with
$
	\mu_n(\mathbf{x}_*) = \mathbf{k}_*^\top (\mathbf{K} + \sigma_n^2 \mathbf{I})^{-1} \mathbf{y}; 
	\sigma_n^2(\mathbf{x}_*) = k(\mathbf{x}_*, \mathbf{x}_*) - \mathbf{k}_*^\top (\mathbf{K} + \sigma_n^2 \mathbf{I})^{-1} \mathbf{k}_*, \notag
$ 
where $\mathbf{K}$ is the kernel matrix with $K_{ij} = k(\mathbf{x}_i, \mathbf{x}_j)$ and $\mathbf{k}_* = [k(\mathbf{x}_1, \mathbf{x}_*), \ldots, k(\mathbf{x}_n, \mathbf{x}_*)]^\top$. Computing the matrix inverse requires $\mathcal{O}(n^3)$ operations, but this is tractable in BO where $n$ is typically small. The critical limitation is that the hyperparameters $\theta = \{\ell, \sigma_f^2, \sigma_n^2\}$ are global: if the function varies rapidly in one region and slowly in another, a single GP estimates a compromise length scale that performs poorly in both.

\paragraph{Bayesian Optimization.} BO~\citep{mockus1978application, brochu2010tutorial, 
shahriari2016taking, frazier2018tutorial} seeks the global optimum of an expensive black-box function:
$
	\mathbf{x}^* = \argmax_{\mathbf{x} \in \mathcal{X}} f(\mathbf{x}), \notag
	\label{eq:bo_problem}
$
where $f: \mathcal{X} \rightarrow \mathbb{R}$ is costly to evaluate, $\mathcal{X} \subseteq \mathbb{R}^d$ is a compact domain, and observations are noisy: $y = f(\mathbf{x}) + \epsilon$, where $\epsilon\sim\mathcal{N}(0,\sigma_n^2)$. The framework proceeds iteratively: (1) fit a probabilistic surrogate (typically a GP) to observed data $\mathcal{D}_t$; (2) select the next query point by maximizing an acquisition function $\alpha(\mathbf{x})$ that balances exploration and exploitation; (3) evaluate $f$ at the selected point and update $\mathcal{D}_{t+1}$. Common acquisition functions include Expected Improvement (EI)~\citep{jones1998efficient, mockus1978application}, which quantifies expected gain over the current best $f^+ = \max_i y_i$:
$
	\mathrm{EI}(\mathbf{x}) = \mathbb{E}[\max(0, f(\mathbf{x}) - f^+)] = \sigma(\mathbf{x})\left[\gamma\Phi(\gamma) + \phi(\gamma)\right],\notag
	\label{eq:ei}
$
where $\gamma = (\mu(\mathbf{x}) - f^+)/\sigma(\mathbf{x})$ and $\Phi, \phi$ denote the standard normal CDF and PDF. Upper Confidence Bound (UCB)~\citep{srinivas2012gaussian, auer2002finite} selects optimistically:
$
	\mathrm{UCB}(\mathbf{x}) = \mu(\mathbf{x}) + \beta_t^{1/2} \sigma(\mathbf{x}),
	\label{eq:ucb}
$
where $\beta_t$ is a theoretically-guided exploration parameter. Thompson Sampling~\citep{thompson1933likelihood, russo2018tutorial} draws a function $\tilde{f} \sim p(f \mid \mathcal{D}_t)$ from the posterior and optimizes it directly.

\paragraph{Dirichlet Process and Mixture Models} The Dirichlet Process (DP)~\citep{ferguson1973bayesian, antoniak1974mixtures}  serves as the cornerstone of Bayesian nonparametric modeling, providing a distribution over probability measures with support on an infinite sample space. It is rigorously defined via its finite-dimensional marginals.

\begin{definition}[Dirichlet Process]
	\label{def:dp}
	Let $(\Omega,\mathcal{F})$ be a measurable space and $G_0$ a probability measure on $\Omega$. A random probability measure $G$ is distributed according to a Dirichlet process with concentration parameter $\alpha>0$ and base measure $G_0$, denoted $G \sim \mathrm{DP}(\alpha,G_0)$, if for every finite measurable partition $(A_1,\dots,A_K)$ of $\Omega$, the vector of random probabilities follows a Dirichlet distribution:
	$
		\big(G(A_1),\dots,G(A_K)\big)\ \sim\ \mathrm{Dir}\big(\alpha G_0(A_1),\dots,\alpha G_0(A_K)\big).\notag
	$
	The parameter $\alpha$ governs the variance of the process; as $\alpha \to \infty$, $G$ converges weakly to $G_0$.
\end{definition}

While Definition \ref{def:dp} establishes the existence of the process, it does not offer a direct method for sampling. The \textit{stick-breaking construction} provides an explicit generative representation, proving that realizations of a DP are almost surely discrete.

\begin{theorem}[Sethuraman's Stick-Breaking Construction~\citep{sethuraman1994constructive}]
	\label{thm:stick_breaking}
	A random measure $G \sim \mathrm{DP}(\alpha, G_0)$ admits the almost sure representation:
	\begin{equation}
		G = \sum_{k=1}^\infty \pi_k \delta_{\theta_k},
		\label{eq:stick_breaking} \notag
	\end{equation}
	where the atoms $\theta_k \stackrel{i.i.d.}{\sim} G_0$ and the weights $\{\pi_k\}$ are generated via:
	$ \beta_k \sim \mathrm{Beta}(1, \alpha); \pi_k = \beta_k \prod_{j=1}^{k-1}(1-\beta_j). $
\end{theorem}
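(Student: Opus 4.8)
The plan is to characterize the law of $G \sim \mathrm{DP}(\alpha, G_0)$ by a distributional self-similarity (fixed-point) equation, to verify that the stick-breaking measure solves the same equation, and then to show the equation has a unique solution in law, so that the two objects must coincide.

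First I would establish the structural identity that a Dirichlet process is, in distribution, a single random atom plus a rescaled independent copy of itself:
\[
 G \stackrel{d}{=} \beta\,\delta_\theta + (1-\beta)\,G', \qquad \beta \sim \mathrm{Beta}(1,\alpha),\ \theta \sim G_0,\ G' \sim \mathrm{DP}(\alpha,G_0),
\]
with $\beta,\theta,G'$ mutually independent. To prove it I would test both sides on an arbitrary finite measurable partition $(A_1,\dots,A_m)$ using Definition~\ref{def:dp}: the left side gives $(G(A_1),\dots,G(A_m)) \sim \mathrm{Dir}(\alpha G_0(A_1),\dots,\alpha G_0(A_m))$, while on the right the atom $\theta$ falls in a single block $A_J$ with $\Pr(J=i)=G_0(A_i)$. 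It then suffices to show that $\beta\, e_J + (1-\beta)\,\mathbf{p}'$, where $\mathbf{p}' \sim \mathrm{Dir}(\alpha G_0(A_1),\dots)$ and $e_J$ is the $J$-th standard basis vector, is again $\mathrm{Dir}(\alpha G_0(A_1),\dots)$. I expect this identity to be the main obstacle, and the reason the specific weights $\mathrm{Beta}(1,\alpha)$ appear. The cleanest route is the gamma representation: write $\mathbf{p}'=(\gamma_1,\dots,\gamma_m)/\sum_i\gamma_i$ with $\gamma_i \sim \mathrm{Gamma}(\alpha G_0(A_i),1)$, realize $\beta=\gamma_0/(\gamma_0+\sum_i\gamma_i)$ for an independent $\gamma_0 \sim \mathrm{Gamma}(1,1)$, and observe that adding $\gamma_0$ to the $J$-th coordinate produces, conditional on $J=i_0$, a $\mathrm{Dir}(\dots,\alpha G_0(A_{i_0})+1,\dots)$ vector; averaging these over $\Pr(J=i_0)=G_0(A_{i_0})$ recovers $\mathrm{Dir}(\alpha G_0(A_1),\dots)$ after a short Gamma-function bookkeeping.

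Next I would turn to the stick-breaking measure itself. For well-definedness I note that the partial masses telescope, $\sum_{k=1}^K \pi_k = 1-\prod_{j=1}^K(1-\beta_j)$, and since $\EE[\log(1-\beta_j)]<0$ the product tends to $0$ almost surely, so $\{\pi_k\}$ sums to one and $G=\sum_k \pi_k \delta_{\theta_k}$ is a bona fide random probability measure. Peeling off the first stick then yields $G = \beta_1 \delta_{\theta_1} + (1-\beta_1)\,G'$ with $G' = \sum_{k\ge 2}\tilde\pi_k \delta_{\theta_k}$ and renormalized weights $\tilde\pi_k = \pi_k/(1-\beta_1)$; by the i.i.d. structure of $(\beta_k,\theta_k)$ and reindexing, $G'$ is an independent copy of $G$, independent of $(\beta_1,\theta_1)$, so the stick-breaking measure satisfies exactly the fixed-point equation above.

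Finally I would prove uniqueness in law. Coupling any two solutions with a shared sequence $(\beta_k,\theta_k)$ and unrolling the equation $n$ times gives, for both, the representation $\sum_{k=1}^n \pi_k \delta_{\theta_k} + R_n\,G^{(n)}$ with a common leading block and residual weight $R_n = \prod_{j=1}^n(1-\beta_j)$. Hence for any finite partition the two vectors of block masses differ coordinatewise by at most $R_n$, and since $\EE[R_n]=(\alpha/(1+\alpha))^n \to 0$ the coupling makes the two laws arbitrarily close; as the finite-dimensional marginals determine the law of a random measure, the two laws coincide. Because both the Dirichlet process and the stick-breaking measure solve the fixed-point equation, they share the same law, which gives the claimed almost-sure representation.
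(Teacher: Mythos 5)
The paper does not prove this statement: it is quoted as a classical result and attributed to Sethuraman (1994), so there is no in-paper argument to compare against. Judged on its own, your proposal is a correct and essentially complete reconstruction of the standard proof. The three pillars are all present and sound: (i) the self-similarity identity $G \stackrel{d}{=} \beta\,\delta_\theta + (1-\beta)G'$, which you reduce on finite partitions to the Dirichlet mixture identity $\mathrm{Dir}(\mathbf{a}) = \sum_i \frac{a_i}{\lvert \mathbf{a}\rvert}\,\mathrm{Dir}(\mathbf{a}+e_i)$ and verify via the gamma representation (the Gamma-function bookkeeping does close, and your realization of $\beta$ as $\gamma_0/(\gamma_0+S)$ correctly preserves independence from $\mathbf{p}'$ because the normalized gamma vector is independent of its sum); (ii) well-definedness of the stick-breaking measure from $\mathbb{E}[\log(1-\beta_j)] = -1/\alpha < 0$, plus the peel-off step showing it solves the same equation; and (iii) uniqueness in law via coupling with shared sticks and residual mass $R_n$ with $\mathbb{E}[R_n] = (\alpha/(1+\alpha))^n \to 0$. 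Two details worth a sentence each in a full write-up: in the uniqueness step you should construct a fresh coupling for each $n$ (or note that consistency of the residuals across $n$ is not needed, since the laws being compared do not depend on $n$), and you should state explicitly that the law of a random probability measure is determined by its finite-dimensional distributions over measurable partitions, which is what licenses the final conclusion. Neither is a gap in the mathematics, only in the exposition.
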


This construction elucidates the clustering property of the DP: since $G$ is discrete, multiple observations $\theta_i \sim G$ will share identical values with non-zero probability. The number of unique values (clusters), denoted $K_n$, grows logarithmically with the dataset size $n$, allowing model complexity to adapt automatically to the data.

\begin{theorem}[Expected Number of Clusters]
	\label{thm:expected_clusters}
	For a sample of size $n$, the expected number of distinct clusters is
	\begin{equation}
		\mathbb{E}[K_n \mid \alpha] = \sum_{i=1}^n \frac{\alpha}{i-1+\alpha}, \notag
	\end{equation}
	and admits the asymptotic expansion $\mathbb{E}[K_n \mid \alpha] = \alpha \log(1 + n/\alpha) + O(1)$ as $n \to \infty$.
\end{theorem}

While the stick-breaking view describes the conditional distribution of observations given $G$, the \textit{Chinese Restaurant Process} (CRP) describes the marginal distribution of cluster assignments obtained by integrating out $G$.


\begin{definition}[Chinese Restaurant Process]
	\label{def:crp}
	Given assignments $z_{1:n-1}$, for each existing cluster $k\in\{1,\dots,K_{n-1}\}$,
	\begin{equation}
		p(z_n=k\mid z_{1:n-1},\alpha)=\frac{n_k}{n-1+\alpha}, \notag
	\end{equation}
	and the probability of creating a new cluster is
	\begin{equation}
		p(z_n=K_{n-1}+1\mid z_{1:n-1},\alpha)=\frac{\alpha}{n-1+\alpha}, \notag
		\label{eq:crp}
	\end{equation}
	where $n_k=\#\{i<n: z_i=k\}$.
\end{definition}

The CRP exhibits a rich-get-richer property~\citep{pitman1997two}: popular clusters attract more members, inducing power-law cluster sizes. Combining these elements, a Dirichlet Process Mixture Model (DPMM)~\citep{escobar1995bayesian, neal2000markov, mclachlan2000finite}  places a DP prior over mixture components: $G \sim \mathrm{DP}(\alpha, G_0)$, $\theta_i \sim G$, and $x_i \sim F(\theta_i)$ for some parametric family $F$. The resulting model has infinitely many potential components but instantiates only finitely many for any finite dataset.

\section{Probabilistic Surrogate for \ours}
\label{sec:method}

\subsection{Generative Process of DPMM}
\label{sec:generative_model}
The Dirichlet Process Mixture of Gaussian Processes (DPMM-GP) models the objective as a countable mixture of independent GPs, partitioning the search space into latent ``regimes'' that adapt to non-stationarity and heteroscedasticity. Let $\alpha > 0$ denote the concentration parameter and $G_0(\theta)$ the base measure over kernel hyperparameters. The generative process first constructs mixture weights via stick-breaking: $\beta_k \sim \mathrm{Beta}(1, \alpha)$ with $\pi_k = \beta_k \prod_{j<k}(1-\beta_j)$. Each regime $k$ is then assigned hyperparameters $\theta_k = \{\sigma^2_{f,k}, \ell_k, \sigma^2_{n,k}\} \sim G_0$. For each observation $i$, we draw a latent assignment $z_i \sim \mathrm{Categorical}(\{\pi_k\}_{k=1}^\infty)$, then generate the observation from the corresponding GP: $f_k \sim \mathcal{GP}(0, k_{\theta_k})$ and $y_i \mid z_i = k \sim \mathcal{N}(f_k(\mathbf{x}_i), \sigma^2_{n,k})$.

%

This hierarchical structure allows distinct components to capture qualitatively different local characteristics. The signal variance $\sigma^2_{f,k}$ governs the output amplitude, the length scale $\ell_k$ dictates the function's local smoothness, and $\sigma^2_{n,k}$ captures the local observation noise variance within regime $k$. The base distribution $G_0$ is factorized as a product of independent Inverse-Gamma priors~\citep{gelman2013bayesian}, which provide \emph{weakly-informative}, positively-supported priors with finite moments:
$
	\sigma^2_{f,k} \sim \mathrm{InvGamma}(a_f, b_f);
	\ell_k \sim \mathrm{InvGamma}(a_\ell, b_\ell);
	\sigma^2_{n,k} \sim \mathrm{InvGamma}(a_n, b_n).
$
Note that while Inverse-Gamma is conjugate to a Gaussian variance in a direct observation model, it is \emph{not} fully conjugate to the GP marginal likelihood $\mathcal{N}(\mathbf{0}, \mathbf{K}_k + \sigma^2_{n,k}\mathbf{I})$. Posterior inference over $\theta_k$ is therefore carried out via gradient-based optimization (Adam) or Metropolis-Hastings rather than closed-form updates; the Inverse-Gamma family is chosen for its numerical stability and well-defined moments, not for analytical conjugacy. The hyperparameters for these priors are calibrated empirically based on the data range. We set the shape parameters $a_f = a_\ell = a_n = 2$ to ensure finite means, while the scale parameters $b_f, b_\ell, b_n$ are adjusted according to the empirical variance and input domain bounds.

\paragraph{Joint Distribution and Marginal Likelihood} We now formalize the generative structure and derive the marginal likelihood required for inference. The joint probability density of the DPMM-GP decomposes into the nonparametric prior over the mixture components and the conditional likelihood of the observations.

\begin{theorem}[DPMM-GP Joint Distribution]
	\label{thm:joint}
	Given the hyperparameters $\{\alpha, G_0\}$ and input data $\mathbf{X}$, the joint distribution over the latent variables and observations factorizes as:
	\begin{multline}
		p(\mathbf{y}, \mathbf{z}, \Theta, \mathbf{f} \mid \mathbf{X}) = 
		\left[ \prod_{k=1}^\infty p(\beta_k \mid \alpha) p(\theta_k \mid G_0) p(f_k \mid \theta_k) \right] \\
		\times \prod_{i=1}^n \underbrace{p(z_i \mid \boldsymbol{\beta})}_{\pi_{z_i}} \underbrace{p(y_i \mid f_{z_i}(\mathbf{x}_i), \theta_{z_i})}_{\mathcal{N}(y_i \mid \dots)}
		\label{eq:joint}
	\end{multline}
	where $\Theta = \{\theta_k, \beta_k\}_{k=1}^\infty$ represents the global parameters.
\end{theorem}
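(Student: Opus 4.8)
The plan is to recognize the DPMM-GP as a directed graphical model and read off the joint factorization from the conditional-independence structure encoded in the generative process of \Cref{sec:generative_model}. Expanding $p(\mathbf{y},\mathbf{z},\Theta,\mathbf{f}\mid\mathbf{X})$ by the chain rule over the global variables $\{\beta_k,\theta_k,f_k\}_{k\ge 1}$ and the per-observation variables $\{z_i,y_i\}_{i=1}^n$, I would exploit three structural facts: (i) the stick lengths $\beta_k$ are drawn i.i.d.\ from $\mathrm{Beta}(1,\alpha)$ and so factor across $k$; (ii) the hyperparameters $\theta_k\sim G_0$ are drawn independently across components, and each latent function $f_k\mid\theta_k\sim\mathcal{GP}(0,k_{\theta_k})$ depends only on its own $\theta_k$; and (iii) conditioned on the weights $\boldsymbol{\beta}$ (equivalently $\boldsymbol{\pi}$) and the component functions, the assignments $z_i$ and observations $y_i$ are conditionally independent across $i$.

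First I would split $p(\mathbf{y},\mathbf{z},\Theta,\mathbf{f}\mid\mathbf{X})$ into the prior $p(\Theta,\mathbf{f})$ over global quantities and the conditional $p(\mathbf{y},\mathbf{z}\mid\Theta,\mathbf{f},\mathbf{X})$. The prior splits across components because $\beta_k$, $\theta_k$, and $f_k$ are mutually independent given the hyperparameters, yielding $\prod_k p(\beta_k\mid\alpha)\,p(\theta_k\mid G_0)\,p(f_k\mid\theta_k)$, which is the first bracket of \eqref{eq:joint}. For the conditional term I would use that $z_i$ depends on the global parameters only through the stick-breaking map $\boldsymbol{\beta}\mapsto\boldsymbol{\pi}$, so $p(z_i\mid\boldsymbol{\beta})=\pi_{z_i}$, and that the likelihood of $y_i$ factors as $\mathcal{N}(y_i\mid f_{z_i}(\mathbf{x}_i),\sigma^2_{n,z_i})$, depending only on the single selected function value $f_{z_i}(\mathbf{x}_i)$ and the selected noise level $\sigma^2_{n,z_i}\in\theta_{z_i}$. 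Collecting the per-$i$ factors produces the second bracket.

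The one point requiring care is the infinite product over $k$: the prior $\prod_{k=1}^\infty p(\beta_k\mid\alpha)\,p(\theta_k\mid G_0)\,p(f_k\mid\theta_k)$ lives on an infinite-dimensional space, so I must argue it is a genuine probability measure rather than a formal symbol. Here I would appeal to \Cref{thm:stick_breaking}: Sethuraman's construction guarantees $\sum_k\pi_k=1$ almost surely, so the weights $\boldsymbol{\pi}$—and hence each $\mathrm{Categorical}(\{\pi_k\})$ draw—are well-defined, while the countable collection of independent priors on $(\beta_k,\theta_k,f_k)$ determines a consistent product measure via Kolmogorov's extension theorem. The likelihood, by contrast, touches only the finitely many components occupied by the $n$ observations, since $z_{1:n}$ selects at most $n$ distinct indices; thus every factor of \eqref{eq:joint} that multiplies the data is a finite product. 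I expect this measure-theoretic bookkeeping, rather than the factorization itself, to be the only genuine obstacle—the factorization is an immediate consequence of the generative DAG once the infinite prior is placed on firm footing.
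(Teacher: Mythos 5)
Your proposal is correct, and it follows the only natural route: the paper itself states Theorem~\ref{thm:joint} without proof, treating the factorization as an immediate transcription of the generative process in Section~\ref{sec:generative_model} into a product of conditionals over the DAG, which is precisely the core of your argument. Your additional care about the infinite product over $k$ --- invoking Sethuraman's construction for $\sum_k \pi_k = 1$ a.s.\ and Kolmogorov extension for the countable product prior, while observing that the likelihood only ever touches the finitely many occupied components --- goes beyond what the paper records and correctly identifies the one point where the formal density notation needs justification.
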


A structural advantage of this model is the analytic tractability of the Gaussian Process. Since the GP prior is conjugate to the Gaussian likelihood, we can analytically integrate out the latent function values $\mathbf{f}$ to obtain a closed-form marginal likelihood for each cluster.

\begin{proposition}[Cluster Marginal Likelihood]
	\label{prop:marginal}
	Let $\mathbf{y}_k$ and $\mathbf{X}_k$ denote the subset of observations assigned to regime $k$. The marginal likelihood for this cluster, conditioned on hyperparameters $\theta_k$, is:
	\begin{equation}
		p(\mathbf{y}_k \mid \mathbf{X}_k, \theta_k) = \mathcal{N}(\mathbf{y}_k \mid \mathbf{0}, \mathbf{K}_k + \sigma^2_{n,k}\mathbf{I}),
		\label{eq:cluster_marginal}
	\end{equation}
	where $[\mathbf{K}_k]_{ij} = k_{\theta_k}(\mathbf{x}_i, \mathbf{x}_j)$ is the kernel matrix evaluated on $\mathbf{X}_k$.
\end{proposition}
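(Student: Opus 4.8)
The plan is to exploit the conjugacy between the Gaussian Process prior and the Gaussian observation noise, reducing the claim to the elementary fact that a sum of independent Gaussian vectors is again Gaussian. First I would restrict attention to the finite index set of observations assigned to regime $k$, i.e. the indices $i$ with $z_i = k$. By the defining finite-dimensional marginalization property of a GP, the latent function values $\mathbf{f}_k = [f_k(\mathbf{x}_i)]_{i:\,z_i=k}$ evaluated at the inputs $\mathbf{X}_k$ are jointly Gaussian with zero mean and covariance equal to the kernel Gram matrix: $\mathbf{f}_k \sim \mathcal{N}(\mathbf{0}, \mathbf{K}_k)$, where $[\mathbf{K}_k]_{ij} = k_{\theta_k}(\mathbf{x}_i, \mathbf{x}_j)$. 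This step invokes only the generative assumption $f_k \sim \mathcal{GP}(0, k_{\theta_k})$ and requires no computation beyond reading off the finite-dimensional law of the process.

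Next I would write the cluster's observation model in vector form. Since each $y_i$ with $z_i = k$ is generated as $y_i = f_k(\mathbf{x}_i) + \epsilon_i$ with $\epsilon_i \sim \mathcal{N}(0, \sigma^2_{n,k})$ drawn independently across $i$ and independently of $f_k$, stacking the observations yields $\mathbf{y}_k = \mathbf{f}_k + \boldsymbol{\epsilon}$ with $\boldsymbol{\epsilon} \sim \mathcal{N}(\mathbf{0}, \sigma^2_{n,k}\mathbf{I})$ independent of $\mathbf{f}_k$. Equivalently, the conditional likelihood is $\mathbf{y}_k \mid \mathbf{f}_k \sim \mathcal{N}(\mathbf{f}_k, \sigma^2_{n,k}\mathbf{I})$.

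The marginal likelihood follows by integrating out the latent values, $p(\mathbf{y}_k \mid \mathbf{X}_k, \theta_k) = \int \mathcal{N}(\mathbf{y}_k \mid \mathbf{f}_k, \sigma^2_{n,k}\mathbf{I})\, \mathcal{N}(\mathbf{f}_k \mid \mathbf{0}, \mathbf{K}_k)\, d\mathbf{f}_k$. I would evaluate this either through the standard linear-Gaussian marginalization lemma or, more directly, by observing that $\mathbf{y}_k$ is a sum of two independent zero-mean Gaussian vectors and is therefore itself Gaussian. Its mean is $\mathbb{E}[\mathbf{f}_k] + \mathbb{E}[\boldsymbol{\epsilon}] = \mathbf{0}$, and by independence its covariance is the sum $\mathrm{Cov}(\mathbf{f}_k) + \mathrm{Cov}(\boldsymbol{\epsilon}) = \mathbf{K}_k + \sigma^2_{n,k}\mathbf{I}$, giving exactly $\mathcal{N}(\mathbf{y}_k \mid \mathbf{0}, \mathbf{K}_k + \sigma^2_{n,k}\mathbf{I})$.

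This argument is entirely routine conjugacy, so there is no substantive obstacle; the only two points requiring a word of care are (i) invoking the correct finite-dimensional GP marginal rather than the full infinite-dimensional process, and (ii) the independence of the noise from the latent function, which is precisely what guarantees the covariances add with no cross-term. If a fully explicit verification of the Gaussian convolution is wanted, one can complete the square in the exponent of the integrand and recognize the normalizing constant of the resulting Gaussian in $\mathbf{f}_k$, but the additive-noise representation renders this computation unnecessary.
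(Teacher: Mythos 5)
Your proposal is correct and follows essentially the same route as the paper's own proof: both write the marginal as the convolution $\int \mathcal{N}(\mathbf{y}_k \mid \mathbf{f}_k, \sigma^2_{n,k}\mathbf{I})\,\mathcal{N}(\mathbf{f}_k \mid \mathbf{0}, \mathbf{K}_k)\,d\mathbf{f}_k$ and conclude via the additive-independent-Gaussians identity that the covariances sum to $\mathbf{K}_k + \sigma^2_{n,k}\mathbf{I}$. Your version merely spells out the finite-dimensional GP marginalization and the independence of the noise more explicitly than the paper does.
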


\begin{proof}
	The result follows from the standard convolution of Gaussians. The marginalization of the latent function $f_k$ is defined as:
	\begin{equation}
		\int \mathcal{N}(\mathbf{y}_k \mid \mathbf{f}_k, \sigma^2_{n,k}\mathbf{I}) \, \mathcal{N}(\mathbf{f}_k \mid \mathbf{0}, \mathbf{K}_k) \, d\mathbf{f}_k.
	\end{equation}
	Using standard Gaussian identities, the sum of independent Gaussian variables (signal $f_k$ plus noise $\varepsilon$) yields a Gaussian with covariance $\mathbf{K}_k + \sigma^2_{n,k}\mathbf{I}$. 
\end{proof}

This closed-form marginalization reduces the inference problem to sampling only the discrete assignments $\mathbf{z}$ and hyperparameters $\Theta$, significantly improving MCMC mixing rates.

\subsection{Posterior Inference via Collapsed Gibbs Sampling}
\label{sec:inference}

Direct optimization of the DPMM-GP objective is intractable due to the discrete nature of the regime assignments and the trans-dimensional parameter space (the number of regimes $K$ is not fixed). Furthermore, standard Gibbs sampling schemes~\citep{neal2000markov} that instantiate the latent function values $\mathbf{f}$ suffer from severe autocorrelation, as the strong coupling between $\mathbf{f}$ and $\mathbf{z}$ inhibits mixing~\citep{liu1994collapsed, neal2000markov, ishwaran2001gibbs}. By analytically marginalizing out $\mathbf{f}$ (Theorem \ref{thm:joint}), we reduce the state space to only the assignments $\mathbf{z}$ and hyperparameters $\Theta$. This ``collapsed'' scheme significantly improves mixing efficiency while retaining the ability to explore the multimodal posterior distribution of the partition structure.

\paragraph{Sampling Regime Assignments}
For each observation $i$, we sample a new assignment $z_i$ conditioned on all other variables. The conditional probability of assigning observation $i$ to regime $k$ is proportional to the product of the CRP prior and the conditional likelihood:
\begin{multline}
	p(z_i = k \mid \mathbf{z}_{-i}, \mathcal{D}) \propto \\
	\underbrace{p(z_i = k \mid \mathbf{z}_{-i}, \alpha)}_{\text{CRP Prior}} \times \underbrace{p(y_i \mid \mathbf{x}_i, \mathcal{D}_{k,-i}, \theta_k)}_{\text{Likelihood}}.
	\label{eq:z_conditional}
\end{multline}
Depending on the regime index $k$, this probability takes two forms:

\begin{enumerate}[leftmargin=*, itemsep=0pt, topsep=0pt]
	\item \textbf{Existing Regime ($k \le K$):} The prior probability is proportional to $n_{k,-i}$, the count of current members excluding $i$. The likelihood term is the GP posterior predictive density conditioned on the existing data in regime $k$:
	\begin{equation}
		p(y_i \mid \dots) = \mathcal{N}(y_i \mid \mu_{i|k}, \sigma^2_{i|k}),
	\end{equation}
	where $\mu_{i|k}$ and $\sigma^2_{i|k}$ are the standard GP predictive mean and variance given $\mathcal{D}_{k,-i}$.
	
	\item \textbf{New Regime ($k = K+1$):} The prior probability is proportional to $\alpha$. The likelihood is the marginal probability under $G_0$. We approximate this via Monte Carlo integration:
	\begin{equation}
		p(y_i \mid \mathbf{x}_i, G_0) \approx \frac{1}{M} \sum_{m=1}^M \mathcal{N}(y_i \mid 0, \sigma^{2(m)}_{tot}),
	\end{equation}
	where $\sigma^{2(m)}_{tot} = k_{\theta^{(m)}}(\mathbf{x}_i, \mathbf{x}_i) + \sigma^2_{n^{(m)}}$ represents the total variance for sample $\theta^{(m)} \sim G_0$.
	
\end{enumerate}

\paragraph{Updating Hyperparameters}
Given the assignments $\mathbf{z}$, the hyperparameters for each active regime are independent. We update $\theta_k$ by targeting the posterior $p(\theta_k \mid \mathcal{D}_k) \propto p(\mathbf{y}_k \mid \theta_k) p(\theta_k \mid G_0)$. In our experiments, we adopt an empirical Bayes approach, maximizing the log-marginal likelihood w.r.t.\ $\theta_k$ using Adam~\citep{kingma2014adam} for computational efficiency. Alternatively, a fully Bayesian treatment can be achieved via Metropolis-Hastings~\citep{hastings1970monte} steps with log-normal proposals, accepting updates based on the marginal likelihood ratio. We defer the complete inference procedure to Algorithm~\ref{alg:gibbs} in the Appendix~\ref{app:gibbs}.

\subsection{Posterior Predictive Distribution}
\label{sec:predictive}

Given a posterior sample of regime assignments $\mathbf{z}$ and hyperparameters $\Theta$ from the collapsed Gibbs sampler, the predictive distribution at a new test point $\mathbf{x}_*$ is obtained by marginalizing over the latent assignment $z_*$. We first state the \emph{exact} predictive that follows directly from the generative model, then introduce a \emph{predictive modeling choice} that restores spatial relevance for multi-regime landscapes.

\begin{theorem}[Exact DPMM-GP Posterior Predictive]
	\label{thm:predictive}
	Conditional on a posterior sample $(\mathbf{z},\Theta)$, the predictive density for a test input $\mathbf{x}_*$ marginalized over its latent assignment is
	\begin{equation}
		p(y_* \mid \mathbf{x}_*, \mathbf{z}, \Theta, \mathcal{D}) = \sum_{k=1}^{K+1} \pi_k \cdot \mathcal{N}(y_* \mid \mu_{*,k}, \sigma^2_{*,k}),
		\label{eq:mixture_predictive}
	\end{equation}
	where the predictive weights are the \emph{input-independent} CRP marginals
	\begin{equation}
		\pi_k = \tfrac{n_k}{n+\alpha} \;\; (k \le K), \qquad \pi_{K+1} = \tfrac{\alpha}{n+\alpha},
		\label{eq:crp_weights}
	\end{equation}
	and $(\mu_{*,k},\sigma^2_{*,k})$ are the standard GP posterior mean and variance under regime $k$; for $k=K+1$, the component is the prior predictive under $G_0$, approximated by Monte Carlo. The full derivation is provided in Appendix~\ref{app:proof_predictive}.
\end{theorem}

\paragraph{Why a Spatial Modulation Is Needed.}
Theorem~\ref{thm:predictive} aggregates regimes by their \emph{global} popularity $\pi_k$, independent of where $\mathbf{x}_*$ falls. For multi-regime scientific landscapes—where each regime is locally coherent but spatially confined (e.g., distinct molecular scaffolds, magnetic topologies, or torsional basins)—this aggregation averages over regimes that are irrelevant at $\mathbf{x}_*$, producing under-confident predictions whenever regimes occupy disjoint spatial supports. The operational question at prediction time is \emph{which regime is relevant at this test point, and how much should it contribute?}---a distinction the CRP marginals alone cannot make. We therefore adopt the following predictive modeling choice:

\begin{proposition}[Spatially-Modulated Predictive Weights]
	\label{prop:spatial_weights}
	Replacing the CRP weights $\pi_k$ in Eq.~\eqref{eq:mixture_predictive} with
	\begin{equation}
		w_k(\mathbf{x}_*) \;\propto\; \pi_k \cdot \sigma^{-1}_{*,k}(\mathbf{x}_*) \;=\; \frac{n_k}{n + \alpha} \cdot \exp\!\left(-\tfrac{1}{2}\log \sigma^2_{*,k}(\mathbf{x}_*)\right)
		\label{eq:regime_weights}
	\end{equation}
	yields a predictive distribution that admits two independent justifications: (i) it is proportional to the \emph{expected posterior responsibility} $\mathbb{E}_{y_*}\!\left[n_k \cdot \mathcal{N}(y_* \mid \mu_{*,k},\sigma^2_{*,k})\right]$ evaluated under each component's own predictive at $\mathbf{x}_*$; and (ii) it corresponds to a Jeffreys' (scale-invariant) reference measure for aggregation across components with heterogeneous predictive scales $\sigma_{*,k}$. The construction introduces \emph{no additional learnable parameters}---the spatial dependence arises directly from the GP posterior scales. The derivation appears in Appendix~\ref{app:proof_predictive}.
\end{proposition}

The first factor $\frac{n_k}{n+\alpha}$ retains the ``rich-get-richer'' property of the Dirichlet Process, while $\sigma^{-1}_{*,k}(\mathbf{x}_*)$ acts as a spatial confidence weight, suppressing regimes that are uncertain at $\mathbf{x}_*$. Unlike input-dependent gating networks~\citep{rasmussen2002infinite}, Eq.~\eqref{eq:regime_weights} introduces no new parameters: the spatial modulation is induced by the GP posterior itself. We use these weights $w_k(\mathbf{x})$ in all downstream predictive and acquisition computations (Sections~\ref{sec:acquisition} and \ref{app:acquisition}).

\begin{theorem}[Moment Matching]
	\label{thm:mixture_stats}
	The mean $\mu_{\text{mix}}$ and variance $\sigma^2_{\text{mix}}$ of the predictive mixture are:
	\begin{align}
		\mu_{\text{mix}}(\mathbf{x}_*) &= \sum_{k=1}^{K+1} w_k(\mathbf{x}_*) \mu_{*,k}, \label{eq:mixture_mean} \\
		\sigma^2_{\text{mix}}(\mathbf{x}_*) &= \sum_{k=1}^{K+1} w_k(\mathbf{x}_*) \left[\sigma^2_{*,k} + \mu^2_{*,k}\right] - \mu^2_{\text{mix}}. \label{eq:mixture_var}
	\end{align}
\end{theorem}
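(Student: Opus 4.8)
The plan is to recognize the statement as the standard first- and second-moment formulas for a finite mixture distribution, specialized to the Gaussian mixture of \Cref{eq:mixture_predictive}, and to obtain both equations by direct integration against the mixture density. Throughout I would treat the gating weights $w_k(\mathbf{x}_*)$ as normalized, $\sum_{k=1}^{K+1} w_k(\mathbf{x}_*) = 1$, which is the implicit convention behind the proportionality in \Cref{eq:regime_weights} (completed by a normalization constant). This normalization is what makes the mixture a bona fide density and is invoked at every step below.

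First I would establish the mean. By linearity of the integral and the fact that each Gaussian component integrates to its own mean,
\[
	\mu_{\text{mix}}(\mathbf{x}_*) = \EE[y_*] = \int y_*\, p(y_*\mid \mathbf{x}_*,\mathcal{D})\,\dif y_* = \sum_{k=1}^{K+1} w_k(\mathbf{x}_*)\int y_*\,\mathcal{N}(y_*\mid \mu_{*,k},\sigma^2_{*,k})\,\dif y_* = \sum_{k=1}^{K+1} w_k(\mathbf{x}_*)\,\mu_{*,k},
\]
which is exactly \Cref{eq:mixture_mean}; the interchange of sum and integral is justified because the mixture has only $K+1$ terms. Next I would compute the variance through the raw second moment, $\sigma^2_{\text{mix}} = \EE[y_*^2] - \mu_{\text{mix}}^2$. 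Using the elementary Gaussian identity $\EE_k[y_*^2] = \sigma^2_{*,k} + \mu^2_{*,k}$ for each component and the same linearity gives $\EE[y_*^2] = \sum_k w_k(\sigma^2_{*,k}+\mu^2_{*,k})$, and subtracting $\mu_{\text{mix}}^2$ yields \Cref{eq:mixture_var} directly.

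The only point requiring genuine care — and the closest thing to an obstacle, though it is algebraic rather than conceptual — is confirming that this ``moment form'' coincides with the more interpretable law-of-total-variance decomposition that justifies the theorem's role in the paper. Introducing the latent assignment $z_*$ with $p(z_*=k)=w_k(\mathbf{x}_*)$, the law of total variance reads
\[
	\VV[y_*] = \EE_{z_*}\!\big[\VV(y_*\mid z_*)\big] + \VV_{z_*}\!\big[\EE(y_*\mid z_*)\big] = \sum_{k=1}^{K+1} w_k\,\sigma^2_{*,k} + \sum_{k=1}^{K+1} w_k\,(\mu_{*,k}-\mu_{\text{mix}})^2.
\]
Expanding the squared term and applying $\sum_k w_k = 1$ together with $\sum_k w_k \mu_{*,k} = \mu_{\text{mix}}$ collapses the second sum to $\sum_k w_k \mu^2_{*,k} - \mu^2_{\text{mix}}$, reproducing \Cref{eq:mixture_var}. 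I would present this equivalence as the substantive content of the result, since it exhibits the two uncertainty sources the paper emphasizes: the first sum is the weight-averaged \emph{intra-regime} (within-GP) variance, while the second is the \emph{inter-regime} disagreement among the component means. Thus the bare integration proves the formula, but the total-variance rewriting is what makes it meaningful for the acquisition design.
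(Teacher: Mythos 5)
Your proof is correct and, in substance, matches the paper's: the paper proves both identities via the laws of total expectation and total variance conditioned on the latent assignment $z_*$, which is precisely the decomposition you carry out in your final paragraph (your initial direct-integration derivation of the raw second moment is just an equivalent rephrasing for a finite Gaussian mixture). Your explicit remark that the gating weights must be normalized so the mixture is a proper density is a sensible point the paper leaves implicit, but it does not change the argument.
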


\begin{proof}
	We apply the laws of total expectation and variance. Let $Z$ be the indicator variable for the regime assignment.
	\begin{enumerate}
		\item  $\mathbb{E}[y_*] = \mathbb{E}_Z[\mathbb{E}[y_* \mid Z]] = \sum_k w_k \mu_{*,k}$.
		\item  $\mathbb{V}\text{ar}[y_*] = \mathbb{E}_Z[\mathbb{V}\text{ar}[y_* \mid Z]] + \mathbb{V}\text{ar}_Z[\mathbb{E}[y_* \mid Z]]$.
	\end{enumerate}
	Substituting the component moments:
	\begin{equation}
		\begin{aligned}
			\sigma^2_{\text{mix}} &= \sum_k w_k \sigma^2_{*,k} + \left( \sum_k w_k \mu^2_{*,k} - \left(\sum_k w_k \mu_{*,k}\right)^2 \right) \\
			&= \sum_k w_k (\sigma^2_{*,k} + \mu^2_{*,k}) - \mu^2_{\text{mix}}.
		\end{aligned}
	\end{equation}
\end{proof}

This variance decomposition highlights two distinct sources of uncertainty. The first term, $\sum w_k \sigma^2_{*,k}$, represents the \textbf{intra-regime uncertainty} (average GP variance). The second term, $\mathbb{V}\text{ar}_Z[\mu]$, captures the \textbf{inter-regime disagreement} (variance of the means). This ensures robust uncertainty quantification: even if individual regimes are confident, the model reports high overall uncertainty if the regimes disagree on the prediction.

\section{Acquisition Functions}
\label{sec:acquisition}

To guide the optimization process, we must map the posterior predictive distribution to a scalar utility value. We prioritize Expected Improvement (EI) due to its analytic tractability and robustness to the non-stationary scaling inherent in our mixture model.

\paragraph{Mixture Expected Improvement}
In the DPMM-GP framework, different latent regimes often exhibit vastly different signal variances ($\sigma^2_{f,k}$). A regime modeling a ``flat'' region may have small variance, while a ``rough'' regime has large variance. Metric-based acquisition functions like UCB require a trade-off parameter $\beta_t$ that is difficult to calibrate across these heterogeneous scales.

In contrast, Expected Improvement naturally adapts to local scaling. It quantifies the expected gain over the current best observation $f^+ = \max_i y_i$, weighted by the probability of the latent regime assignment.

\begin{theorem}[DPMM-GP Expected Improvement]
	\label{thm:ei_dpmm}
	Using the spatially-modulated predictive weights $w_k(\mathbf{x})$ of Proposition~\ref{prop:spatial_weights} (Eq.~\eqref{eq:regime_weights}), the Expected Improvement at input $\mathbf{x}$ is the weight-summed EI of each constituent GP component:
	\begin{equation}
		\alpha_{\text{EI}}(\mathbf{x}) = \sum_{k=1}^{K+1} w_k(\mathbf{x}) \cdot \sigma_{*,k}(\mathbf{x}) \left[ \gamma_k \Phi(\gamma_k) + \phi(\gamma_k) \right],
		\label{eq:ei_dpmm}
	\end{equation}
	where $\gamma_k = (\mu_{*,k}(\mathbf{x}) - f^+) / \sigma_{*,k}(\mathbf{x})$ is the normalized improvement $Z$-score for regime $k$, and $\Phi, \phi$ are the standard normal CDF and PDF, respectively.
\end{theorem}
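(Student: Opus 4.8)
The plan is to recognize that the Expected Improvement is, by definition, the expectation of the improvement functional $I(y) = \max(0, y - f^+)$ taken under the posterior predictive law of $y_*$, and that this law is precisely the finite Gaussian mixture established in Theorem~\ref{thm:predictive}. Since expectation is linear and the mixture has only finitely many components with nonnegative weights, I can commute the expectation with the finite sum without any convergence concern, reducing the problem to evaluating a single-component Gaussian EI and then reassembling the weighted combination.

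First I would write
\[
	\alpha_{\text{EI}}(\mathbf{x}) = \EE_{y_* \sim p(\cdot \mid \mathbf{x}, \mathcal{D})}\big[\max(0, y_* - f^+)\big] = \int \max(0, y_* - f^+)\, \Big(\sum_k w_k(\mathbf{x})\, \N(y_* \mid \mu_{*,k}, \sigma^2_{*,k})\Big)\, dy_*,
\]
and interchange the finite sum with the integral to obtain $\alpha_{\text{EI}}(\mathbf{x}) = \sum_k w_k(\mathbf{x})\, \mathrm{EI}_k(\mathbf{x})$, where $\mathrm{EI}_k$ is the ordinary Gaussian EI under the $k$-th component.

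Second, I would compute the per-component term $\mathrm{EI}_k = \int_{f^+}^\infty (y_* - f^+)\, \N(y_* \mid \mu_{*,k}, \sigma^2_{*,k})\, dy_*$ by the standard substitution $u = (y_* - \mu_{*,k})/\sigma_{*,k}$. Using $\mu_{*,k} - f^+ = \sigma_{*,k}\gamma_k$, so that the lower limit becomes $u = -\gamma_k$, the integral splits into $\sigma_{*,k}\gamma_k \int_{-\gamma_k}^\infty \phi(u)\,du + \sigma_{*,k}\int_{-\gamma_k}^\infty u\,\phi(u)\,du$. Invoking $\int_{-\gamma_k}^\infty \phi(u)\,du = \Phi(\gamma_k)$ (Gaussian symmetry) together with the elementary antiderivative $\int u\,\phi(u)\,du = -\phi(u)$ and $\phi(-\gamma_k) = \phi(\gamma_k)$, I recover $\mathrm{EI}_k = \sigma_{*,k}\big[\gamma_k \Phi(\gamma_k) + \phi(\gamma_k)\big]$, which is exactly the bracketed single-GP EI; substituting back gives \eqref{eq:ei_dpmm}.

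There is no genuinely hard analytic step here: once the predictive is recognized as a finite mixture, the derivation is routine, and the only care needed is bookkeeping rather than analysis. I would flag two points. First, the improvement is measured against the global incumbent $f^+ = \max_i y_i$, shared across all components, so the same threshold enters every $\gamma_k$; the components differ only through their local moments $(\mu_{*,k}, \sigma_{*,k})$ and the weights $w_k$. Second, I would note the index range: the predictive mixture in Theorem~\ref{thm:predictive} runs to $K+1$ to include a prospective new regime, whereas the EI sum runs over the $K$ instantiated regimes; the new-regime component, governed by the zero-mean prior, yields negligible improvement over an established incumbent and is dropped, which I would justify in a sentence. Modulo these remarks, the weighted sum of closed-form Gaussian EIs is immediate.
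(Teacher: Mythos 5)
Your proposal is correct and follows essentially the same route as the paper: decompose the predictive into its mixture components (the paper phrases this as the law of total expectation conditioned on $z_*$, you phrase it as interchanging a finite sum with the integral, which is the same step) and then invoke the closed-form single-GP Expected Improvement for each component. You are somewhat more careful than the paper in two places — you actually derive the per-component Gaussian EI rather than citing it, and you flag the mismatch between the $K+1$ components of the predictive mixture in Theorem~\ref{thm:predictive} and the $K$-term sum in the theorem statement, a discrepancy the paper's own proof silently ignores.
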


\begin{proof}
	Let $I(\mathbf{x}) = \max(0, f(\mathbf{x}) - f^+)$ be the improvement utility. Treating the modulated weights $w_k(\mathbf{x})$ of Eq.~\eqref{eq:regime_weights} as the predictive probability assigned to $z_* = k$ at $\mathbf{x}$ and applying the Law of Total Expectation,
	\begin{align}
		\mathbb{E}[I(\mathbf{x})] &= \mathbb{E}_{z_*}\!\left[\mathbb{E}[I(\mathbf{x}) \mid z_*]\right] = \sum_{k=1}^{K+1} w_k(\mathbf{x}) \cdot \mathbb{E}_{\text{GP}_k}[I(\mathbf{x})].
	\end{align}
	Since each conditional component is Gaussian, $\mathbb{E}_{\text{GP}_k}[I(\mathbf{x})]$ reduces to the standard closed-form GP-EI formula. We emphasize that the weights here are the modulated $w_k(\mathbf{x})$ from Eq.~\eqref{eq:regime_weights}, not the input-independent CRP marginals $\pi_k$ in Eq.~\eqref{eq:crp_weights}.
\end{proof}

This formulation encourages a balanced search strategy: the acquisition value is high if a point belongs to a regime that predicts high improvement, or if there is significant ambiguity about the regime assignment itself (via weights $w_k$), necessitating exploration to resolve the structural uncertainty.

\paragraph{Adaptive Concentration Parameter Scheduling}
The concentration parameter $\alpha$ governs regime creation: larger $\alpha$ encourages more clusters, with $\mathbb{E}[K_n \mid \alpha] \approx \alpha \log(n/\alpha + 1)$~\citep{antoniak1974mixtures}. Prior work either fixes $\alpha$ or learns it via MCMC~\citep{rasmussen2002infinite}, but for sequential optimization, the appropriate $\alpha$ varies with data availability---early stages benefit from small $\alpha$ to avoid premature fragmentation when observations are sparse, while later stages permit larger $\alpha$ to discover fine-grained structure as evidence accumulates. We formalize this intuition by deriving a schedule that matches the prior's expected complexity to a polynomial regime discovery rate $\mathcal{O}(n^\beta)$. Setting $\beta = 1/2$, motivated by square-root growth laws observed in clustering and information retrieval~\citep{heaps1978information}, yields the \textbf{Log-Sqrt Schedule}:
\begin{equation}
    \alpha_t = \alpha_0 \cdot \frac{\sqrt{t}}{\log(t + e)},
    \label{eq:log_sqrt_schedule}
\end{equation}
where $\alpha_0 = 0.2$ is the base concentration. This schedule enforces early parsimony, enables progressive refinement, and balances bias-variance throughout optimization. The full derivation is provided in Appendix~\ref{sec:alpha_scheduling}.

\paragraph{Acquisition Optimization}
The acquisition landscape of the DPMM-GP is inherently multimodal, inheriting local optima from the superposition of multiple regime-specific GP posteriors. Consequently, standard convex optimization is insufficient. We maximize $\alpha(\mathbf{x})$ using a multi-start L-BFGS-B strategy~\citep{wilson2018maximizing} with automatic differentiation. To ensure robust convergence, the optimizer is initialized with a hybrid set of candidates $\mathcal{S}_{\text{init}}$ comprising: (1) a dense set of uniform random samples from $\mathcal{X}$ to encourage global exploration, (2) the centroids of currently active regimes to exploit high-probability regions, and (3) local Gaussian perturbations around the current best observation $\mathbf{x}^*$. The complete optimization loop is detailed in Algorithm~\ref{alg:bo_loop} (Appendix~\ref{app:bo_loop}). While we prioritize EI for its parameter-free formulation and natural robustness to the heterogeneous scales, the closed-form mixture moments derived in Theorem~\ref{thm:mixture_stats} enable seamless extension to other standard acquisition functions---including UCB, Thompson Sampling, Max-value Entropy Search~\citep{wang2017max}, Knowledge Gradient (KG)~\citep{frazier2009knowledge}, Predictive Entropy Search (PES) and Probability of Improvement~\citep{kushner1964new}---without additional approximation (see Appendix~\ref{app:acquisition} for details).

 \section{Related Work}
\label{sec:related}
Modern Bayesian optimization (BO) is founded on Expected Improvement~\citep{jones1998efficient} and the theoretical regret analysis for GP bandit optimization---comprising the original GP-UCB upper bound~\citep{srinivas2012gaussian}, improved analyses with kernel-dependent rates~\citep{chowdhury2017kernelized}, and algorithm-independent lower bounds~\citep{scarlett2017lower}---with practical adoption driven by efficient GP implementations~\citep{snoek2012practical}. Standard acquisition strategies have expanded to include Thompson Sampling~\citep{thompson1933likelihood,russo2018tutorial}, Knowledge Gradient~\citep{frazier2009knowledge, wu2016parallel}, and entropy-based search~\citep{hennig2012entropy, wang2017max, hernandez2014predictive}. While robust, these methods typically assume stationary surrogates.
To address scalability and high-dimensionality, recent approaches employ local trust regions (TuRBO~\citep{eriksson2019scalable}, SCBO~\citep{eriksson2021scalable}), dimensionality reduction via embeddings (REMBO~\citep{wang2016bayesian}, ALEBO~\citep{letham2020reexamining}, BAxUS~\citep{papenmeier2022increasing}), or structural priors (SAASBO~\citep{eriksson2021high}, Add-GP-UCB~\citep{kandasamy2015high}). Although recent studies~\citep{xu2024standard,hvarfner2024vanilla} suggest standard GPs with proper initialization can rival these specialized methods, they do not resolve the limitations of stationarity in heterogeneous landscapes. Multi-fidelity methods \citep{poloczek2017multi, takeno2020multi, huang2006sequential, kandasamy2016gaussian,li2020multi,li2021batch} exploit cheap approximations to accelerate optimization, though they assume consistent structure across fidelity levels.
Parallel advances extend BO to mixed-variable and combinatorial spaces using random forests~\citep{hutter2011sequential, bergstra2011algorithms}, tree-structured estimators~\citep{bergstra2011algorithms, falkner2018bohb}, and graph kernels~\citep{oh2019combinatorial, wan2021think, garrido2020dealing, deshwal2021combining, ru2020bayesian}.
Specific remedies for non-stationarity include input warping (HEBO~\citep{cowen2022hebo}, Warped GPs~\citep{snoek2014input}), Deep GPs~\citep{damianou2013deep, wilson2016stochastic}, input-dependent kernels~\citep{paciorek2004nonstationary}, neural surrogates~\citep{snoek2015scalable, springenberg2016bayesian, white2021bananas}, and axis-aligned partitioning via Treed GPs~\citep{gramacy2008bayesian}. However, these approaches generally rely on single global models or hard geometric partitions, failing to quantify the probabilistic uncertainty inherent in natural regime boundaries. \textit{Most related} to our work is the Infinite Mixture of GP Experts \citep{rasmussen2002infinite}, with other mixture-of-GP formulations explored in \citep{tresp2001mixtures, tresp2000bayesian, meeds2006alternative, yuan2009variational, nguyen2014fast, gadd2020enriched, li2023dirichlet}. The critical distinction is in the gating mechanism: Rasmussen \& Ghahramani employ an \textit{input-dependent} gating network that conditions regime assignment on spatial location, requiring explicit learning of gating boundaries and introducing additional gating parameters; enriched variants \citep{gadd2020enriched} likewise modulate the gate via auxiliary covariates. Our framework instead uses an \textit{input-independent} Chinese Restaurant Process prior in the generative model (Theorem~\ref{thm:predictive}) and introduces spatial dependence only at prediction time as a predictive modeling choice (Proposition~\ref{prop:spatial_weights}): the spatial modulation $\sigma^{-1}_{*,k}(\mathbf{x}_*)$ is read off directly from each regime's GP posterior, introducing \emph{no new learnable parameters} and no gradient flow into a gating network. Closer to ours, \citet{li2023dirichlet} apply a DPMM of GPs to \emph{supervised functional regression} with variational EM; in contrast, RAMBO targets \emph{sequential black-box optimization}—the surrogate must support repeated, low-data refits and feed acquisition functions, motivating our collapsed Gibbs sampler with analytically marginalized latent functions, adaptive $\alpha$-scheduling, regime-aware acquisition (Theorem~\ref{thm:ei_dpmm}), and regime-centroid initialization for acquisition optimization—none of which arise in the supervised regression setting. \ours thus adapts this non-parametric philosophy specifically for Bayesian Optimization through regime-aware acquisition functions—decomposing uncertainty into intra-regime aleatoric variance and inter-regime epistemic disagreement—and adaptive $\alpha$-scheduling that enforces parsimony early while enabling fine-grained regime discovery as data accumulates.


\begin{figure*}[t!] 
	\centering
	\begin{subfigure}[b]{0.24\textwidth}
		\centering
		\includegraphics[width=\linewidth]{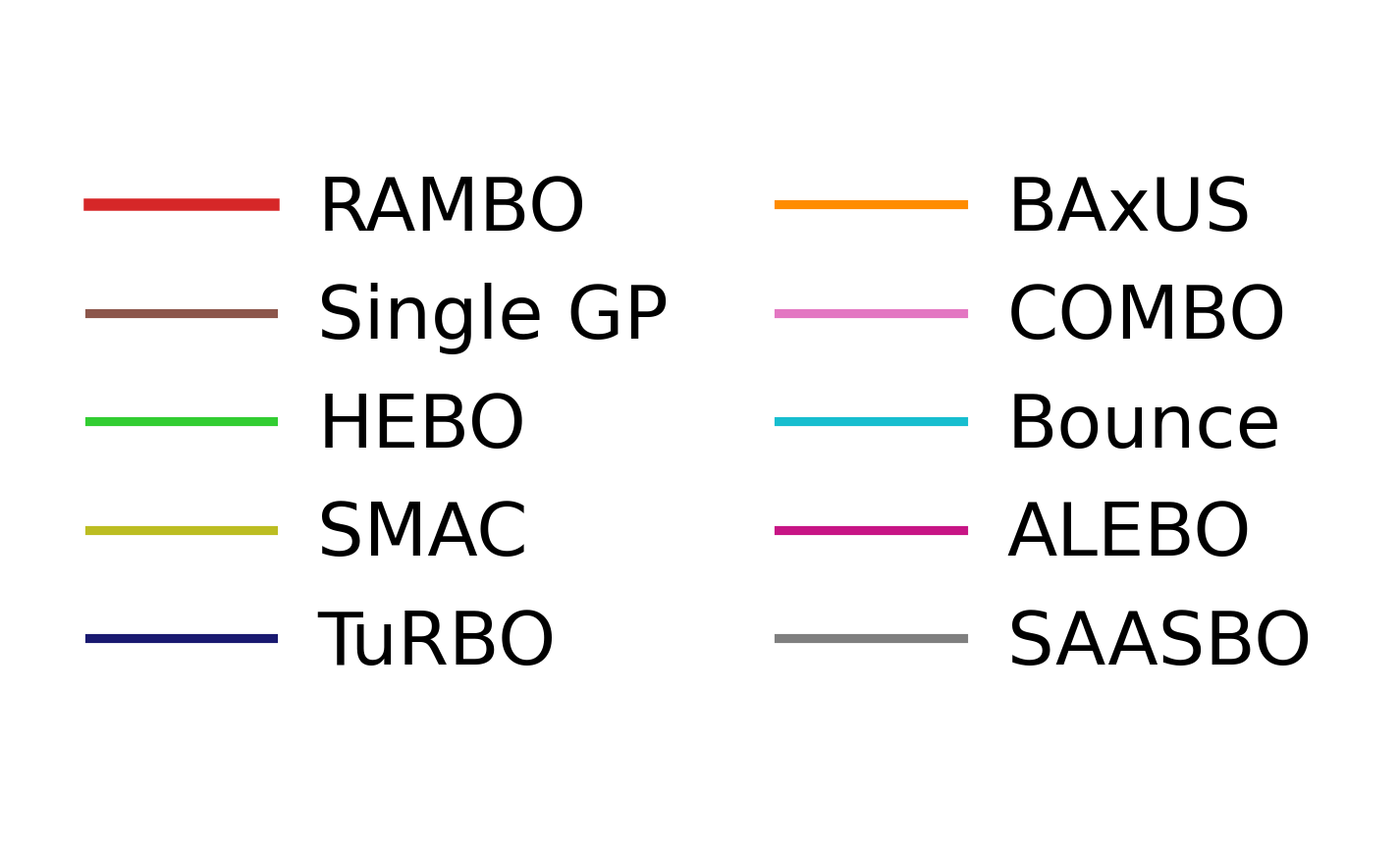}
		\caption*{}
	\end{subfigure}
	\hfill
	\begin{subfigure}[b]{0.24\textwidth}
		\centering
		\includegraphics[width=\linewidth]{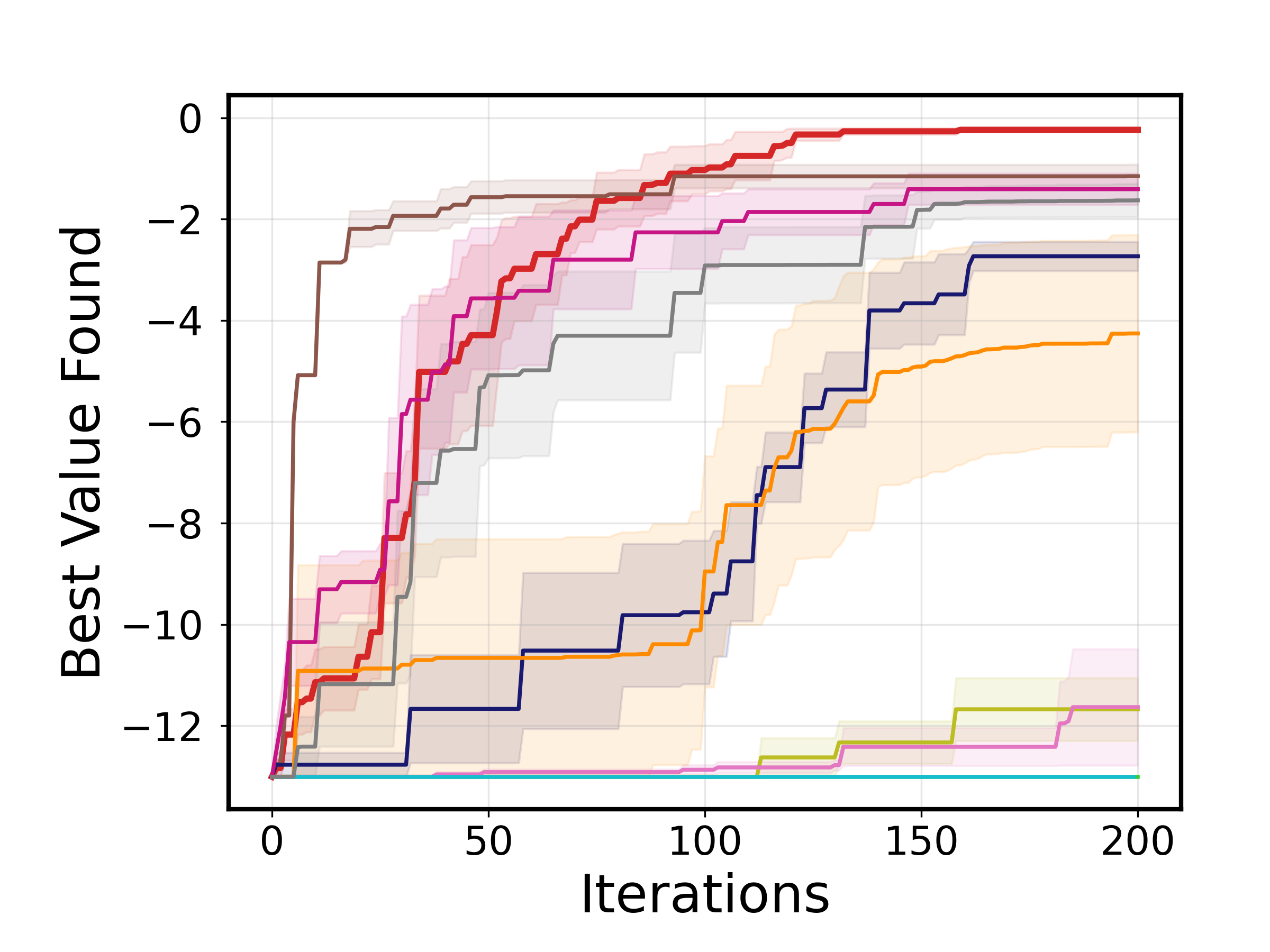}
		\caption{Levy(6D)}
		\label{fig:levy6}
	\end{subfigure}
	\hfill
	\begin{subfigure}[b]{0.24\textwidth}
		\centering
		\includegraphics[width=\linewidth]{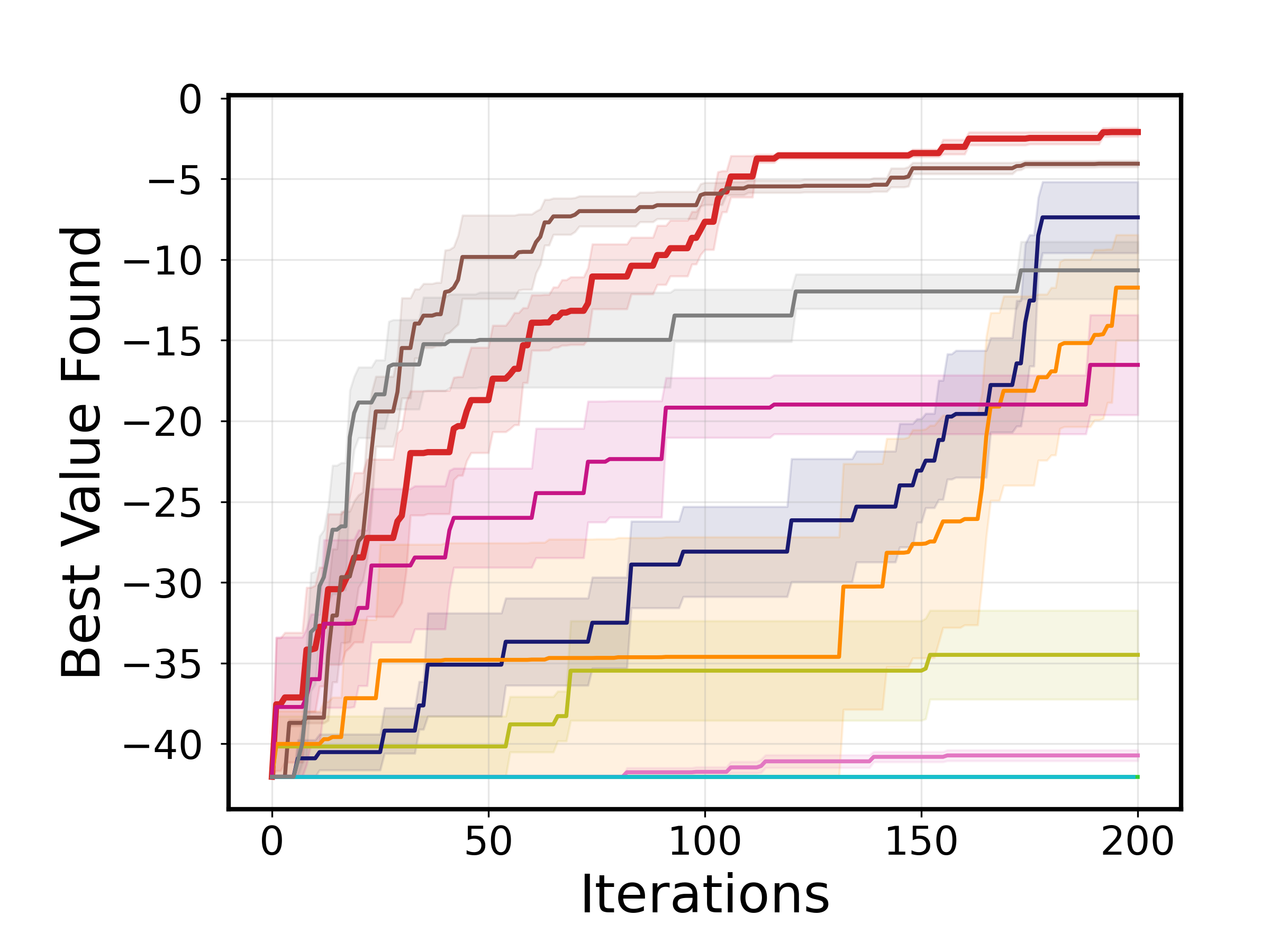}
		\caption{Levy(10D)}
		\label{fig:levy10}
	\end{subfigure}
	\hfill
	\begin{subfigure}[b]{0.24\textwidth}
		\centering
		\includegraphics[width=\linewidth]{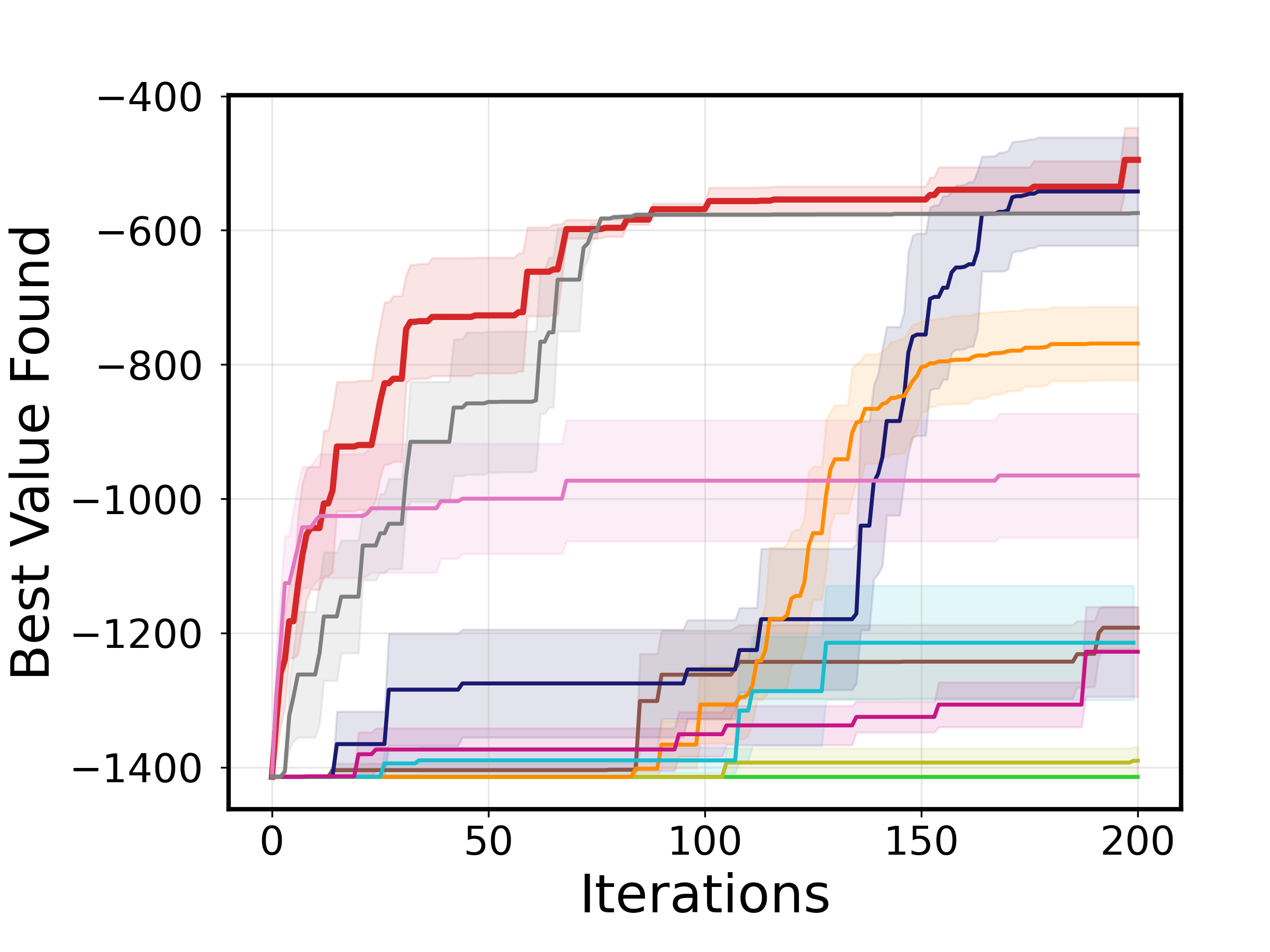}
		\caption{Schwefel(6D)}
		\label{fig:schwefel6}
	\end{subfigure}
	
	
	
	\begin{subfigure}[b]{0.24\textwidth}
		\centering
		\includegraphics[width=\linewidth]{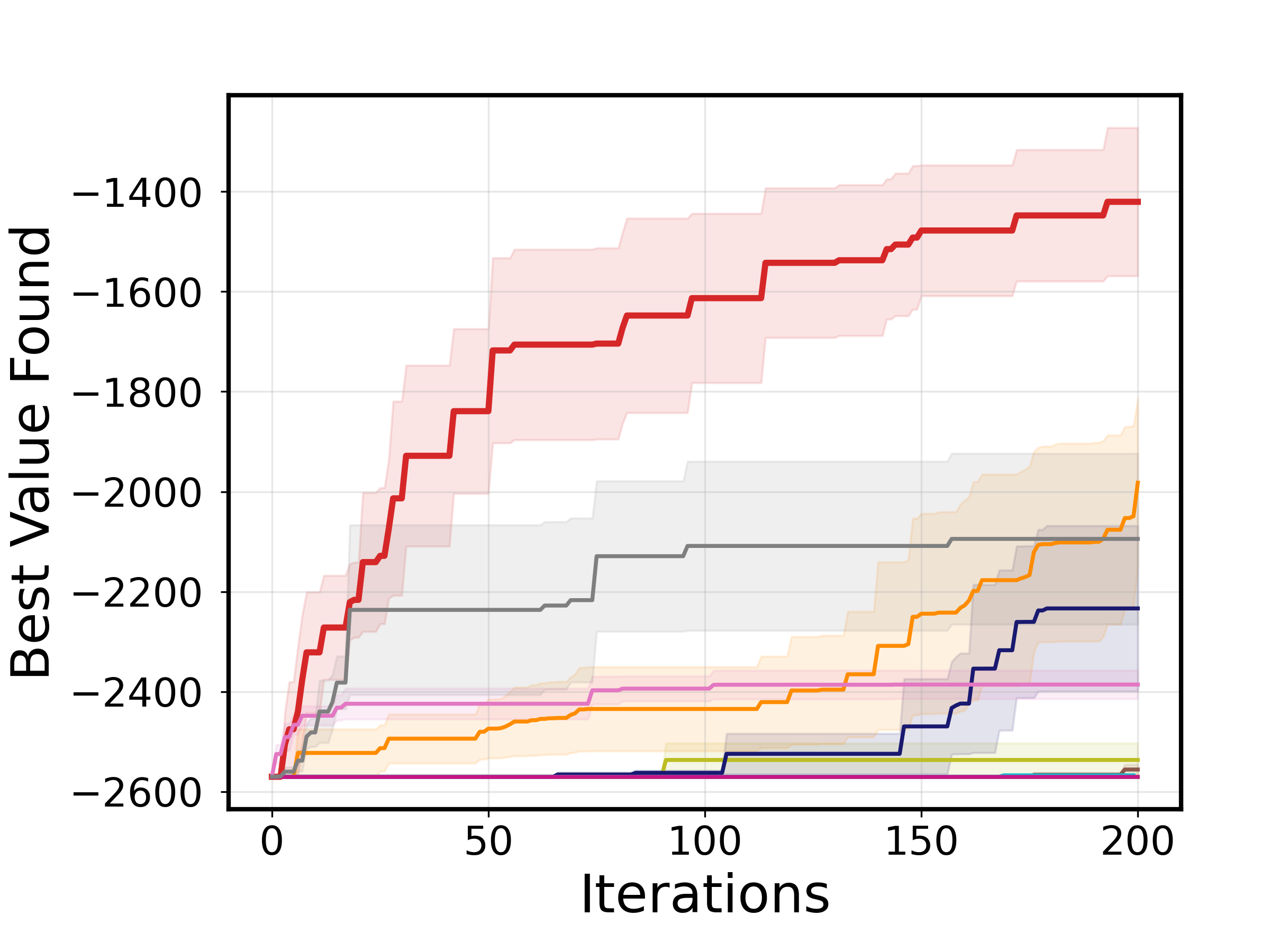}
		\caption{Schwefel(10D)}
		\label{fig:schwefel10}
	\end{subfigure}
	\hfill
	\begin{subfigure}[b]{0.24\textwidth}
		\centering		\includegraphics[width=\linewidth]{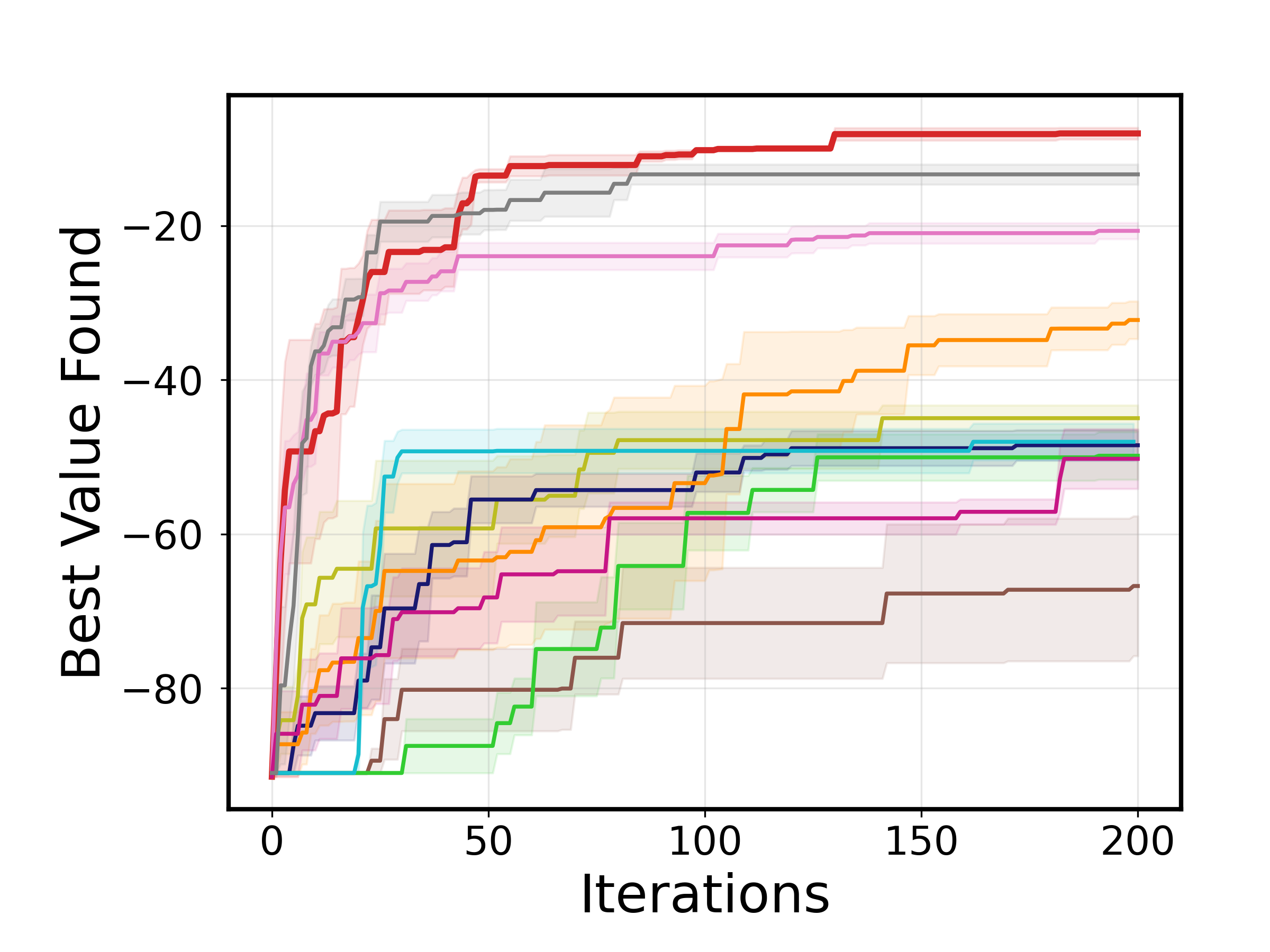}
		\caption{Molecular}
		\label{fig:torsion}
	\end{subfigure}
	\hfill
	\begin{subfigure}[b]{0.24\textwidth}
		\centering
		\includegraphics[width=\linewidth]{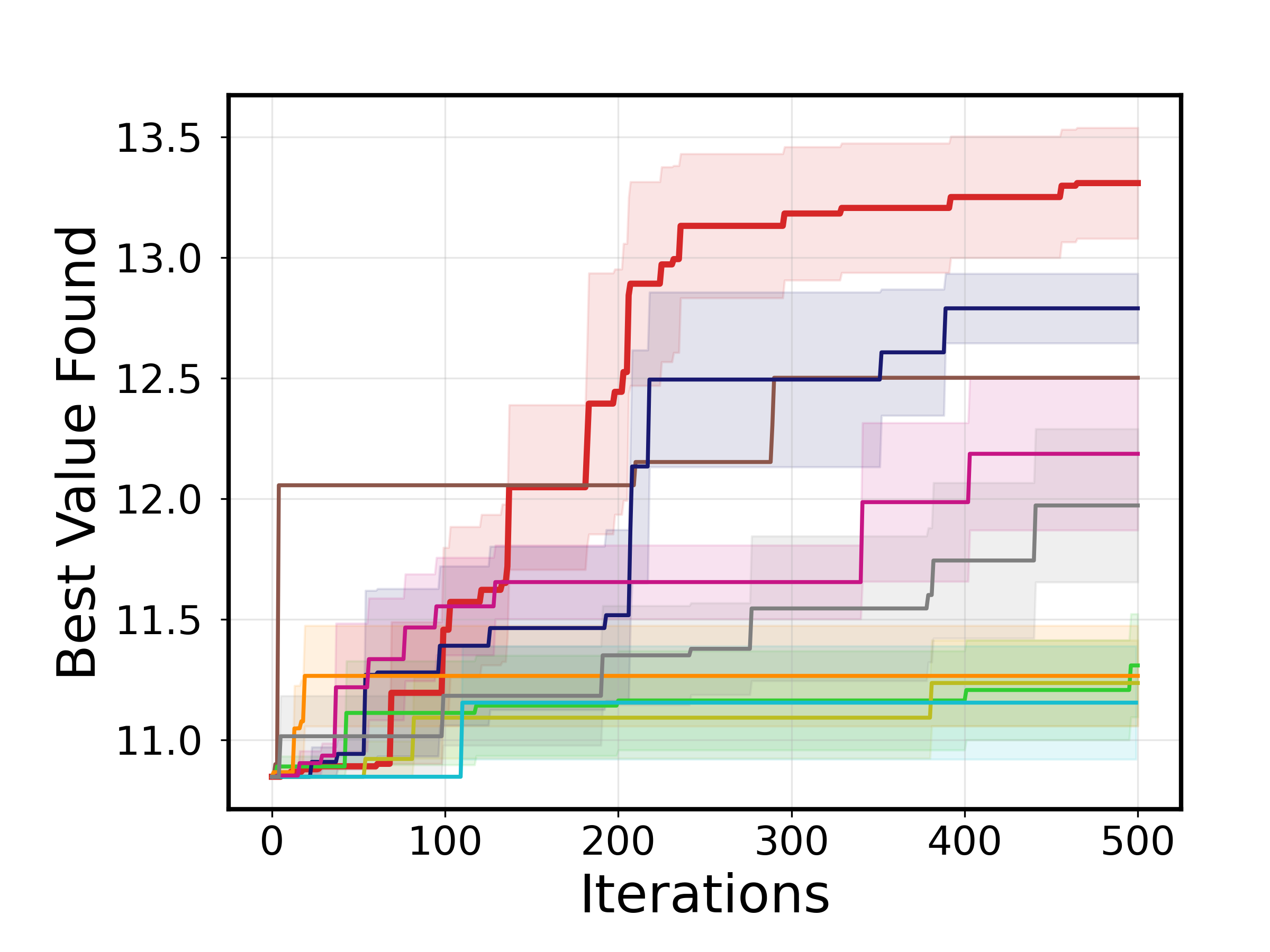}
		\caption{Drug Discovery}
		\label{fig:drug}
	\end{subfigure}
	\hfill
	\begin{subfigure}[b]{0.24\textwidth}
		\centering
		\includegraphics[width=\linewidth]{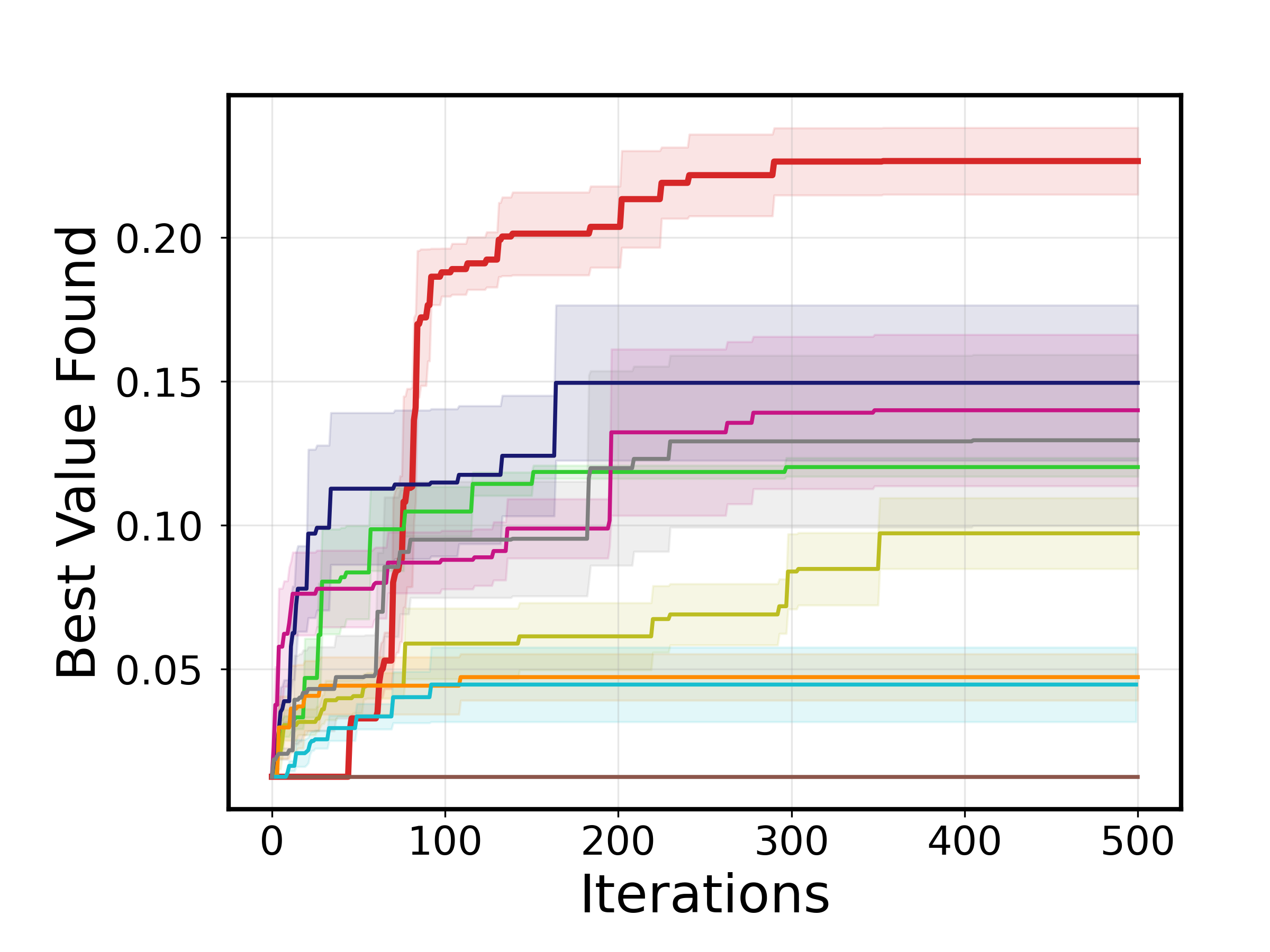}
		\caption{ConStellaration}
		\label{fig:constellaration}
        
	\end{subfigure}
\caption{\small Optimization performance across synthetic and real-world benchmarks. We report the best objective value found (mean $\pm$ SE over 5 seeds). \textbf{(a)--(d)} Levy and Schwefel functions in 6D and 10D. \textbf{(e)--(g)} Molecular conformer optimization (12D), virtual screening for drug discovery (50D), and stellarator reactor design (80D). \ours consistently matches or outperforms all baselines, with the largest gains on \textit{high-dimensional}, \textit{multi-regime} landscapes.}
\end{figure*}

\section{Experiments}
\label{sec:experiments}
We evaluate \ours against state-of-the-art Bayesian optimization baselines across tasks spanning synthetic functions to complex scientific applications in structural chemistry, molecular biology, and nuclear fusion, specifically selected to assess surrogate model performance under conditions of \emph{high dimensionality}, \emph{severe multi-modality}, and \emph{heterogeneous landscapes}—scenarios where standard stationary GPs typically flounder.

\paragraph{Competing Methods}
We compare \ours against a diverse set of state-of-the-art algorithms, beginning with \textit{Standard Single-GP BO (SGP)} \citep{snoek2012practical}; to ensure a fair comparison and isolate the impact of our regime-adaptive mechanism, we employ the standard Squared Exponential kernel for both the SGP baseline and \ours.
To assess performance in high-dimensional and heterogeneous settings, we include \textit{TuRBO}\footnote{\url{https://github.com/uber-research/TuRBO}} \citep{eriksson2019scalable}, which restricts optimization to local trust regions to handle non-stationarity;
\textit{SAASBO}\footnote{\url{https://github.com/martinjankowiak/saasbo}} \citep{eriksson2021high}, which addresses high dimensionality via sparse axis-aligned subspace priors;
\textit{BAxUS}\footnote{\url{https://github.com/lpapenme/BAxUS}} \citep{papenmeier2022increasing}, which progressively expands the dimensionality of its adaptive subspace embeddings;
and \textit{ALEBO}\footnote{\url{https://github.com/facebookresearch/alebo}} \citep{letham2020reexamining}, which optimizes within a linear embedding of the input space.
Our evaluation also incorporates \textit{HEBO}\footnote{\url{https://github.com/huawei-noah/HEBO}} \citep{cowen2022hebo}, a robust method combining heteroscedastic GPs with input warping;
\textit{Bounce}\footnote{\url{https://github.com/lpapenme/bounce}} \citep{papenmeier2023bounce}, a trust-region approach designed for high-dimensional mixed spaces;
\textit{SMAC}\footnote{\url{https://github.com/automl/SMAC3}} \citep{hutter2011sequential}, a random-forest-based alternative for non-smooth landscapes;
and \textit{COMBO}\footnote{\url{https://github.com/QUVA-Lab/COMBO}} \citep{oh2019combinatorial}, which utilizes graph kernels to model combinatorial variables.
\emph{Note that} COMBO is omitted from the Drug Discovery and ConStellaration experiments because its computational complexity renders it intractable on these large-scale benchmarks.

\paragraph{Metrics and Experiment Settings}
We report the \emph{best objective value found so far} averaged over \textit{5} independent seeds, with shaded regions denoting the \textit{standard error}.
All methods are initialized with (20 for synthetic and 5 for scientific design) identical quasirandom Sobol points. To ensure numerical stability, input features are normalized to the hypercube $[-1, 1]^d$, and target values are standardized (zero mean, unit variance) prior to model fitting in each iteration.
\ours and the SGP baseline are implemented in PyTorch~\citep{paszke2019pytorch}, utilizing the \textit{Expected Improvement (EI)} acquisition function.
We optimize the acquisition function using L-BFGS with \textit{20 random restarts}, selecting the candidate with the highest acquisition value to update the dataset.
For \ours, cluster-specific kernel parameters are optimized via Adam (learning rate $0.05$, $200$ gradient steps per BO iteration, no early stopping) maximizing the marginal log-likelihood; inference is performed via collapsed Gibbs sampling with $500$ burn-in iterations followed by $S = 5$ post-burn-in posterior samples used for the acquisition computation. The pruning threshold is $\epsilon = 10^{-3}$.
Other baselines use their official implementations; all experiments except SAASBO ran on \textit{CPU-only} resources.
\subsection{Synthetic Benchmarks}
We validate \ours on two canonical test functions designed to stress-test optimization in pathological multi-modal landscapes.
The \textit{Levy function}\footnote{\url{https://www.sfu.ca/~ssurjano/levy.html}} is characterized by a rugged surface with dense clusters of local minima, challenging the model's ability to resolve fine-grained structures and navigate high-frequency oscillations.
In contrast, the \textit{Schwefel function}\footnote{\url{https://www.sfu.ca/\~ssurjano/schwef.html}} presents a deceptive topology where the global optimum is geometrically isolated at the domain boundary, far removed from other local basins; this penalizes methods that over-exploit central regions. Formal definitions and visualizations are provided in Appendix~\ref{app:data:levy} and \ref{app:data:schwefel}.
We evaluate performance across varying dimensions $d \in \{6, 10\}$ to assess scalability across distinct regimes: from low-dimensional settings that allow for visual verification of regime discovery, to high-dimensional spaces where standard stationary GPs typically falter. As shown in Figure~\ref{fig:levy6}-\ref{fig:schwefel10}, \ours with adaptive $\alpha$-scheduling (DPMM-Sched) consistently matches or outperforms all baselines across both functions and dimensionalities. On the Levy function, \ours converges to near-optimal values significantly faster than competing methods, while on the deceptive Schwefel landscape, the performance gap widens substantially—particularly in 10D, where stationary methods struggle to escape suboptimal basins. Notably, the scheduled $\alpha$ variant outperforms fixed-$\alpha$ configurations, validating the benefit of adaptive regime discovery.
\subsection{Real-World Scientific Design}
\paragraph{Molecular Conformer Optimization (12D)}
Molecules continuously explore conformational space through rotations around single bonds~\citep{hawkins2010conformer, riniker2015better}. This benchmark involves finding the lowest-energy configuration of a linear alkane chain (pentadecane, C$_{15}$H$_{32}$) by optimizing $d = 12$ internal dihedral angles using force 
field calculations \citep{halgren1996merck, grimme2017reliable}. Rotation around each C--C bond favors three orientations—$180^\circ$ (anti) and $\pm 60^\circ$ (gauche)—inducing a combinatorial explosion of locally stable states. With $3^{12} = 531{,}441$ potential conformational minima separated by high-energy steric barriers, the landscape is highly multimodal with sharp transitions between basins. A detailed problem description is provided in Appendix~\ref{app:data:torsion}. Figure~\ref{fig:torsion} demonstrates that \ours substantially outperforms all baselines on this 12D conformational landscape. After 200 iterations, RAMBO achieves an energy 8.05 kcal/mol, compared to 13.36 kcal/mol for the best baseline (SAASBO), representing a 39.73\% improvement. Most competing methods stagnate at high-energy local minima, whereas \ours—particularly with adaptive $\alpha$-scheduling—efficiently navigates between distinct rotameric basins to reach near-optimal configurations within 50 iterations.

\paragraph{Virtual Screening for Drug Discovery (50D)} We optimize small molecules for docking scores against a cancer-related protein target (PDB ID: 6T2W). The inputs are 2048-bit Morgan fingerprints~\citep{rogers2010extended} compressed into a $d = 50$ dimensional continuous latent space via Principal Component Analysis (PCA). This benchmark is representative of 
modern drug discovery pipelines \citep{gomez2018automatic, griffiths2020constrained, korovina2020chembo, stanton2022accelerating, liu2023drugimprover, liu2025drugimprovergpt}. This benchmark poses two key challenges: high dimensionality and the inherently disjoint nature of chemical space—distinct molecular scaffolds (e.g., different ring systems or functional groups) occupy separate regions in the latent space with fundamentally different structure-activity relationships (SAR). A detailed problem description is provided in Appendix~\ref{app:data:drug}. As shown in Figure~\ref{fig:drug}, \ours substantially outperforms baselines in this 50D drug discovery task. By iteration 500, RAMBO achieves a docking score of -13.31, compared to -12.79 for the next-best method (TuRBO), representing a 4.06\% improvement in predicted binding affinity. The mixture model's ability to partition chemical space into scaffold-specific regimes enables continued improvement throughout the 500-iteration budget, while single-surrogate methods stagnate in early iterations.

\paragraph{Nuclear Fusion Reactor (80D)}
This benchmark optimizes the shape of a stellarator fusion reactor to maximize quasi-isodynamic quality, a measure of particle confinement efficiency~\citep{pedersen2020stellarator,cadena2025constellaration}. The inputs are Fourier coefficients controlling the plasma boundary geometry. Stellarator design exhibits highly nonlinear physics: small perturbations in boundary shape can trigger abrupt transitions in magnetic field topology and plasma stability, creating a patchy landscape where regions of high confinement quality are interspersed with unstable configurations. A detailed problem description is provided in Appendix~\ref{app:data:constellaration}. Figure~\ref{fig:constellaration} shows that \ours achieves substantially higher confinement quality than all baselines, reaching $Q_i$ values approximately 51.55\% higher than the best competing method. RAMBO discovers 3-5 regimes corresponding to distinct magnetic topologies, modeling configurations near nested flux surfaces separately from those approaching island formation. Stable high-$Q_i$ regions receive moderate length scales, while boundary regions near instabilities are modeled with shorter length scales to capture rapid quality degradation. Embedding-based methods (ALEBO, SAASBO) struggle because stellarator physics involves dense interactions among Fourier coefficients, violating both the low-rank linear structure assumed by ALEBO and the axis-aligned sparsity assumed by SAASBO. Trust-region methods (TuRBO) plateau when the local region straddles a stability boundary. RAMBO's regime-aware surrogates avoid these failure modes, enabling continued improvement throughout the 500-iteration budget.


\section{Conclusion}
We presented \ours, a regime-adaptive Bayesian optimization framework replacing the stationary GP surrogate with a Dirichlet Process Mixture of Gaussian Processes. By discovering distinct regimes with locally-optimized hyperparameters, \ours captures discrete heterogeneity in scientific design problems where non-stationary kernels fall short. Collapsed Gibbs sampling enables efficient inference, while adaptive $\alpha$-scheduling balances parsimony against expressiveness. Experiments on synthetic and real-world benchmarks—molecular conformation, drug discovery, and fusion reactor design—demonstrate consistent improvements over state-of-the-art baselines.

\section*{Acknowledgements}
This work was supported by startup funding from the Department of Computer Science at Florida State University.

\clearpage
\section*{Impact Statement}
\ours is intended to accelerate \emph{scientific discovery} in domains where each function evaluation is expensive—molecular conformer search, virtual screening for drug discovery, and stellarator fusion reactor design—by reducing the number of costly simulations or experiments required to reach high-quality solutions. The broader positive impact includes lowering the resource and energy footprint of computational science workflows, and broadening access to model-based design for groups without large compute budgets.

Potential risks warrant consideration. First, like all Bayesian optimization methods, \ours produces \emph{point recommendations under uncertainty}; if downstream decision-makers treat acquisition values as ground-truth rankings rather than as one input to a deliberative process, decisions in safety-critical applications (e.g., drug candidate prioritization) can be miscalibrated. We mitigate this by explicitly decomposing predictive uncertainty into intra-regime variance and inter-regime disagreement (Section~\ref{sec:predictive}), making the uncertainty structure interpretable rather than collapsing it into a single scalar. Second, accelerated optimization in drug discovery or materials design can be applied to harmful as well as beneficial targets; we encourage downstream practitioners to follow domain-specific safety review processes when applying the method to novel design problems. Third, while the benchmarks we evaluate are public, real-world deployment may interact with proprietary datasets; we recommend audit of training data provenance prior to use in any consequential decision pipeline. The method does not involve human subjects, personally identifiable data, or generative models that synthesize text or images.



\bibliography{reference}

\begin{thebibliography}{116}
\providecommand{\natexlab}[1]{#1}
\providecommand{\url}[1]{\texttt{#1}}
\expandafter\ifx\csname urlstyle\endcsname\relax
  \providecommand{\doi}[1]{doi: #1}\else
  \providecommand{\doi}{doi: \begingroup \urlstyle{rm}\Url}\fi

\bibitem[sri()]{srinivas2009gaussian}


\bibitem[Antoniak(1974)]{antoniak1974mixtures}
Antoniak, C.~E.
\newblock Mixtures of {D}irichlet processes with applications to {B}ayesian
  nonparametric problems.
\newblock \emph{The Annals of Statistics}, 2\penalty0 (6):\penalty0 1152--1174,
  1974.

\bibitem[Auer et~al.(2002)Auer, Cesa-Bianchi, and Fischer]{auer2002finite}
Auer, P., Cesa-Bianchi, N., and Fischer, P.
\newblock Finite-time analysis of the multiarmed bandit problem.
\newblock \emph{Machine Learning}, 47\penalty0 (2):\penalty0 235--256, 2002.

\bibitem[Bergstra et~al.(2011)Bergstra, Bardenet, Bengio, and
  K{\'e}gl]{bergstra2011algorithms}
Bergstra, J., Bardenet, R., Bengio, Y., and K{\'e}gl, B.
\newblock Algorithms for hyper-parameter optimization.
\newblock In \emph{Advances in Neural Information Processing Systems},
  volume~24, pp.\  2546--2554, 2011.

\bibitem[Brochu et~al.(2010)Brochu, Cora, and de~Freitas]{brochu2010tutorial}
Brochu, E., Cora, V.~M., and de~Freitas, N.
\newblock A tutorial on {B}ayesian optimization of expensive cost functions,
  with application to active user modeling and hierarchical reinforcement
  learning.
\newblock \emph{arXiv preprint arXiv:1012.2599}, 2010.

\bibitem[Cadena et~al.(2025)Cadena, Merlo, Laude, Bauer, Agrawal, Pascu,
  Savtchouk, Guiraud, Bonauer, Hudson, and Kaiser]{cadena2025constellaration}
Cadena, S.~A., Merlo, A., Laude, E., Bauer, A., Agrawal, A., Pascu, M.,
  Savtchouk, M., Guiraud, E., Bonauer, L., Hudson, S., and Kaiser, M.
\newblock {ConStellaration}: A dataset of {QI}-like stellarator plasma
  boundaries and optimization benchmarks.
\newblock In \emph{Advances in Neural Information Processing Systems}, 2025.
\newblock URL \url{https://openreview.net/forum?id=NQSbGKlCpx}.

\bibitem[Chowdhury \& Gopalan(2017)Chowdhury and
  Gopalan]{chowdhury2017kernelized}
Chowdhury, S.~R. and Gopalan, A.
\newblock On kernelized multi-armed bandits.
\newblock In \emph{Proceedings of the 34 th International Conference on Machine
  Learning}, pp.\  844--853. PMLR, 2017.

\bibitem[Cowen-Rivers et~al.(2022)Cowen-Rivers, Lyu, Tutunov, Wang, Grosnit,
  Griffiths, Maraval, Jianye, Wang, Peters, and Bou-Ammar]{cowen2022hebo}
Cowen-Rivers, A.~I., Lyu, W., Tutunov, R., Wang, Z., Grosnit, A., Griffiths,
  R.~R., Maraval, A.~M., Jianye, H., Wang, J., Peters, J., and Bou-Ammar, H.
\newblock {HEBO}: Pushing the limits of sample-efficient hyperparameter
  optimisation.
\newblock \emph{Journal of Artificial Intelligence Research}, 74:\penalty0
  1269--1349, 2022.

\bibitem[Damianou \& Lawrence(2013)Damianou and Lawrence]{damianou2013deep}
Damianou, A.~C. and Lawrence, N.~D.
\newblock Deep {G}aussian processes.
\newblock In \emph{Proceedings of the Sixteenth International Conference on
  Artificial Intelligence and Statistics}, pp.\  207--215. JMLR.org, 2013.

\bibitem[Deshwal \& Doppa(2021)Deshwal and Doppa]{deshwal2021combining}
Deshwal, A. and Doppa, J.~R.
\newblock Combining latent space and structured kernels for {B}ayesian
  optimization over combinatorial spaces.
\newblock In \emph{Advances in Neural Information Processing Systems},
  volume~34, pp.\  8185--8200, 2021.

\bibitem[Elsken et~al.(2019)Elsken, Metzen, and Hutter]{elsken2019neural}
Elsken, T., Metzen, J.~H., and Hutter, F.
\newblock Neural architecture search: A survey.
\newblock \emph{Journal of Machine Learning Research}, 20\penalty0
  (55):\penalty0 1--21, 2019.

\bibitem[Eriksson \& Jankowiak(2021)Eriksson and Jankowiak]{eriksson2021high}
Eriksson, D. and Jankowiak, M.
\newblock High-dimensional {B}ayesian optimization with sparse axis-aligned
  subspaces.
\newblock In \emph{Proceedings of the Thirty-Seventh Conference on Uncertainty
  in Artificial Intelligence}, pp.\  493--503. {AUAI} Press, 2021.

\bibitem[Eriksson \& Poloczek(2021)Eriksson and Poloczek]{eriksson2021scalable}
Eriksson, D. and Poloczek, M.
\newblock Scalable constrained {B}ayesian optimization.
\newblock In \emph{Proceedings of the 24th International Conference on
  Artificial Intelligence and Statistics}, pp.\  730--738. {PMLR}, 2021.

\bibitem[Eriksson et~al.(2019)Eriksson, Pearce, Gardner, Turner, and
  Poloczek]{eriksson2019scalable}
Eriksson, D., Pearce, M., Gardner, J.~R., Turner, R., and Poloczek, M.
\newblock Scalable global optimization via local {B}ayesian optimization.
\newblock In \emph{Advances in Neural Information Processing Systems},
  volume~32, pp.\  5497--5508, 2019.

\bibitem[Escobar \& West(1995)Escobar and West]{escobar1995bayesian}
Escobar, M.~D. and West, M.
\newblock {B}ayesian density estimation and inference using mixtures.
\newblock \emph{Journal of the American Statistical Association}, 90\penalty0
  (430):\penalty0 577--588, 1995.

\bibitem[Falkner et~al.(2018)Falkner, Klein, and Hutter]{falkner2018bohb}
Falkner, S., Klein, A., and Hutter, F.
\newblock {BOHB}: Robust and efficient hyperparameter optimization at scale.
\newblock In \emph{Proceedings of the 35th International Conference on Machine
  Learning}, pp.\  1437--1446. PMLR, 2018.

\bibitem[Ferguson(1973)]{ferguson1973bayesian}
Ferguson, T.~S.
\newblock A {B}ayesian analysis of some nonparametric problems.
\newblock \emph{The Annals of Statistics}, 1\penalty0 (2):\penalty0 209--230,
  1973.

\bibitem[Feurer \& Hutter(2019)Feurer and Hutter]{feurer2019hyperparameter}
Feurer, M. and Hutter, F.
\newblock Automated machine learning.
\newblock \emph{Cham: Springer}, pp.\  113--134, 2019.

\bibitem[Frazier et~al.(2009)Frazier, Powell, and
  Dayanik]{frazier2009knowledge}
Frazier, P., Powell, W., and Dayanik, S.
\newblock The knowledge-gradient policy for correlated normal beliefs.
\newblock \emph{INFORMS Journal on Computing}, 21\penalty0 (4):\penalty0
  599--613, 2009.

\bibitem[Frazier(2018)]{frazier2018tutorial}
Frazier, P.~I.
\newblock A tutorial on {B}ayesian optimization.
\newblock \emph{arXiv preprint arXiv:1807.02811}, 2018.

\bibitem[Gadd et~al.(2020)Gadd, Wade, and Boukouvalas]{gadd2020enriched}
Gadd, C., Wade, S., and Boukouvalas, A.
\newblock Enriched mixtures of generalised {G}aussian process experts.
\newblock In \emph{Proceedings of the 23rd International Conference on
  Artificial Intelligence and Statistics (AISTATS)}, volume 108 of
  \emph{Proceedings of Machine Learning Research}, pp.\  3144--3154. PMLR,
  2020.

\bibitem[Garrido-Merch{\'a}n \& Hern{\'a}ndez-Lobato(2020)Garrido-Merch{\'a}n
  and Hern{\'a}ndez-Lobato]{garrido2020dealing}
Garrido-Merch{\'a}n, E.~C. and Hern{\'a}ndez-Lobato, D.
\newblock Dealing with categorical and integer-valued variables in {B}ayesian
  optimization with {G}aussian processes.
\newblock \emph{Neurocomputing}, 380:\penalty0 20--35, 2020.

\bibitem[Gates et~al.(2018)Gates, Anderson, Anderson, Zarnstorff, Spong,
  Weitzner, Neilson, Ruzic, Andruczyk, Harris, Mynick, Hegna, Schmitz,
  Talmadge, Curreli, Maurer, Boozer, Knowlton, Allain, Ennis, Wurden, Reiman,
  Lore, Landreman, Freidberg, Hudson, Porkolab, Demers, Terry, Edlund,
  Lazerson, Pablant, Fonck, Volpe, Canik, Granetz, Ware, Hanson, Kumar, Deng,
  Likin, Cerfon, Ram, Hassam, Prager, Paz-Soldan, Pueschel, Joseph, and
  Glasser]{pedersen2020stellarator}
Gates, D.~A., Anderson, D., Anderson, S., Zarnstorff, M., Spong, D.~A.,
  Weitzner, H., Neilson, G.~H., Ruzic, D., Andruczyk, D., Harris, J.~H.,
  Mynick, H., Hegna, C.~C., Schmitz, O., Talmadge, J.~N., Curreli, D., Maurer,
  D., Boozer, A.~H., Knowlton, S., Allain, J.~P., Ennis, D., Wurden, G.,
  Reiman, A., Lore, J.~D., Landreman, M., Freidberg, J.~P., Hudson, S.~R.,
  Porkolab, M., Demers, D., Terry, J., Edlund, E., Lazerson, S.~A., Pablant,
  N., Fonck, R., Volpe, F., Canik, J., Granetz, R., Ware, A., Hanson, J.~D.,
  Kumar, S., Deng, C., Likin, K., Cerfon, A., Ram, A., Hassam, A., Prager, S.,
  Paz-Soldan, C., Pueschel, M.~J., Joseph, I., and Glasser, A.~H.
\newblock Stellarator research opportunities: A report of the national
  stellarator coordinating committee.
\newblock \emph{Journal of Fusion Energy}, 37\penalty0 (1):\penalty0 51--94,
  2018.

\bibitem[Gelman et~al.(2013)Gelman, Carlin, Stern, Dunson, Vehtari, and
  Rubin]{gelman2013bayesian}
Gelman, A., Carlin, J.~B., Stern, H.~S., Dunson, D.~B., Vehtari, A., and Rubin,
  D.~B.
\newblock \emph{{B}ayesian Data Analysis}.
\newblock CRC Press, third edition, 2013.

\bibitem[Genton(2001)]{genton2001classes}
Genton, M.~G.
\newblock Classes of kernels for machine learning: a statistics perspective.
\newblock \emph{Journal of Machine Learning Research}, 2:\penalty0 299--312,
  2001.

\bibitem[Goldberg et~al.(2020)Goldberg, Finlay, Ting, Beattie, Lamont, Fallan,
  Wrigley, Schimpl, Howard, Williamson, Vazquez-Chantada, Barratt, Davies,
  Cadogan, Ramos-Montoya, and Dean]{goldberg2020discovery}
Goldberg, F.~W., Finlay, M. R.~V., Ting, A. K.~T., Beattie, D., Lamont, G.~M.,
  Fallan, C., Wrigley, G.~L., Schimpl, M., Howard, M.~R., Williamson, B.,
  Vazquez-Chantada, M., Barratt, D.~G., Davies, B.~R., Cadogan, E.~B.,
  Ramos-Montoya, A., and Dean, E.
\newblock The discovery of
  7-methyl-2-[(7-methyl[1,2,4]triazolo[1,5-a]pyridin-6-yl)amino]-9-(tetrahydro-2{H}-pyran-4-yl)-7,9-dihydro-8{H}-purin-8-one
  ({AZD7648}), a potent and selective {DNA}-dependent protein kinase
  ({DNA}-{PK}) inhibitor.
\newblock \emph{Journal of Medicinal Chemistry}, 63\penalty0 (7):\penalty0
  3461--3471, 2020.

\bibitem[G{\'o}mez-Bombarelli et~al.(2018)G{\'o}mez-Bombarelli, Wei, Duvenaud,
  Hern{\'a}ndez-Lobato, S{\'a}nchez-Lengeling, Sheberla, Aguilera-Iparraguirre,
  Hirzel, Adams, and Aspuru-Guzik]{gomez2018automatic}
G{\'o}mez-Bombarelli, R., Wei, J.~N., Duvenaud, D., Hern{\'a}ndez-Lobato,
  J.~M., S{\'a}nchez-Lengeling, B., Sheberla, D., Aguilera-Iparraguirre, J.,
  Hirzel, T.~D., Adams, R.~P., and Aspuru-Guzik, A.
\newblock Automatic chemical design using a data-driven continuous
  representation of molecules.
\newblock \emph{ACS Central Science}, 4\penalty0 (2):\penalty0 268--276, 2018.

\bibitem[Goodman et~al.(2023)Goodman, Mata, Henneberg, Jorge, Landreman, Plunk,
  Smith, Mackenbach, Beidler, and Helander]{goodman2023constructing}
Goodman, A.~G., Mata, K.~C., Henneberg, S.~A., Jorge, R., Landreman, M., Plunk,
  G.~G., Smith, H.~M., Mackenbach, R. J.~J., Beidler, C.~D., and Helander, P.
\newblock Constructing precisely quasi-isodynamic magnetic fields.
\newblock \emph{Journal of Plasma Physics}, 89\penalty0 (5):\penalty0
  905890504, 2023.

\bibitem[Gramacy \& Lee(2008)Gramacy and Lee]{gramacy2008bayesian}
Gramacy, R.~B. and Lee, H. K.~H.
\newblock {B}ayesian treed {G}aussian process models with an application to
  computer modeling.
\newblock \emph{Journal of the American Statistical Association}, 103\penalty0
  (483):\penalty0 1119--1130, 2008.

\bibitem[Griffiths \& Hern{\'a}ndez-Lobato(2020)Griffiths and
  Hern{\'a}ndez-Lobato]{griffiths2020constrained}
Griffiths, R.-R. and Hern{\'a}ndez-Lobato, J.~M.
\newblock Constrained {B}ayesian optimization for automatic chemical design
  using variational autoencoders.
\newblock \emph{Chemical Science}, 11\penalty0 (2):\penalty0 577--586, 2020.

\bibitem[Grimme et~al.(2017)Grimme, Bannwarth, and
  Shushkov]{grimme2017reliable}
Grimme, S., Bannwarth, C., and Shushkov, P.
\newblock A robust and accurate tight-binding quantum chemical method for
  structures, vibrational frequencies, and noncovalent interactions of large
  molecular systems parametrized for all {spd}-block elements ({Z}= 1--86).
\newblock \emph{Journal of Chemical Theory and Computation}, 13\penalty0
  (5):\penalty0 1989--2009, 2017.

\bibitem[Halgren(1996)]{halgren1996merck}
Halgren, T.~A.
\newblock {M}erck molecular force field. {I}. {B}asis, form, scope,
  parameterization, and performance of {MMFF94}.
\newblock \emph{Journal of Computational Chemistry}, 17\penalty0
  (5-6):\penalty0 490--519, 1996.

\bibitem[Hastings(1970)]{hastings1970monte}
Hastings, W.~K.
\newblock {M}onte {C}arlo sampling methods using {M}arkov chains and their
  applications.
\newblock \emph{Biometrika}, 57\penalty0 (1):\penalty0 97--109, 1970.

\bibitem[Hawkins et~al.(2010)Hawkins, Skillman, Warren, Ellingson, and
  Stahl]{hawkins2010conformer}
Hawkins, P. C.~D., Skillman, A.~G., Warren, G.~L., Ellingson, B.~A., and Stahl,
  M.~T.
\newblock Conformer generation with {OMEGA}: algorithm and validation using
  high quality structures from the {P}rotein {D}atabank and {C}ambridge
  {S}tructural {D}atabase.
\newblock \emph{Journal of Chemical Information and Modeling}, 50\penalty0
  (4):\penalty0 572--584, 2010.

\bibitem[Heaps(1978)]{heaps1978information}
Heaps, H.~S.
\newblock \emph{Information retrieval: Computational and theoretical aspects}.
\newblock Academic Press, Inc., 1978.

\bibitem[Helander(2014)]{helander2014theory}
Helander, P.
\newblock Theory of plasma confinement in non-axisymmetric magnetic fields.
\newblock \emph{Reports on Progress in Physics}, 77\penalty0 (8):\penalty0
  087001, 2014.

\bibitem[Helander \& N{\"u}hrenberg(2009)Helander and
  N{\"u}hrenberg]{helander2009bootstrap}
Helander, P. and N{\"u}hrenberg, J.
\newblock Bootstrap current and neoclassical transport in quasi-isodynamic
  stellarators.
\newblock \emph{Plasma Physics and Controlled Fusion}, 51\penalty0
  (5):\penalty0 055004, 2009.

\bibitem[Hennig \& Schuler(2012)Hennig and Schuler]{hennig2012entropy}
Hennig, P. and Schuler, C.~J.
\newblock Entropy search for information-efficient global optimization.
\newblock \emph{Journal of Machine Learning Research}, 13:\penalty0 1809--1837,
  2012.

\bibitem[Hern{\'a}ndez-Lobato et~al.(2014)Hern{\'a}ndez-Lobato, Hoffman, and
  Ghahramani]{hernandez2014predictive}
Hern{\'a}ndez-Lobato, J.~M., Hoffman, M.~W., and Ghahramani, Z.
\newblock Predictive entropy search for efficient global optimization of
  black-box functions.
\newblock In \emph{Advances in Neural Information Processing Systems},
  volume~27, pp.\  918--926, 2014.

\bibitem[Higdon et~al.(1999)Higdon, Swall, and Kern]{higdon1999non}
Higdon, D., Swall, J., and Kern, J.
\newblock Non-stationary spatial modeling.
\newblock \emph{Bayesian Statistics}, 6\penalty0 (1):\penalty0 761--768, 1999.

\bibitem[Hirshman \& Whitson(1983)Hirshman and Whitson]{hirshman1986steepest}
Hirshman, S.~P. and Whitson, J.~C.
\newblock Steepest-descent moment method for three-dimensional
  magnetohydrodynamic equilibria.
\newblock \emph{Physics of Fluids}, 26\penalty0 (12):\penalty0 3553--3568,
  1983.

\bibitem[Huang et~al.(2006)Huang, Allen, Notz, and Miller]{huang2006sequential}
Huang, D., Allen, T.~T., Notz, W.~I., and Miller, R.~A.
\newblock Sequential kriging optimization using multiple-fidelity evaluations.
\newblock \emph{Structural and Multidisciplinary Optimization}, 32\penalty0
  (5):\penalty0 369--382, 2006.

\bibitem[Hutter et~al.(2011)Hutter, Hoos, and
  Leyton-Brown]{hutter2011sequential}
Hutter, F., Hoos, H.~H., and Leyton-Brown, K.
\newblock Sequential model-based optimization for general algorithm
  configuration.
\newblock In \emph{Learning and Intelligent Optimization}, pp.\  507--523.
  Springer, 2011.

\bibitem[Hvarfner et~al.(2024)Hvarfner, Hellsten, and
  Nardi]{hvarfner2024vanilla}
Hvarfner, C., Hellsten, E.~O., and Nardi, L.
\newblock Vanilla {B}ayesian optimization performs great in high dimensions.
\newblock In \emph{Proceedings of the 41st International Conference on Machine
  Learning}, pp.\  20793--20817. PMLR, 2024.

\bibitem[Ishwaran \& James(2001)Ishwaran and James]{ishwaran2001gibbs}
Ishwaran, H. and James, L.~F.
\newblock Gibbs sampling methods for stick-breaking priors.
\newblock \emph{Journal of the American Statistical Association}, 96\penalty0
  (453):\penalty0 161--173, 2001.

\bibitem[Jeffreys(1946)]{jeffreys1946invariant}
Jeffreys, H.
\newblock An invariant form for the prior probability in estimation problems.
\newblock \emph{Proceedings of the Royal Society of London. Series A,
  Mathematical and Physical Sciences}, 186\penalty0 (1007):\penalty0 453--461,
  1946.

\bibitem[Jones et~al.(1998)Jones, Schonlau, and Welch]{jones1998efficient}
Jones, D.~R., Schonlau, M., and Welch, W.~J.
\newblock Efficient global optimization of expensive black-box functions.
\newblock \emph{Journal of Global optimization}, 13\penalty0 (4):\penalty0
  455--492, 1998.

\bibitem[Kandasamy et~al.(2015)Kandasamy, Schneider, and
  P{\'o}czos]{kandasamy2015high}
Kandasamy, K., Schneider, J., and P{\'o}czos, B.
\newblock High dimensional {B}ayesian optimisation and bandits via additive
  models.
\newblock In \emph{Proceedings of the 32nd International Conference on Machine
  Learning}, pp.\  295--304. PMLR, 2015.

\bibitem[Kandasamy et~al.(2016)Kandasamy, Dasarathy, Oliva, Schneider, and
  P{\'o}czos]{kandasamy2016gaussian}
Kandasamy, K., Dasarathy, G., Oliva, J.~B., Schneider, J., and P{\'o}czos, B.
\newblock {G}aussian process bandit optimisation with multi-fidelity
  evaluations.
\newblock In \emph{Advances in Neural Information Processing Systems},
  volume~29, pp.\  992--1000, 2016.

\bibitem[Kandasamy et~al.(2018)Kandasamy, Neiswanger, Schneider, P{\'o}czos,
  and Xing]{kandasamy2018neural}
Kandasamy, K., Neiswanger, W., Schneider, J., P{\'o}czos, B., and Xing, E.~P.
\newblock Neural architecture search with {B}ayesian optimisation and optimal
  transport.
\newblock In \emph{Advances in Neural Information Processing Systems},
  volume~31, pp.\  2016--2025, 2018.

\bibitem[Kingma \& Ba(2015)Kingma and Ba]{kingma2014adam}
Kingma, D.~P. and Ba, J.
\newblock Adam: A method for stochastic optimization.
\newblock In \emph{International Conference on Learning Representations}, 2015.

\bibitem[Korovina et~al.(2020)Korovina, Xu, Kandasamy, Neiswanger, P{\'o}czos,
  Schneider, and Xing]{korovina2020chembo}
Korovina, K., Xu, S., Kandasamy, K., Neiswanger, W., P{\'o}czos, B., Schneider,
  J., and Xing, E.~P.
\newblock {ChemBO}: {B}ayesian optimization of small organic molecules with
  synthesizable recommendations.
\newblock In \emph{Proceedings of the Twenty Third International Conference on
  Artificial Intelligence and Statistics}, pp.\  3393--3403. PMLR, 2020.

\bibitem[Kushner(1964)]{kushner1964new}
Kushner, H.~J.
\newblock A new method of locating the maximum point of an arbitrary multipeak
  curve in the presence of noise.
\newblock \emph{Journal of Basic Engineering}, 86\penalty0 (1):\penalty0
  97--106, 1964.

\bibitem[Kusne et~al.(2020)Kusne, Yu, Wu, Zhang, Hattrick-Simpers, DeCost,
  Sarker, Oses, Toher, Curtarolo, Davydov, Agarwal, Bendersky, Li, Mehta, and
  Takeuchi]{kusne2020on}
Kusne, A.~G., Yu, H., Wu, C., Zhang, H., Hattrick-Simpers, J., DeCost, B.,
  Sarker, S., Oses, C., Toher, C., Curtarolo, S., Davydov, A.~V., Agarwal, R.,
  Bendersky, L.~A., Li, M., Mehta, A., and Takeuchi, I.
\newblock On-the-fly closed-loop materials discovery via {B}ayesian active
  learning.
\newblock \emph{Nature Communications}, 11\penalty0 (1):\penalty0 5966, 2020.

\bibitem[Landrum et~al.(2013)]{landrum2013rdkit}
Landrum, G. et~al.
\newblock Rdkit: Open-source cheminformatics.
\newblock \url{http://www.rdkit.org}, 2013.
\newblock version 2025.09.3.

\bibitem[Letham et~al.(2020)Letham, Calandra, Rai, and
  Bakshy]{letham2020reexamining}
Letham, B., Calandra, R., Rai, A., and Bakshy, E.
\newblock Re-examining linear embeddings for high-dimensional {B}ayesian
  optimization.
\newblock In \emph{Advances in Neural Information Processing Systems},
  volume~33, pp.\  1546--1558, 2020.

\bibitem[Li et~al.(2020)Li, Xing, Kirby, and Zhe]{li2020multi}
Li, S., Xing, W., Kirby, R.~M., and Zhe, S.
\newblock Multi-fidelity {B}ayesian optimization via deep neural networks.
\newblock In \emph{Advances in Neural Information Processing Systems},
  volume~33, pp.\  8521--8531, 2020.

\bibitem[Li et~al.(2021)Li, Kirby, and Zhe]{li2021batch}
Li, S., Kirby, R., and Zhe, S.
\newblock Batch multi-fidelity {B}ayesian optimization with deep
  auto-regressive networks.
\newblock In \emph{Advances in Neural Information Processing Systems},
  volume~34, pp.\  25463--25475, 2021.

\bibitem[Li \& Ma(2023)Li and Ma]{li2023dirichlet}
Li, T. and Ma, J.
\newblock Dirichlet process mixture of {G}aussian process functional
  regressions and its variational {EM} algorithm.
\newblock \emph{Pattern Recognition}, 134:\penalty0 109129, 2023.

\bibitem[Liu(1994)]{liu1994collapsed}
Liu, J.~S.
\newblock The collapsed {G}ibbs sampler in {B}ayesian computations with
  applications to a gene regulation problem.
\newblock \emph{Journal of the American Statistical Association}, 89\penalty0
  (427):\penalty0 958--966, 1994.

\bibitem[Liu et~al.(2023)Liu, Jiang, Vasan, Brace, Gokdemir, Brettin, Xia,
  Foster, and Stevens]{liu2023drugimprover}
Liu, X., Jiang, S., Vasan, A., Brace, A., Gokdemir, O., Brettin, T., Xia, F.,
  Foster, I., and Stevens, R.
\newblock {DrugImprover}: Utilizing reinforcement learning for multi-objective
  alignment in drug optimization.
\newblock In \emph{NeurIPS 2023 Workshop on New Frontiers of AI for Drug
  Discovery and Development}, 2023.
\newblock URL \url{https://openreview.net/forum?id=8RPbtOC6Sl}.

\bibitem[Liu et~al.(2025)Liu, Jiang, Chen, Yang, Chen, Foster, and
  Stevens]{liu2025drugimprovergpt}
Liu, X., Jiang, S., Chen, S., Yang, Z., Chen, Y., Foster, I., and Stevens, R.
\newblock Drugimprovergpt: A large language model for drug optimization with
  fine-tuning via structured policy optimization.
\newblock \emph{arXiv preprint arXiv:2502.07237}, 2025.

\bibitem[Lookman et~al.(2019)Lookman, Balachandran, Xue, and
  Yuan]{lookman2019active}
Lookman, T., Balachandran, P.~V., Xue, D., and Yuan, R.
\newblock Active learning in materials science with emphasis on adaptive
  sampling using uncertainties for targeted design.
\newblock \emph{npj Computational Materials}, 5\penalty0 (1):\penalty0 21,
  2019.

\bibitem[MacKay(1998)]{mackay1998introduction}
MacKay, D. J.~C.
\newblock Introduction to {G}aussian processes.
\newblock In Bishop, C.~M. (ed.), \emph{Neural Networks and Machine Learning},
  volume 168 of \emph{NATO ASI Series F: Computer and Systems Sciences}, pp.\
  133--166. Springer-Verlag, 1998.

\bibitem[Mat{\'e}rn(1960)]{matern1960spatial}
Mat{\'e}rn, B.
\newblock Spatial variation: Stochastic models and their application to some
  problems in forest surveys and other sampling investigations.
\newblock \emph{Meddelanden fr{\aa}n Statens Skogsforskningsinstitut},
  49\penalty0 (5), 1960.

\bibitem[McGann(2012)]{mcgann2012fred}
McGann, M.
\newblock {FRED} and {HYBRID} docking performance on standardized datasets.
\newblock \emph{Journal of computer-aided molecular design}, 26\penalty0
  (8):\penalty0 897--906, 2012.

\bibitem[McLachlan \& Peel(2000)McLachlan and Peel]{mclachlan2000finite}
McLachlan, G.~J. and Peel, D.
\newblock \emph{Finite mixture models}.
\newblock John Wiley and Sons, 2000.

\bibitem[Meeds \& Osindero(2005)Meeds and Osindero]{meeds2006alternative}
Meeds, E. and Osindero, S.
\newblock An alternative infinite mixture of {G}aussian process experts.
\newblock In \emph{Advances in Neural Information Processing Systems},
  volume~18, pp.\  883--890, 2005.

\bibitem[Mockus(1998)]{mockus1978application}
Mockus, J.
\newblock The application of bayesian methods for seeking the extremum.
\newblock \emph{Towards global optimization}, 2:\penalty0 117, 1998.

\bibitem[Neal(2000)]{neal2000markov}
Neal, R.~M.
\newblock Markov chain sampling methods for {D}irichlet process mixture models.
\newblock \emph{Journal of Computational and Graphical Statistics}, 9\penalty0
  (2):\penalty0 249--265, 2000.

\bibitem[Nguyen \& Bonilla(2014)Nguyen and Bonilla]{nguyen2014fast}
Nguyen, T. and Bonilla, E.
\newblock Fast allocation of {G}aussian process experts.
\newblock In \emph{Proceedings of the 31st International Conference on Machine
  Learning}, pp.\  145--153. PMLR, 2014.

\bibitem[Oh et~al.(2019)Oh, Tomczak, Gavves, and Welling]{oh2019combinatorial}
Oh, C., Tomczak, J., Gavves, E., and Welling, M.
\newblock Combinatorial {B}ayesian optimization using the graph {C}artesian
  product.
\newblock In \emph{Advances in Neural Information Processing Systems},
  volume~32, pp.\  2914--2924, 2019.

\bibitem[Paciorek \& Schervish(2003)Paciorek and
  Schervish]{paciorek2004nonstationary}
Paciorek, C.~J. and Schervish, M.~J.
\newblock Nonstationary covariance functions for {G}aussian process regression.
\newblock In \emph{Advances in Neural Information Processing Systems},
  volume~16, pp.\  273--280, 2003.

\bibitem[Papenmeier et~al.(2022)Papenmeier, Nardi, and
  Poloczek]{papenmeier2022increasing}
Papenmeier, L., Nardi, L., and Poloczek, M.
\newblock Increasing the scope as you learn: Adaptive {B}ayesian optimization
  in nested subspaces.
\newblock In \emph{Advances in Neural Information Processing Systems
  (NeurIPS)}, volume~35, pp.\  11586--11601. Curran Associates, Inc., 2022.

\bibitem[Papenmeier et~al.(2023)Papenmeier, Nardi, and
  Poloczek]{papenmeier2023bounce}
Papenmeier, L., Nardi, L., and Poloczek, M.
\newblock Bounce: Reliable high-dimensional {B}ayesian optimization for
  combinatorial and mixed spaces.
\newblock In \emph{Advances in Neural Information Processing Systems},
  volume~36, pp.\  1764--1793, 2023.

\bibitem[Paszke et~al.(2019)Paszke, Gross, Massa, Lerer, Bradbury, Chanan,
  Killeen, Lin, Gimelshein, Antiga, Desmaison, K{\"o}pf, Yang, DeVito, Raison,
  Tejani, Chilamkurthy, Steiner, Fang, Bai, and Chintala]{paszke2019pytorch}
Paszke, A., Gross, S., Massa, F., Lerer, A., Bradbury, J., Chanan, G., Killeen,
  T., Lin, Z., Gimelshein, N., Antiga, L., Desmaison, A., K{\"o}pf, A., Yang,
  E., DeVito, Z., Raison, M., Tejani, A., Chilamkurthy, S., Steiner, B., Fang,
  L., Bai, J., and Chintala, S.
\newblock {PyTorch}: An imperative style, high-performance deep learning
  library.
\newblock In \emph{Advances in Neural Information Processing Systems},
  volume~32, pp.\  8026--8037, 2019.

\bibitem[Pitman \& Yor(1997)Pitman and Yor]{pitman1997two}
Pitman, J. and Yor, M.
\newblock The two-parameter {P}oisson-{D}irichlet distribution derived from a
  stable subordinator.
\newblock \emph{The Annals of Probability}, 25\penalty0 (2):\penalty0 855--900,
  1997.

\bibitem[Plagemann et~al.(2008)Plagemann, Kersting, and
  Burgard]{plagemann2008nonstationary}
Plagemann, C., Kersting, K., and Burgard, W.
\newblock Nonstationary gaussian process regression using point estimates of
  local smoothness.
\newblock In \emph{Joint European Conference on Machine Learning and Knowledge
  Discovery in Databases}, pp.\  204--219. Springer, 2008.

\bibitem[Poloczek et~al.(2017)Poloczek, Wang, and Frazier]{poloczek2017multi}
Poloczek, M., Wang, J., and Frazier, P.~I.
\newblock Multi-information source optimization.
\newblock In \emph{Advances in Neural Information Processing Systems},
  volume~30, pp.\  4288--4298, 2017.

\bibitem[Rasmussen(2003)]{rasmussen2006gaussian}
Rasmussen, C.~E.
\newblock \emph{Gaussian processes in machine learning}.
\newblock Springer, 2003.

\bibitem[Rasmussen \& Ghahramani(2001)Rasmussen and
  Ghahramani]{rasmussen2002infinite}
Rasmussen, C.~E. and Ghahramani, Z.
\newblock Infinite mixtures of {G}aussian process experts.
\newblock In \emph{Advances in Neural Information Processing Systems},
  volume~14, pp.\  881--888, 2001.

\bibitem[Riniker \& Landrum(2015)Riniker and Landrum]{riniker2015better}
Riniker, S. and Landrum, G.~A.
\newblock Better informed distance geometry: using what we know to improve
  conformation generation.
\newblock \emph{Journal of Chemical Information and Modeling}, 55\penalty0
  (12):\penalty0 2562--2574, 2015.

\bibitem[Rogers \& Hahn(2010)Rogers and Hahn]{rogers2010extended}
Rogers, D. and Hahn, M.
\newblock Extended-connectivity fingerprints.
\newblock \emph{Journal of Chemical Information and Modeling}, 50\penalty0
  (5):\penalty0 742--754, 2010.

\bibitem[Ru et~al.(2020)Ru, Alvi, Nguyen, Osborne, and Roberts]{ru2020bayesian}
Ru, B., Alvi, A.~S., Nguyen, V., Osborne, M.~A., and Roberts, S.~J.
\newblock {B}ayesian optimisation over multiple continuous and categorical
  inputs.
\newblock In \emph{Proceedings of the 37th International Conference on Machine
  Learning}, pp.\  8276--8285. PMLR, 2020.

\bibitem[Russo et~al.(2018)Russo, Van~Roy, Kazerouni, Osband, and
  Wen]{russo2018tutorial}
Russo, D.~J., Van~Roy, B., Kazerouni, A., Osband, I., and Wen, Z.
\newblock A tutorial on {T}hompson sampling.
\newblock \emph{Foundations and Trends in Machine Learning}, 11\penalty0
  (1):\penalty0 1--96, 2018.

\bibitem[Sanchez-Lengeling \& Aspuru-Guzik(2018)Sanchez-Lengeling and
  Aspuru-Guzik]{sanchez2018inverse}
Sanchez-Lengeling, B. and Aspuru-Guzik, A.
\newblock Inverse molecular design using machine learning: Generative models
  for matter engineering.
\newblock \emph{Science}, 361\penalty0 (6400):\penalty0 360--365, 2018.

\bibitem[Scarlett et~al.(2017)Scarlett, Bogunovic, and
  Cevher]{scarlett2017lower}
Scarlett, J., Bogunovic, I., and Cevher, V.
\newblock Lower bounds on regret for noisy {G}aussian process bandit
  optimization.
\newblock In \emph{Proceedings of the 2017 Conference on Learning Theory}, pp.\
   1723--1742. PMLR, 2017.

\bibitem[Sch{\"o}lkopf \& Smola(2002)Sch{\"o}lkopf and
  Smola]{scholkopf2002learning}
Sch{\"o}lkopf, B. and Smola, A.~J.
\newblock \emph{Learning with Kernels: Support Vector Machines, Regularization,
  Optimization, and Beyond}.
\newblock MIT Press, 2002.

\bibitem[Sethuraman(1994)]{sethuraman1994constructive}
Sethuraman, J.
\newblock A constructive definition of {D}irichlet priors.
\newblock \emph{Statistica Sinica}, 4\penalty0 (2):\penalty0 639--650, 1994.

\bibitem[Shahriari et~al.(2016)Shahriari, Swersky, Wang, Adams, and
  de~Freitas]{shahriari2016taking}
Shahriari, B., Swersky, K., Wang, Z., Adams, R.~P., and de~Freitas, N.
\newblock Taking the human out of the loop: A review of {B}ayesian
  optimization.
\newblock \emph{Proceedings of the IEEE}, 104\penalty0 (1):\penalty0 148--175,
  2016.

\bibitem[Snoek et~al.(2012)Snoek, Larochelle, and Adams]{snoek2012practical}
Snoek, J., Larochelle, H., and Adams, R.~P.
\newblock Practical {B}ayesian optimization of machine learning algorithms.
\newblock In \emph{Advances in Neural Information Processing Systems},
  volume~25, pp.\  2951--2959, 2012.

\bibitem[Snoek et~al.(2014)Snoek, Swersky, Zemel, and Adams]{snoek2014input}
Snoek, J., Swersky, K., Zemel, R., and Adams, R.
\newblock Input warping for bayesian optimization of non-stationary functions.
\newblock In \emph{International conference on machine learning}, pp.\
  1674--1682. PMLR, 2014.

\bibitem[Snoek et~al.(2015)Snoek, Rippel, Swersky, Kiros, Satish, Sundaram,
  Patwary, {Prabhat}, and Adams]{snoek2015scalable}
Snoek, J., Rippel, O., Swersky, K., Kiros, R., Satish, N., Sundaram, N.,
  Patwary, M. M.~A., {Prabhat}, and Adams, R.~P.
\newblock Scalable {B}ayesian optimization using deep neural networks.
\newblock In \emph{Proceedings of the 32nd International Conference on Machine
  Learning}, pp.\  2171--2180. PMLR, 2015.

\bibitem[Springenberg et~al.(2016)Springenberg, Klein, Falkner, and
  Hutter]{springenberg2016bayesian}
Springenberg, J.~T., Klein, A., Falkner, S., and Hutter, F.
\newblock {B}ayesian optimization with robust {B}ayesian neural networks.
\newblock In \emph{Advances in Neural Information Processing Systems},
  volume~29, pp.\  4134--4142, 2016.

\bibitem[Srinivas et~al.(2010)Srinivas, Krause, Kakade, and
  Seeger]{srinivas2012gaussian}
Srinivas, N., Krause, A., Kakade, S.~M., and Seeger, M.~W.
\newblock {G}aussian process optimization in the bandit setting: No regret and
  experimental design.
\newblock In \emph{Proceedings of the 27th International Conference on Machine
  Learning (ICML)}, pp.\  1015--1022. Omnipress, 2010.

\bibitem[Stanton et~al.(2022)Stanton, Maddox, Gruber, Maffettone, Delaney,
  Greenside, and Wilson]{stanton2022accelerating}
Stanton, S., Maddox, W., Gruber, N., Maffettone, P., Delaney, E., Greenside,
  P., and Wilson, A.~G.
\newblock Accelerating {B}ayesian optimization for biological sequence design
  with denoising autoencoders.
\newblock In \emph{Proceedings of the 39th International Conference on Machine
  Learning}, pp.\  20459--20478. PMLR, 2022.

\bibitem[Sterling \& Irwin(2015)Sterling and Irwin]{sterling2015zinc}
Sterling, T. and Irwin, J.~J.
\newblock {ZINC} 15 -- ligand discovery for everyone.
\newblock \emph{Journal of Chemical Information and Modeling}, 55\penalty0
  (11):\penalty0 2324--2337, 2015.

\bibitem[Takeno et~al.(2020)Takeno, Fukuoka, Tsukada, Koyama, Shiga, Takeuchi,
  and Karasuyama]{takeno2020multi}
Takeno, S., Fukuoka, H., Tsukada, Y., Koyama, T., Shiga, M., Takeuchi, I., and
  Karasuyama, M.
\newblock Multi-fidelity {B}ayesian optimization with max-value entropy search
  and its parallelization.
\newblock In \emph{Proceedings of the 37th International Conference on Machine
  Learning (ICML)}, volume 119 of \emph{Proceedings of Machine Learning
  Research}, pp.\  9334--9345. PMLR, 2020.

\bibitem[Tap et~al.(2015)Tap, Wainberg, Anthony, Ibrahim, Zhang, Healey,
  Chmielowski, Staddon, Cohn, Shapiro, et~al.]{tap2015structure}
Tap, W.~D., Wainberg, Z.~A., Anthony, S.~P., Ibrahim, P.~N., Zhang, C., Healey,
  J.~H., Chmielowski, B., Staddon, A.~P., Cohn, A.~L., Shapiro, G.~I., et~al.
\newblock Structure-guided blockade of {CSF1R} kinase in tenosynovial
  giant-cell tumor.
\newblock \emph{New England Journal of Medicine}, 373\penalty0 (5):\penalty0
  428--437, 2015.

\bibitem[Thompson(1933)]{thompson1933likelihood}
Thompson, W.~R.
\newblock On the likelihood that one unknown probability exceeds another in
  view of the evidence of two samples.
\newblock \emph{Biometrika}, 25\penalty0 (3/4):\penalty0 285--294, 1933.

\bibitem[Tresp(2000{\natexlab{a}})]{tresp2000bayesian}
Tresp, V.
\newblock A {B}ayesian committee machine.
\newblock \emph{Neural Computation}, 12\penalty0 (11):\penalty0 2719--2741,
  2000{\natexlab{a}}.

\bibitem[Tresp(2000{\natexlab{b}})]{tresp2001mixtures}
Tresp, V.
\newblock Mixtures of {G}aussian processes.
\newblock In \emph{Advances in Neural Information Processing Systems
  (NeurIPS)}, volume~13, pp.\  654--660. MIT Press, 2000{\natexlab{b}}.

\bibitem[Wan et~al.(2021)Wan, Nguyen, Ha, Ru, Lu, and Osborne]{wan2021think}
Wan, X., Nguyen, V., Ha, H., Ru, B., Lu, C., and Osborne, M.~A.
\newblock Think global and act local: {B}ayesian optimisation over
  high-dimensional categorical and mixed search spaces.
\newblock In \emph{Proceedings of the 38th International Conference on Machine
  Learning}, pp.\  10663--10674. PMLR, 2021.

\bibitem[Wang \& Jegelka(2017)Wang and Jegelka]{wang2017max}
Wang, Z. and Jegelka, S.
\newblock Max-value entropy search for efficient {B}ayesian optimization.
\newblock In \emph{International Conference on Machine Learning}, pp.\
  3627--3635. PMLR, 2017.

\bibitem[Wang et~al.(2016)Wang, Hutter, Zoghi, Matheson, and
  de~Freitas]{wang2016bayesian}
Wang, Z., Hutter, F., Zoghi, M., Matheson, D., and de~Freitas, N.
\newblock {B}ayesian optimization in a billion dimensions via random
  embeddings.
\newblock \emph{Journal of Artificial Intelligence Research}, 55:\penalty0
  361--387, 2016.

\bibitem[Wen et~al.(2023)Wen, Wang, Guo, and Liu]{wen2023csf1r}
Wen, J., Wang, S., Guo, R., and Liu, D.
\newblock Csf1r inhibitors are emerging immunotherapeutic drugs for cancer
  treatment.
\newblock \emph{European Journal of Medicinal Chemistry}, 245:\penalty0 114884,
  2023.

\bibitem[White et~al.(2021)White, Neiswanger, and Savani]{white2021bananas}
White, C., Neiswanger, W., and Savani, Y.
\newblock {BANANAS}: {B}ayesian optimization with neural architectures for
  neural architecture search.
\newblock In \emph{Proceedings of the Thirty-Fifth {AAAI} Conference on
  Artificial Intelligence}, pp.\  10293--10301. {AAAI} Press, 2021.

\bibitem[Williams \& Rasmussen(1995)Williams and
  Rasmussen]{williams1996gaussian}
Williams, C. K.~I. and Rasmussen, C.~E.
\newblock {Gaussian} processes for regression.
\newblock In \emph{Advances in Neural Information Processing Systems},
  volume~8, pp.\  514--520. MIT Press, 1995.

\bibitem[Wilson et~al.(2016{\natexlab{a}})Wilson, Hu, Salakhutdinov, and
  Xing]{wilson2016deep}
Wilson, A.~G., Hu, Z., Salakhutdinov, R., and Xing, E.~P.
\newblock Deep kernel learning.
\newblock In \emph{Artificial intelligence and statistics}, pp.\  370--378.
  PMLR, 2016{\natexlab{a}}.

\bibitem[Wilson et~al.(2016{\natexlab{b}})Wilson, Hu, Salakhutdinov, and
  Xing]{wilson2016stochastic}
Wilson, A.~G., Hu, Z., Salakhutdinov, R., and Xing, E.~P.
\newblock Stochastic variational deep kernel learning.
\newblock In \emph{Advances in Neural Information Processing Systems},
  volume~29, pp.\  2586--2594, 2016{\natexlab{b}}.

\bibitem[Wilson et~al.(2018)Wilson, Hutter, and
  Deisenroth]{wilson2018maximizing}
Wilson, J., Hutter, F., and Deisenroth, M.
\newblock Maximizing acquisition functions for {B}ayesian optimization.
\newblock In \emph{Advances in Neural Information Processing Systems},
  volume~31, pp.\  9884--9895, 2018.

\bibitem[Wu \& Frazier(2016)Wu and Frazier]{wu2016parallel}
Wu, J. and Frazier, P.~I.
\newblock The parallel knowledge gradient method for batch {B}ayesian
  optimization.
\newblock In \emph{Advances in Neural Information Processing Systems},
  volume~29, pp.\  3126--3134. Curran Associates, Inc., 2016.

\bibitem[Xu et~al.(2025)Xu, Wang, Phillips, and Zhe]{xu2024standard}
Xu, Z., Wang, H., Phillips, J.~M., and Zhe, S.
\newblock Standard {G}aussian process is all you need for high-dimensional
  {B}ayesian optimization.
\newblock In \emph{The Thirteenth International Conference on Learning
  Representations (ICLR)}, 2025.
\newblock URL \url{https://openreview.net/forum?id=kX8h23UG6v}.

\bibitem[Xue et~al.(2016)Xue, Balachandran, Hogden, Theiler, Xue, and
  Lookman]{xue2016accelerated}
Xue, D., Balachandran, P.~V., Hogden, J., Theiler, J., Xue, D., and Lookman, T.
\newblock Accelerated search for materials with targeted properties by adaptive
  design.
\newblock \emph{Nature Communications}, 7\penalty0 (1):\penalty0 11241, 2016.

\bibitem[Yuan \& Neubauer(2008)Yuan and Neubauer]{yuan2009variational}
Yuan, C. and Neubauer, C.
\newblock Variational mixture of {G}aussian process experts.
\newblock In \emph{Advances in Neural Information Processing Systems},
  volume~21, pp.\  1897--1904, 2008.

\bibitem[Zoph \& Le(2017)Zoph and Le]{zoph2017neural}
Zoph, B. and Le, Q.~V.
\newblock Neural architecture search with reinforcement learning.
\newblock In \emph{International Conference on Learning Representations}, 2017.
\newblock URL \url{https://openreview.net/forum?id=r1Ue8Hcxg}.

\end{thebibliography}
\bibliographystyle{icml2026}

\clearpage
\appendix
\onecolumn

\section{Proof of Theorem~\ref{thm:predictive} and Derivation of Proposition~\ref{prop:spatial_weights}}
\label{app:proof_predictive}

We provide the complete derivation in two parts: (A) the \emph{exact} DPMM-GP posterior predictive of Theorem~\ref{thm:predictive} using the input-independent CRP marginals; and (B) the derivation of the spatially-modulated weights of Proposition~\ref{prop:spatial_weights} as a predictive modeling choice motivated by two independent and consistent perspectives—expected posterior responsibility and Jeffreys' scale-invariant aggregation. We emphasize that Part~B is a deliberate structural augmentation of the predictive step that targets spatial relevance; the underlying generative DP prior remains input-independent throughout.

\subsection{Part A: Exact DPMM-GP Predictive (Theorem~\ref{thm:predictive})}
\label{app:proof_predictive:partA}

Given training data $\mathcal{D}$, a posterior sample of regime assignments $\mathbf{z}$, and hyperparameters $\Theta$ from the collapsed Gibbs sampler, we derive the predictive density at a test input $\mathbf{x}_*$ by marginalizing its latent assignment $z_*$ over the $K+1$ possible regimes (the $K$ active regimes plus a new one).

\paragraph{Step 1 (Law of total probability).}
\begin{equation}
    p(y_* \mid \mathbf{x}_*, \mathbf{z}, \Theta, \mathcal{D}) \;=\; \sum_{k=1}^{K+1} p(z_* = k \mid \mathbf{z}, \alpha) \cdot p(y_* \mid \mathbf{x}_*, z_* = k, \mathcal{D}, \Theta).
\end{equation}

\paragraph{Step 2 (CRP predictive marginals).}
By Definition~\ref{def:crp}, the assignment of $z_*$ depends only on the cluster sizes $\{n_k\}$ in $\mathbf{z}$:
\begin{equation}
    p(z_* = k \mid \mathbf{z}, \alpha) = \frac{n_k}{n + \alpha} \;\; (k \le K), \qquad p(z_* = K+1 \mid \mathbf{z}, \alpha) = \frac{\alpha}{n + \alpha}.
\end{equation}
These probabilities are \emph{input-independent}—they do not depend on $\mathbf{x}_*$.

\paragraph{Step 3 (GP posterior predictive within each regime).}
Since the regime-conditional latent function $f_k$ has been analytically marginalized (Proposition~\ref{prop:marginal}), the per-regime predictive for $k \le K$ is the standard GP posterior on regime $k$'s data $\mathcal{D}_k$ with hyperparameters $\theta_k$:
\begin{equation}
    p(y_* \mid \mathbf{x}_*, z_* = k, \mathcal{D}, \Theta) = \mathcal{N}(y_* \mid \mu_{*,k}(\mathbf{x}_*), \sigma^2_{*,k}(\mathbf{x}_*)).
\end{equation}
For $k = K+1$, the predictive is the marginal under the base measure $G_0$, approximated via Monte Carlo over $G_0$-samples as in Section~\ref{sec:inference}.

\paragraph{Step 4 (Substitution).}
Combining Steps 2 and 3,
\begin{equation}
    p(y_* \mid \mathbf{x}_*, \mathbf{z}, \Theta, \mathcal{D}) \;=\; \sum_{k=1}^{K} \frac{n_k}{n+\alpha}\, \mathcal{N}\!\left(y_* \mid \mu_{*,k}, \sigma^2_{*,k}\right) \;+\; \frac{\alpha}{n+\alpha}\, p(y_* \mid \mathbf{x}_*, \text{new}),
\end{equation}
which is exactly Eq.~\eqref{eq:mixture_predictive} with the input-independent CRP weights $\pi_k$ of Eq.~\eqref{eq:crp_weights}. \hfill$\square$

\subsection{Part B: Derivation of the Spatially-Modulated Weights (Proposition~\ref{prop:spatial_weights})}
\label{app:proof_predictive:partB}

Theorem~\ref{thm:predictive} weights every regime by its global popularity $\pi_k = n_k/(n+\alpha)$, independent of $\mathbf{x}_*$. For multi-regime landscapes where each regime is spatially confined, this aggregation forces every regime to contribute even when most are irrelevant at $\mathbf{x}_*$. We construct the spatially-modulated weight $w_k(\mathbf{x}_*)$ of Eq.~\eqref{eq:regime_weights} as a predictive modeling choice motivated by two independent perspectives that produce the \emph{same} functional form.

\paragraph{Perspective 1: Expected Posterior Responsibility.}
If a test observation $y_*$ were available, the standard mixture-of-Gaussians responsibility (analogous to Eq.~\eqref{eq:z_conditional}) would assign
\begin{equation}
    p(z_* = k \mid y_*, \mathbf{x}_*, \mathbf{z}, \Theta, \mathcal{D}) \;\propto\; n_k \cdot \mathcal{N}(y_* \mid \mu_{*,k}, \sigma^2_{*,k}).
\end{equation}
Since $y_*$ is unobserved, we evaluate the expected unnormalized responsibility under component $k$'s own predictive at $\mathbf{x}_*$:
\begin{align}
    \mathbb{E}_{y_* \sim \mathcal{N}(\mu_{*,k},\sigma^2_{*,k})}\!\left[n_k \cdot \mathcal{N}(y_* \mid \mu_{*,k},\sigma^2_{*,k})\right]
    &= n_k \int \mathcal{N}(y \mid \mu_{*,k},\sigma^2_{*,k})^2 \, dy.
\end{align}
Using the Gaussian self-evaluation identity $\int \mathcal{N}(y \mid a, A)\,\mathcal{N}(y \mid b, B)\,dy = \mathcal{N}(a \mid b, A+B)$ with $a = b = \mu_{*,k}$, $A = B = \sigma^2_{*,k}$,
\begin{equation}
    \int \mathcal{N}(y \mid \mu_{*,k},\sigma^2_{*,k})^2\, dy = \mathcal{N}(0 \mid 0, 2\sigma^2_{*,k}) = \frac{1}{2\sqrt{\pi}\,\sigma_{*,k}}.
\end{equation}
Hence the expected unnormalized responsibility scales as $n_k \cdot \sigma^{-1}_{*,k}(\mathbf{x}_*)$, and incorporating the CRP normalization $1/(n+\alpha)$ yields
\begin{equation}
    w_k(\mathbf{x}_*) \;\propto\; \frac{n_k}{n+\alpha} \cdot \sigma^{-1}_{*,k}(\mathbf{x}_*),
\end{equation}
which is Eq.~\eqref{eq:regime_weights} via $\sigma^{-1}_{*,k} = \exp(-\tfrac{1}{2}\log \sigma^2_{*,k})$.

\paragraph{Cross-regime terms.} A fully Bayesian responsibility would include cross-regime contributions $\mathbb{E}_{y_* \sim \mathcal{N}(\mu_{*,j},\sigma^2_{*,j})}[\mathcal{N}(y_* \mid \mu_{*,k},\sigma^2_{*,k})] = \mathcal{N}(\mu_{*,j} \mid \mu_{*,k}, \sigma^2_{*,j} + \sigma^2_{*,k})$ for $j \ne k$. These are exponentially suppressed whenever well-separated regimes satisfy $|\mu_{*,j} - \mu_{*,k}|^2 \gg \sigma^2_{*,j} + \sigma^2_{*,k}$, which is precisely the regime of interest (sharply heterogeneous landscapes); neglecting them preserves the leading-order behavior. The construction is thus exact in the well-separated limit and a controlled approximation otherwise.

\paragraph{Perspective 2: Jeffreys' Scale-Invariant Reference Measure.}
The aggregation in Eq.~\eqref{eq:mixture_predictive} mixes Gaussian components with heterogeneous predictive scales $\sigma_{*,k}$. The natural uninformative reference measure for a positive scale parameter is Jeffreys' prior $p_J(\sigma) \propto \sigma^{-1}$~\citep{jeffreys1946invariant}, which is invariant under reparameterizations of $\sigma$. Using $\sigma^{-1}_{*,k}(\mathbf{x}_*)$ as the per-component reference weight at $\mathbf{x}_*$, so that locally confident regimes (small $\sigma_{*,k}$) are up-weighted and locally uncertain ones contribute less. Combined with the CRP popularity $n_k/(n+\alpha)$, this perspective independently arrives at the same Eq.~\eqref{eq:regime_weights}.

\paragraph{No Additional Parameters.}
Unlike input-dependent gating networks~\citep{rasmussen2002infinite}, which require learning gating boundaries via auxiliary parameters or networks, $w_k(\mathbf{x}_*)$ is constructed entirely from quantities already produced by the regime-conditional GP posteriors $(\mu_{*,k}, \sigma_{*,k})$. No new parameters, no auxiliary optimization, and no gradient flow into the gating mechanism are introduced. \hfill$\square$

\clearpage

\section{Extended Acquisition Functions}
\label{app:acquisition}

In this section, we provide the detailed derivations for the Max-value Entropy Search (MES) and Probability of Improvement (PI) acquisition functions within the RAMBO framework. Both derivations address the challenge of the multimodal posterior predictive distribution inherent to the DPMM-GP.

\subsection{Max-value Entropy Search (MES)}
\label{app:mes}

We extend the Max-value Entropy Search (MES) \citep{wang2017max} to the DPMM-GP framework. MES seeks to evaluate the candidate point $x$ that maximizes the mutual information between the observation $y$ at $x$ and the global maximum value $y^* = \max_{x' \in \mathcal{X}} f(x')$.

The acquisition function is defined as the expected reduction in the entropy of the predictive distribution $p(y|x)$ induced by the knowledge of the global maximum $y^*$:
\begin{equation}
    \alpha_{MES}(x) = I(y; y^*) = H(y|x) - \mathbb{E}_{y^*} \left[ H(y \mid x, y < y^*) \right]
    \label{eq:mes_def}
\end{equation}
where the expectation is taken over the posterior distribution of the global maximum $p(y^* | \mathcal{D})$. Due to the multimodal nature of the DPMM-GP posterior, neither term has a closed analytical form. We derive tractable approximations for both below.

\paragraph{Entropy of the Predictive Mixture}
The predictive distribution $p(y|x)$ is a Gaussian Mixture Model (GMM) with weights $w_k(x)$ and component parameters $\{\mu_{*,k}(x), \sigma_{*,k}^2(x)\}$ (Theorem 3.3). As the entropy of a GMM does not have a closed-form expression, we approximate it via \textit{moment matching}. We treat the entropy of the mixture as the entropy of a single Gaussian with the equivalent variance $\sigma_{mix}^2(x)$ derived in Theorem 3.4:
\begin{equation}
    H(y|x) \approx \frac{1}{2} \log \left( 2\pi e \sigma_{mix}^2(x) \right).
    \label{eq:mes_entropy_approx}
\end{equation}
This provides a coherent upper bound on the true entropy, as the Gaussian distribution maximizes entropy for a fixed variance.

\paragraph{Expected Conditional Entropy}
The second term requires computing the entropy of the predictive distribution truncated at $y^*$, averaged over samples of $y^*$. We approximate the distribution $p(y^*)$ via Monte Carlo sampling. To draw a sample $y^*_s$, we use a two-stage ancestral sampling procedure consistent with our generative model:
\begin{enumerate}
    \item \textbf{Regime Selection:} Sample a latent regime index $k \sim \text{Categorical}(\pi)$, where $\pi$ represents the global cluster weights.
    \item \textbf{Function Maximization:} Draw a sample from the global maximum of the $k$-th GP component. Following standard practice, we approximate this via a Gumbel distribution sample based on the discrete maximum of the GP on the training data.
\end{enumerate}

Given a sample $y^*_s$, the conditional distribution $p(y|x, y < y^*_s)$ is a truncated GMM. We approximate its entropy as the probabilistically weighted sum of the entropies of its truncated Gaussian components. Let $\gamma_{k,s} = \frac{y^*_s - \mu_{*,k}(x)}{\sigma_{*,k}(x)}$ be the standardized distance to the maximum for regime $k$. The entropy of the $k$-th Gaussian component truncated at $y^*_s$ is:
\begin{equation}
    H_k(y | y < y^*_s) = \frac{1}{2} \log(2\pi e \sigma_{*,k}^2) + \ln \Phi(\gamma_{k,s}) - \frac{1}{2} \frac{\gamma_{k,s} \phi(\gamma_{k,s})}{\Phi(\gamma_{k,s})}.
\end{equation}
Substituting this into Eq. (\ref{eq:mes_def}) and simplifying (noting that the constant variance terms cancel out in the differential information gain formulation), we arrive at the numerical estimator:
\begin{equation}
    \alpha_{MES}(x) \approx \frac{1}{S} \sum_{s=1}^{S} \sum_{k=1}^{K+1} w_k(x) \left[ \frac{\gamma_{k,s} \phi(\gamma_{k,s})}{2\Phi(\gamma_{k,s})} - \ln \Phi(\gamma_{k,s}) \right]
    \label{eq:mes_final}
\end{equation}
where $S$ is the number of Monte Carlo samples for $y^*$, and $\phi, \Phi$ are the standard normal PDF and CDF, respectively.

\subsection{Probability of Improvement (PI)}
\label{app:pi}

We extend the Probability of Improvement (PI) strategy \citep{kushner1964new} to the DPMM-GP framework. Standard PI seeks to maximize the probability that the function value at a candidate point $x$ exceeds the current best observation $f^+$ by some margin $\xi \geq 0$.

In the context of our mixture model, the predictive distribution is multimodal. Consequently, the probability of improvement is not merely a function of a single mean and variance, but a weighted combination of the improvement probabilities offered by each latent regime.

\begin{theorem}[DPMM-GP Probability of Improvement]
Let $f^+$ denote the current best observed value, and let $\xi \geq 0$ be a user-specified exploration parameter. The Probability of Improvement at input $x$ under the DPMM-GP posterior is the probability-weighted sum of the PI values for each constituent GP component:
\begin{equation}
    \alpha_{PI}(x) = \sum_{k=1}^{K+1} w_k(x) \cdot \Phi\left( \frac{\mu_{*,k}(x) - f^+ - \xi}{\sigma_{*,k}(x)} \right)
    \label{eq:pi_dpmm}
\end{equation}
where $w_k(x)$ are the spatially-modulated predictive weights defined in Eq.~\eqref{eq:regime_weights} (Proposition~\ref{prop:spatial_weights}), and $(\mu_{*,k}(x), \sigma_{*,k}(x))$ are the posterior mean and standard deviation of the $k$-th GP regime.
\end{theorem}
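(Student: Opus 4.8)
The plan is to mirror the proof of the Expected Improvement result (Theorem~\ref{thm:ei_dpmm}) almost verbatim, since Probability of Improvement is simply a different functional of the same posterior predictive mixture established in Theorem~\ref{thm:predictive}. I would begin by writing the target quantity as the posterior predictive probability of a single improvement event,
\[
    \alpha_{PI}(x) = p\big(f(x) > f^+ + \xi \mid x, \mathcal{D}\big),
\]
and then introduce the latent regime indicator $z_*$ as the variable to marginalize over. Applying the law of total probability conditioned on $z_*$ yields
\[
    \alpha_{PI}(x) = \sum_{k=1}^{K+1} p(z_* = k \mid x, \mathcal{D}) \cdot p\big(f(x) > f^+ + \xi \mid z_* = k\big),
\]
where the first factor is precisely the gating weight $w_k(x)$ defined in Eq.~\eqref{eq:regime_weights}.

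The next step evaluates the per-regime conditional probability. By the structure of the predictive mixture in Eq.~\eqref{eq:mixture_predictive}, conditioning on $z_* = k$ makes $f(x)$ Gaussian with mean $\mu_{*,k}(x)$ and variance $\sigma^2_{*,k}(x)$. Standardizing via $Z = (f(x) - \mu_{*,k}(x))/\sigma_{*,k}(x) \sim \mathcal{N}(0,1)$ gives
\[
    p\big(f(x) > f^+ + \xi \mid z_* = k\big) = 1 - \Phi\!\left(\frac{f^+ + \xi - \mu_{*,k}(x)}{\sigma_{*,k}(x)}\right) = \Phi\!\left(\frac{\mu_{*,k}(x) - f^+ - \xi}{\sigma_{*,k}(x)}\right),
\]
where the last equality uses the symmetry identity $1 - \Phi(-a) = \Phi(a)$ of the standard normal CDF. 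Substituting this back into the total-probability expansion produces the claimed closed form exactly.

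There is no genuine analytic obstacle here, as there was none in the EI case: the entire content is the conditioning decomposition plus one evaluation of a Gaussian tail probability. The only points warranting a sentence of justification are (i) identifying $p(z_* = k \mid x, \mathcal{D})$ with the gating weights $w_k(x)$, which is asserted as an approximation in the main text, so I would flag this as the same modeling approximation used throughout rather than an exact equality; and (ii) noting that the formula is stated for the latent function value $f(x)$ rather than the noisy observation $y_*$, matching the standard convention for PI. If a noise-aware version were desired, $\sigma_{*,k}^2$ would be replaced by $\sigma_{*,k}^2 + \sigma_{n,k}^2$, but I would keep the latent-function form to stay consistent with the EI derivation. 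Given the near-identical derivation, the proof can be stated compactly in three lines.
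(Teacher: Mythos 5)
Your proposal follows the paper's proof essentially verbatim: the same law-of-total-probability decomposition over the latent regime indicator $z_*$, the same identification of the conditional distribution as the Gaussian component $\mathcal{N}(\mu_{*,k},\sigma^2_{*,k})$, and the same evaluation of the Gaussian tail probability, with your two added remarks (the approximation in $w_k$ and the latent-versus-noisy convention) being sensible clarifications rather than deviations. The proof is correct and matches the paper's argument.
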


\begin{proof}
The acquisition function $\alpha_{PI}(x)$ is defined as the probability that the latent function value $f(x)$ exceeds the target $f^+ + \xi$:
\begin{equation}
    \alpha_{PI}(x) = \mathbb{P}[f(x) > f^+ + \xi].
\end{equation}
We proceed by marginalizing over the latent regime assignments $z_*$ for the test point $x$. Using the Law of Total Probability:
\begin{equation}
    \mathbb{P}[f(x) > f^+ + \xi] = \sum_{k=1}^{K+1} \mathbb{P}[f(x) > f^+ + \xi \mid z_* = k] \cdot p(z_* = k \mid x, \mathcal{D}).
\end{equation}
From Theorem 3.3, the conditional distribution of $f(x)$ given the assignment $z_* = k$ is a Gaussian Process posterior:
\begin{equation}
    p(f(x) \mid z_* = k) = \mathcal{N}(f(x) \mid \mu_{*,k}(x), \sigma_{*,k}^2(x)).
\end{equation}
The conditional probability of improvement for this specific Gaussian component is given by the standard PI formula:
\begin{align}
    \mathbb{P}[f(x) > f^+ + \xi \mid z_* = k] &= \int_{f^+ + \xi}^{\infty} \mathcal{N}(f \mid \mu_{*,k}(x), \sigma_{*,k}^2(x)) \, df \\
    &= \Phi\left( \frac{\mu_{*,k}(x) - (f^+ + \xi)}{\sigma_{*,k}(x)} \right).
\end{align}
Substituting the mixture weights $w_k(x)$ for $p(z_* = k \mid x, \mathcal{D})$ and the component probabilities back into the total probability sum yields Eq. (\ref{eq:pi_dpmm}).
\end{proof}

\subsection{Upper Confidence Bound (UCB)}
\label{app:ucb}

The Upper Confidence Bound (UCB) acquisition function \citep{srinivas2012gaussian} is widely used for its explicit management of the exploration-exploitation trade-off. Standard UCB is defined for a single Gaussian posterior; however, it extends naturally to the DPMM-GP by utilizing the mixture moments derived in Theorem 3.4.

Given the posterior predictive mixture with mean $\mu_{mix}(x)$ and variance $\sigma_{mix}^2(x)$, the Mixture UCB acquisition function is defined as:
\begin{equation}
    \alpha_{UCB}(x) = \mu_{mix}(x) + \sqrt{\beta_t} \cdot \sigma_{mix}(x)
\end{equation}
where $\beta_t$ is a time-dependent confidence parameter. A key property of RAMBO is that the variance term $\sigma_{mix}(x)$ encapsulates two distinct forms of uncertainty (Eq. 10):
\begin{itemize}
    \item \textbf{Intra-regime uncertainty:} The average aleatoric variance of the individual GP experts ($\sum w_k \sigma_{*,k}^2$).
    \item \textbf{Inter-regime disagreement:} The epistemic variance of the means across different regimes ($\mathbb{V}ar_Z[\mu]$).
\end{itemize}
Consequently, Mixture UCB drives exploration not only where individual GPs are uncertain, but also where the regime assignment itself is ambiguous, naturally targeting regime boundaries for structural refinement.

\subsection{Thompson Sampling (TS)}
\label{app:ts}

Thompson Sampling (TS) \citep{thompson1933likelihood} is a randomized strategy that selects the next query point by optimizing a sample drawn from the posterior. It naturally handles the hierarchical structure of the DPMM-GP via ancestral sampling.

To select the next query point $x_{new}$, we perform the following two-stage procedure:
\begin{enumerate}
    \item \textbf{Sample a Regime Assignment:} First, we sample a global regime index $k$ from the current categorical weights of the mixture components:
    \begin{equation}
        \hat{z} \sim \text{Categorical}(\pi_1, \dots, \pi_K, \pi_{new}).
    \end{equation}
    \item \textbf{Sample a Function Trajectory:} Conditioned on the chosen regime $\hat{z}$, we draw a continuous function realization $\tilde{f}(\cdot)$ from the corresponding Gaussian Process posterior $\mathcal{GP}(\mu_{*,\hat{z}}, \Sigma_{*,\hat{z}})$. In practice, this is approximated efficiently using Random Fourier Features (RFF).
    \item \textbf{Optimization:} The next query point is the global maximizer of the sampled function:
    \begin{equation}
        x_{new} = \arg\max_{x \in \mathcal{X}} \tilde{f}(x).
    \end{equation}
\end{enumerate}
As $t \to \infty$, the posterior probability of the true regime approaches 1, and TS asymptotically recovers the behavior of optimizing the correct underlying expert.

\subsection{Knowledge Gradient (KG)}
\label{app:kg}

The Knowledge Gradient (KG) acquisition function \citep{frazier2009knowledge} quantifies the expected one-step improvement in the global maximum of the posterior predictive mean. Unlike EI, which measures improvement over the best \textit{observation}, KG values the improvement in the \textit{model's estimate} of the optimum.

Let $\mu^*_{n} = \max_{x' \in \mathcal{X}} \mu_{mix, n}(x')$ denote the global maximum of the current mixture mean surface given dataset $\mathcal{D}_n$. If we were to effectively sample a candidate $(x, y)$, the dataset would evolve to $\mathcal{D}_{n+1} = \mathcal{D}_n \cup \{(x, y)\}$. This results in a new random posterior mean surface $\mu_{mix, n+1}(\cdot)$.

The KG acquisition value is defined as the expected increase in this surface maximum, marginalizing over the unknown outcome $y$ and the latent regime assignment of the new point:
\begin{equation}
    \alpha_{KG}(x) = \mathbb{E}_{y|x, \mathcal{D}_n} \left[ \max_{x' \in \mathcal{X}} \mu_{mix, n+1}(x' \mid x, y) - \mu^*_{n} \right].
    \label{eq:kg_def_full}
\end{equation}

\paragraph{Posterior Mean Update (The Fantasization Process)}
Computing $\mu_{mix, n+1}$ requires updating the DPMM-GP posterior with a "fantasy" observation $(x, y)$. In the exact inference limit, adding a point would require resampling all discrete regime assignments $z_{1:n}$. For computational tractability in the inner loop, we employ a \textit{local update approximation}: we assume the assignments of the existing $n$ points remain fixed, and we update the model based on the probabilistic assignment of the new point $x$.

The updated mixture mean at any test point $x'$ is given by:
\begin{equation}
    \mu_{mix, n+1}(x' \mid x, y) = \sum_{k=1}^{K+1} w_k^{(n+1)}(x') \cdot \mu_{*,k}^{(n+1)}(x').
\end{equation}

We compute the updated components as follows:
\begin{itemize}
    \item \textbf{Component GP Update:} For each regime $k$, the GP posterior is updated conditioning on the event that $(x, y)$ belongs to regime $k$. The updated mean function $\mu_{*,k}^{(n+1)}(x')$ follows standard GP recursive equations:
    \begin{equation}
        \mu_{*,k}^{(n+1)}(x') = \mu_{*,k}^{(n)}(x') + \frac{k_k(x', x)}{k_k(x, x) + \sigma_{n,k}^2} (y - \mu_{*,k}^{(n)}(x)).
    \end{equation}
    
    \item \textbf{Gating Weight Update:} The mixture weights $w_k(x')$ (Eq. 8) depend on the cluster counts $n_k$ and the local predictive variance. The fantasy point $(x, y)$ updates the effective count $n_k$ by the probability that $x$ belongs to $k$: $\hat{\gamma}_k(x) \propto w_k^{(n)}(x) \cdot \mathcal{N}(y \mid \mu_{*,k}^{(n)}(x), \sigma_{*,k}^{2(n)}(x))$. 
    The updated weights $w_k^{(n+1)}(x')$ are computed using the effective counts $n_k + \hat{\gamma}_k(x)$:
    \begin{equation}
        w_k^{(n+1)}(x') \propto \frac{n_k + \hat{\gamma}_k(x)}{n + 1 + \alpha} \cdot \exp\left(-\frac{1}{2} \log \sigma_{*,k}^{2(n+1)}(x')\right).
    \end{equation}
\end{itemize}

Since $\mu_{mix, n+1}(x')$ is non-convex, we evaluate Eq. (\ref{eq:kg_def_full}) via Monte Carlo integration with $M$ fantasy samples drawn from the current posterior mixture $y^{(m)} \sim \sum w_k(x) \mathcal{N}(\mu_{*,k}(x), \sigma_{*,k}^2(x))$:
\begin{equation}
    \alpha_{KG}(x) \approx \frac{1}{M} \sum_{m=1}^{M} \left( \max_{x' \in \mathcal{X}} \mu_{mix, n+1}(x' \mid x, y^{(m)}) - \mu^*_{n} \right).
\end{equation}
The inner maximization is solved via multi-start L-BFGS, utilizing the updated gradients of the mixture mean.

\subsection{Predictive Entropy Search (PES)}
\label{app:pes}

Predictive Entropy Search (PES) \citep{hernandez2014predictive} maximizes the mutual information between the observation $y$ and the \textit{location} of the global optimizer $x^*$.
\begin{equation}
    \alpha_{PES}(x) = H(y|x, \mathcal{D}_n) - \mathbb{E}_{x^*|\mathcal{D}_n} \left[ H(y \mid x, \mathcal{D}_n, x^*) \right].
\end{equation}

The first term is the entropy of the current posterior predictive mixture. As exact entropy for GMMs is intractable, we use the moment-matched Gaussian approximation (Theorem 3.4):
\begin{equation}
    H(y|x, \mathcal{D}_n) \approx \frac{1}{2} \log \left( 2\pi e \sigma_{mix}^2(x) \right).
\end{equation}
The second term is the expected entropy of $y$ conditioned on the constraint that $x^*$ is the global maximizer. This constraint implies $f(x^*) \ge f(x)$ for all $x$. We approximate this intractable expectation using a two-step procedure:

We draw samples of the global optimizer location utilizing the hierarchical generative process of the DPMM-GP. To generate a sample $x^*_s$:
\begin{itemize}
    \item Sample a regime assignment $\hat{z} \sim \text{Categorical}(\{w_k(x)\}_{k=1}^K)$.
    \item Conditioned on $\hat{z}$, sample a function path $\tilde{f} \sim \mathcal{GP}(\mu_{*,\hat{z}}, \Sigma_{*,\hat{z}})$ using Random Fourier Features (RFF) to ensure differentiability.
    \item Optimize the sampled path: $x^*_s = \arg\max_{x \in \mathcal{X}} \tilde{f}(x)$.
\end{itemize}

Conditioning on $x^*_s$ imposes a complex set of gradient and value constraints on the random variable $y(x)$. For computational feasibility, we approximate the conditioning $p(y \mid x, x^*_s)$ by the necessary condition $y(x) < y(x^*_s) \approx \tilde{f}(x^*_s)$. This effectively truncates the predictive distribution at the sampled maximum value $f_{max}^s = \tilde{f}(x^*_s)$.

The conditional entropy is approximated as the entropy of the mixture distribution truncated at $f_{max}^s$. For a single component $k$, the truncated variance $v_k$ given an upper truncation point $\beta$ is:
\begin{equation}
    v_k(\beta) = \sigma_{*,k}^2(x) \left[ 1 - \delta_k(\beta) (\delta_k(\beta) + \alpha_k(\beta)) \right]
\end{equation}
where $\alpha_k(\beta) = \frac{\beta - \mu_{*,k}(x)}{\sigma_{*,k}(x)}$ and $\delta_k(\beta) = \frac{\phi(\alpha_k(\beta))}{\Phi(\alpha_k(\beta))}$.

Aggregating over the mixture weights $w_k(x)$, the expected conditional entropy is approximated by:
\begin{equation}
    \mathbb{E}_{x^*}[H(y|x, x^*)] \approx \frac{1}{S} \sum_{s=1}^S \frac{1}{2} \log \left( 2\pi e \sum_{k=1}^K w_k(x) v_k(f_{max}^s) \right).
\end{equation}
This formulation directs sampling to regions where the outcome $y$ would most significantly constrain the possible locations of $x^*$.

\clearpage

\section{Algorithms}

\subsection{Collapsed Gibbs Sampler for DPMM-GP}
\label{app:gibbs}

Algorithm~\ref{alg:gibbs} presents the collapsed Gibbs sampler used for posterior inference in DPMM-GP. The key insight enabling efficient sampling is the analytical marginalization of latent function values $\mathbf{f}$, reducing the state space to only cluster assignments $\mathbf{z}$ and hyperparameters $\Theta$.

The sampler iterates over each observation, temporarily removing it from its current cluster (lines 6--7) and computing reassignment probabilities. For existing clusters, the probability is proportional to the cluster size weighted by the GP predictive likelihood (line 8); for a new cluster, it is proportional to $\alpha$ weighted by the prior predictive density under $G_0$ (line 9). This follows directly from the Chinese Restaurant Process. After reassignment (lines 10--12), hyperparameters for each active cluster are updated either via Metropolis-Hastings or gradient ascent on the marginal likelihood (lines 14--15).

\begin{algorithm}[h]
	\caption{DPMM-GP Collapsed Gibbs Sampler}
	\label{alg:gibbs}
	\begin{algorithmic}[1]
		\STATE {\bfseries Input:} Data $\mathcal{D}$, iterations $T$, conc. $\alpha$, base $G_0$
		\STATE Initialize $z_i$ randomly for $i=1 \dots n$
		\STATE Initialize $\theta_k \sim G_0$ for initial clusters
		\FOR{$t=1$ {\bfseries to} $T$}
		\FOR{$i=1$ {\bfseries to} $n$}
		\STATE Remove $i$ from current cluster: $n_{z_i} \leftarrow n_{z_i} - 1$
		\STATE If cluster empty, remove $\theta_{z_i}$
		\STATE Compute probs for existing clusters: $p_k \propto n_{k,-i} \cdot \mathcal{N}(y_i \mid \mathcal{D}_{k,-i}, \theta_k)$ \hfill$\triangleright$ Excluding observation $i$
		\STATE Compute prob for new cluster: $p_{\text{new}} \propto \alpha \cdot p(y_i \mid \mathbf{x}_i, G_0)$
		\STATE Sample $z_i \sim \text{Categorical}(p_1, \dots, p_K, p_{\text{new}})$
		\STATE If $z_i = \text{new}$, draw $\theta_{\text{new}} \sim p(\theta \mid y_i, G_0)$
		\STATE Add $i$ to new cluster: $n_{z_i} \leftarrow n_{z_i} + 1$
		\ENDFOR
		\FOR{each active cluster $k$}
		\STATE Update $\theta_k$ via MH or Gradient Ascent
		\ENDFOR
		\ENDFOR
	\end{algorithmic}
\end{algorithm}

\clearpage

\subsection{Optimization Loop of \ours}
\label{app:bo_loop}

\ours integrates the DPMM-GP surrogate with a sequential acquisition strategy, as outlined in Algorithm~\ref{alg:bo_loop}. The procedure begins by initializing 20 identical quasirandom Sobol points to ensure space-filling coverage. At each iteration, we perform collapsed Gibbs inference with warm starts from the previous assignments and hyperparameters to reduce burn-in overhead. The mixture Expected Improvement is then maximized via multi-start L-BFGS-B, initialized from uniform random samples, regime centroids, and perturbations around the current best solution. Finally, we update the dataset with the new observation and prune empty or low-weight regimes to maintain computational efficiency.

\begin{algorithm}[!htbp]
\caption{RAMBO: Regime-Adaptive Mixture Bayesian Optimization}
\label{alg:bo_loop}
\begin{algorithmic}[1]
\REQUIRE Search space $\mathcal{X}$, budget $T_{\max}$, initial size $n_{\text{init}}$
\REQUIRE Base concentration $\alpha_0$, MCMC samples $S$, restarts $R$, prune threshold $\epsilon$
\ENSURE Best solution $\mathbf{x}^*$
\STATE \textbf{// Phase 1: Initialization}
\STATE $\mathcal{D}_0 \gets \textsc{Sobol}(\mathcal{X}, n_{\text{init}})$
\STATE Initialize $\Theta^{(0)}$ and $\mathbf{z}^{(0)}$ randomly
\STATE $\mathbf{x}^* \gets \arg\max_{(\mathbf{x},y) \in \mathcal{D}_0} y$
\FOR{$t = 1$ \textbf{to} $T_{\max}$}
    \STATE \textbf{// Phase 2: Adaptive $\alpha$-Scheduling}
    \STATE $\alpha_t \gets \alpha_0 \cdot \frac{\sqrt{t}}{\log(t + e)}$ \hfill $\triangleright$ Log-Sqrt Schedule (Eq.~\ref{eq:log_sqrt_schedule})
    \STATE \textbf{// Phase 3: Inference (Warm Start)}
    \STATE Run Collapsed Gibbs (Alg.~\ref{alg:gibbs}) for $S$ steps with $\alpha_t$, initialized from $(\Theta^{(t-1)}, \mathbf{z}^{(t-1)})$
    \STATE Collect post-burn-in posterior samples $\{(\Theta^{(s)}, \mathbf{z}^{(s)})\}_{s=1}^{S}$ \hfill$\triangleright$ Used for acquisition only
    \STATE Compute mixture moments $\mu_{\text{mix}}(\mathbf{x}), \sigma_{\text{mix}}(\mathbf{x})$ and weights $w_k(\mathbf{x})$ as the sample-mean of per-sample (Thm.~\ref{thm:mixture_stats}) over $s = 1, \dots, S$
    \STATE \textbf{// Phase 4: Acquisition Optimization}
    \STATE Define $\alpha_{\text{EI}}(\mathbf{x})$ per Eq.~(\ref{eq:ei_dpmm})
    \STATE Generate start points $\mathcal{S}_{\text{init}} \gets \{\textsc{Uniform}(\mathcal{X})\} \cup \{\textsc{Centroids}(\mathbf{z})\} \cup \{\mathbf{x}^* + \boldsymbol{\delta}\}$
    \STATE $\mathbf{x}_{\text{new}} \gets \arg\max_{\mathbf{x} \in \mathcal{S}_{\text{init}}} \textsc{L-BFGS-B}(\alpha_{\text{EI}}(\mathbf{x}))$
    \STATE \textbf{// Phase 5: Evaluation \& Update}
    \STATE $y_{\text{new}} \gets f(\mathbf{x}_{\text{new}}) + \varepsilon$
    \STATE $\mathcal{D}_t \gets \mathcal{D}_{t-1} \cup \{(\mathbf{x}_{\text{new}}, y_{\text{new}})\}$
    \IF{$y_{\text{new}} > f(\mathbf{x}^*)$}
        \STATE $\mathbf{x}^* \gets \mathbf{x}_{\text{new}}$
    \ENDIF
    \STATE \textbf{// Phase 6: Maintenance}
    \STATE Update $(\Theta^{(t)}, \mathbf{z}^{(t)})$ using the last MCMC sample
    \STATE Prune regimes where $\sum_i \mathbb{I}(z_i = k) < 1$ or $\pi_k < \epsilon$
\ENDFOR
\end{algorithmic}
\end{algorithm}

\clearpage

\section{Benchmarks}

\subsection{Levy Function}
\label{app:data:levy}

The Levy function is characterized by a rugged surface with a high density of local minima, designed to challenge an optimizer's ability to resolve fine-grained structures. In the $d$-dimensional case, the function is defined as:
\begin{equation}
	f(\mathbf{x}) = \sin^2(\pi w_1) + \sum_{i=1}^{d-1} (w_i - 1)^2 [1 + 10\sin^2(\pi w_i + 1)] + (w_d - 1)^2 [1 + \sin^2(2\pi w_d)]\notag
\end{equation}
where $w_i = 1 + \frac{x_i - 1}{4}$ for $i=1, \dots, d$. 

For our 2D visualization and validation, we consider $x_i \in [-10, 10]$. The landscape exhibits intense high-frequency oscillations (as shown in Figure~\ref{fig:levy_2d}), with the global minimum $f(\mathbf{x}^*) = 0$ located at the interior point $\mathbf{x}^* = (1, \dots, 1)$.

\begin{figure}[htbp]
	\centering
	\includegraphics[width=0.75\textwidth]{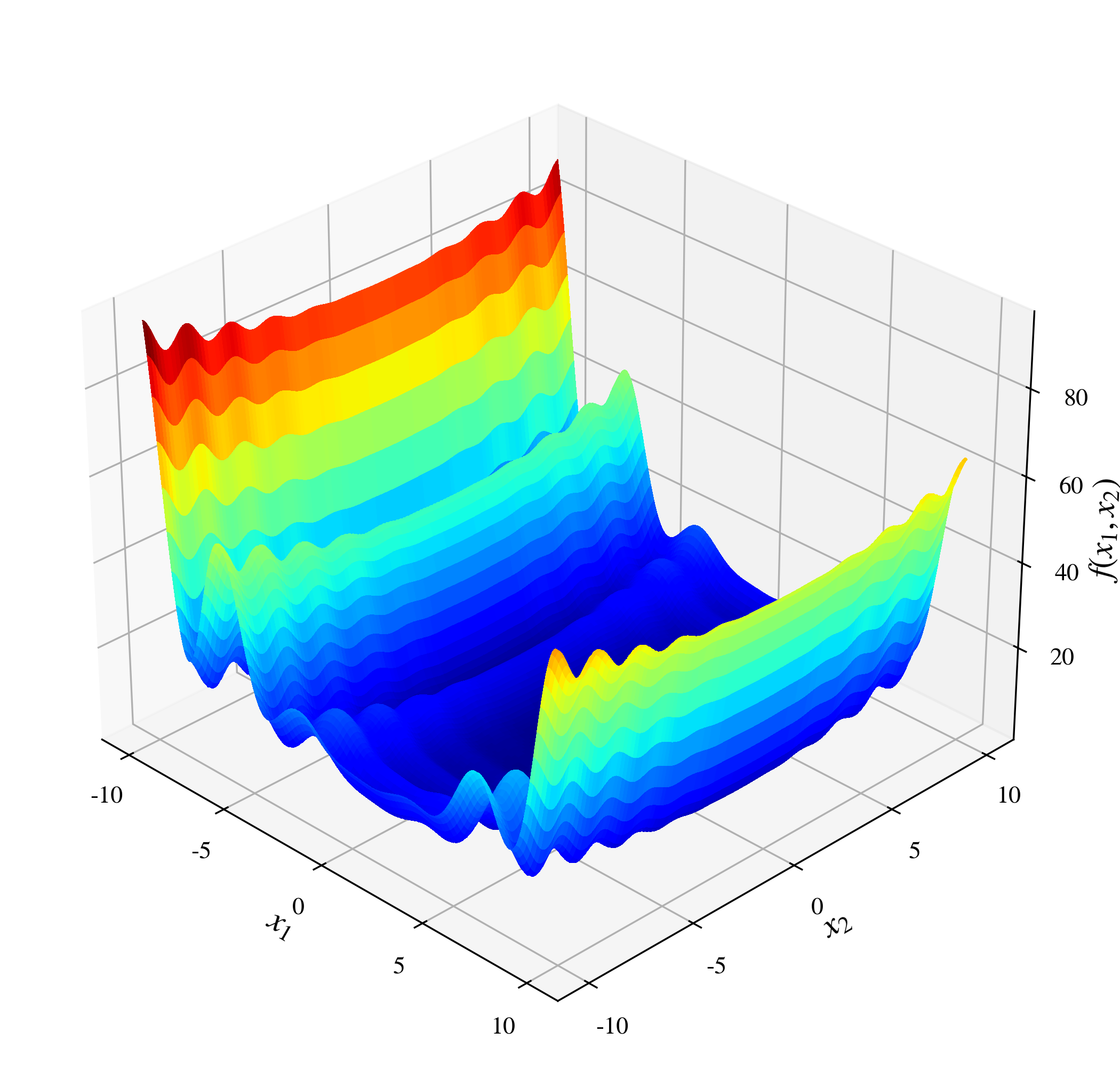}
	\caption{3D landscape of the 2D Levy function, illustrating the dense clusters of local minima and rugged surface topology.}
	\label{fig:levy_2d}
\end{figure}

\subsection{Schwefel Function}
\label{app:data:schwefel}

The Schwefel function presents a deceptive landscape where the global optimum is geometrically isolated near the domain boundaries. This structure is particularly difficult for stationary Gaussian Processes, as it penalizes methods that bias their search toward the central region of the domain. The mathematical representation is given by:
\begin{equation}
	f(\mathbf{x}) = 418.9829d - \sum_{i=1}^{d} x_i \sin(\sqrt{|x_i|})\notag
\end{equation}
defined over the hypercube $x_i \in [-500, 500]$. As visualized in Figure~\ref{fig:Schwefel_2d}, the function contains numerous sub-optimal peaks and basins. 

\begin{figure}[htbp]
	\centering
	\includegraphics[width=0.75\textwidth]{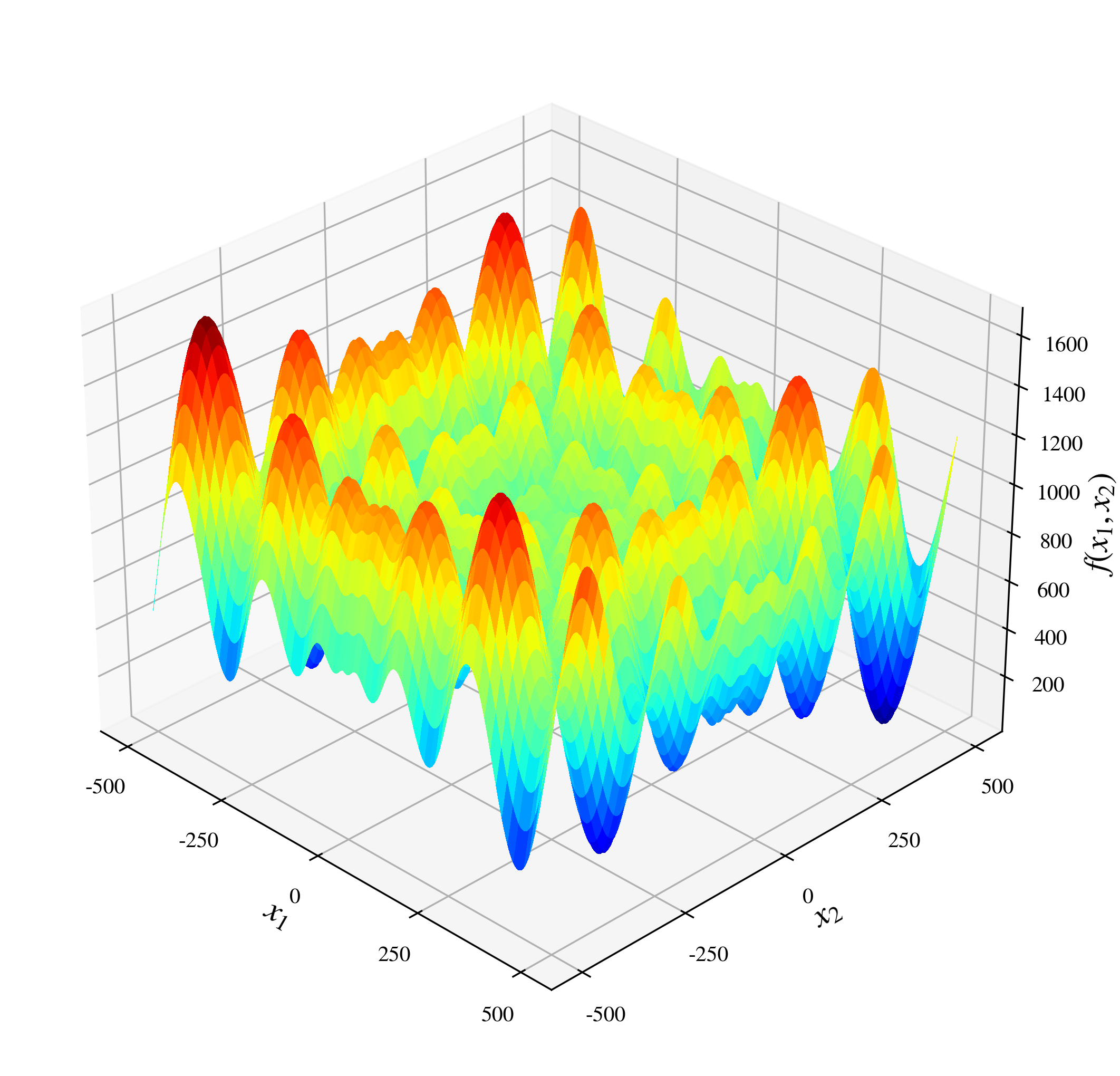}
	\caption{3D landscape of the 2D Schwefel function, showcasing the deceptive local optima and the isolated nature of the global minimum.}
	\label{fig:Schwefel_2d}
\end{figure}

\subsection{Molecular Conformer Optimization (12D)}
\label{app:data:torsion}

The Molecular Conformer Optimization task serves as a high-dimensional, real-world benchmark to evaluate the scalability of \ours in pathological multi-modal landscapes. This problem involves finding the global minimum energy configuration of a pentadecane chain ($C_{15}H_{32}$), where the state space is defined by $d=12$ internal dihedral (torsion) angles $\boldsymbol{\theta} = (\theta_1, \dots, \theta_{12})$.

\paragraph{Mathematical Representation}
The objective is to minimize the total potential energy $E(\boldsymbol{\theta})$ of the conformer. We model this energy using the \textbf{MMFF94 force field}~\citep{halgren1996merck}, which can be decomposed into torsional contributions and non-bonded interactions:
\begin{equation}
    E(\boldsymbol{\theta}) = \sum_{i=1}^{12} V_{tors}(\theta_i) + E_{non-bonded}(\mathbf{R})
\end{equation}
where $V_{tors}(\theta_i)$ typically follows a periodic potential:
\begin{equation}
    V_{tors}(\theta) = \sum_{n=1}^{3} \frac{V_n}{2} [1 + \cos(n\theta - \gamma_n)]
\end{equation}

The rotation around each $C-C$ bond favors the $180^\circ$ (anti) and $\pm 60^\circ$ (gauche) orientations. This discrete preference induces a combinatorial explosion of $3^{12} = 531,441$ potential conformational minima. These stable states are separated by high-energy steric barriers, creating a landscape characterized by sharp transitions and high-frequency oscillations.

\paragraph{Implementation Details}
To evaluate a configuration $\boldsymbol{\theta}$, we construct the molecular backbone using \textbf{RDKit}~\citep{landrum2013rdkit}. Crucially, to ensure a realistic energy landscape, we perform a \textbf{constrained geometry optimization}: the target dihedral angles $\boldsymbol{\theta}$ are fixed using constraints, while all other degrees of freedom (bond lengths, angles, and Cartesian coordinates $\mathbf{R}$) are relaxed to minimize the energy. This formulation prevents unrealistic steric clashes characteristic of rigid-body rotation and ensures the optimizer navigates a chemically valid potential energy surface.

\paragraph{Problem Visualization}
Figure~\ref{fig:torsion_diagram} illustrates the geometric definition of a dihedral angle within the molecular chain, defined by a sequence of four consecutively bonded atoms (labeled Atoms 1--4). The rotation occurs around the central bond connecting Atom 2 and Atom 3, which serves as the rotation axis. This rotational motion determines the relative orientation of two intersecting planes: Plane 1, defined by Atoms 1, 2, and 3, and Plane 2, defined by Atoms 2, 3, and 4. The dihedral angle is the angle between these two planes.

\begin{figure}[htbp]
	\centering
	\includegraphics[width=0.75\textwidth]{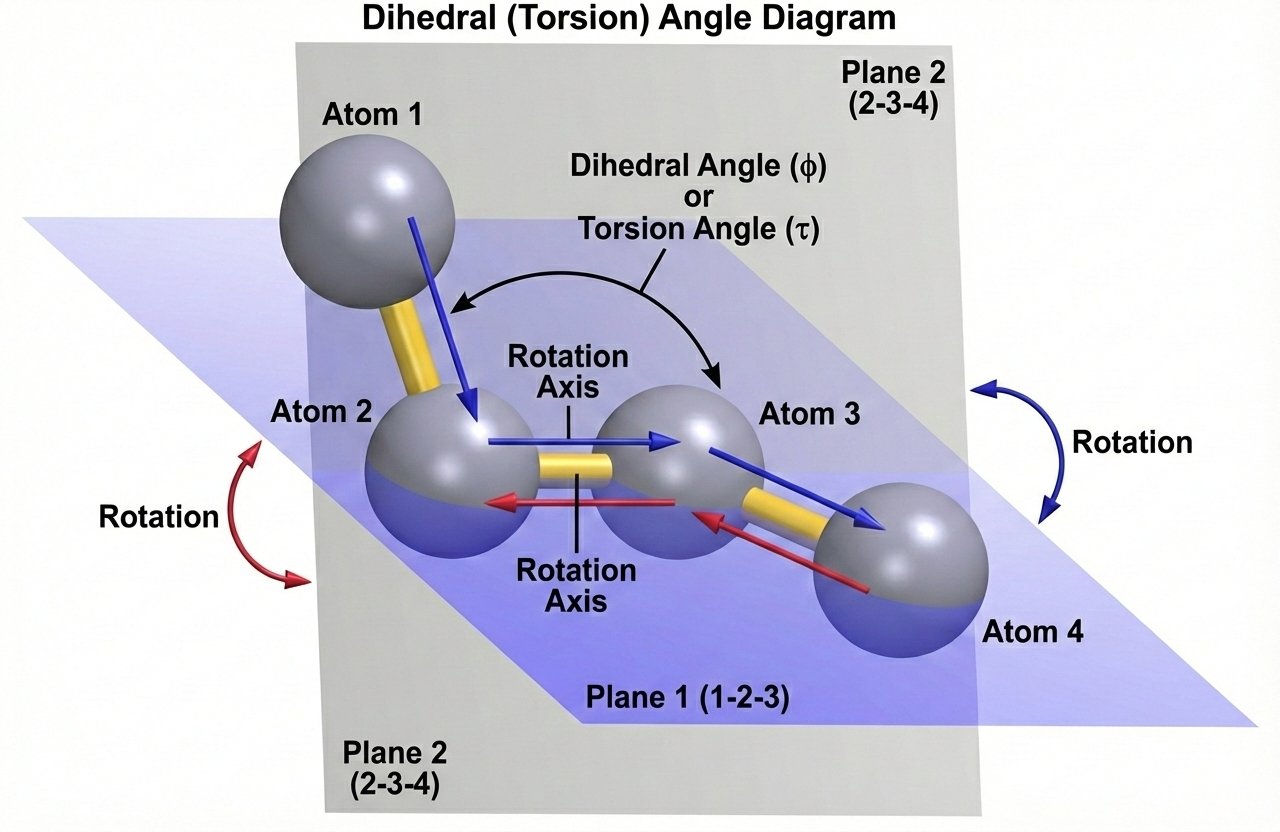}
	\caption{Dihedral (Torsion) Angle Diagram: The optimization space consists of 12 such rotational degrees of freedom, where steric hindrance between atoms $1$ and $4$ creates complex energy barriers.}
	\label{fig:torsion_diagram}
\end{figure}

\subsection{Virtual Screening for Drug Discovery (Cancer-6T2W, 50D)}
\label{app:data:drug}

We evaluate RAMBO on a drug discovery benchmark targeting the Colony-Stimulating Factor 1 Receptor (CSF1R) kinase domain (PDB ID: 6T2W)~\citep{goldberg2020discovery}. CSF1R is a type III receptor tyrosine kinase that regulates tumor-associated macrophage differentiation and survival; its overexpression correlates with poor prognosis across multiple cancer types including breast, ovarian, and lung carcinomas~\citep{wen2023csf1r}. Pharmacological inhibition of CSF1R has emerged as a promising immunotherapeutic strategy, with pexidartinib (PLX3397) receiving FDA approval in 2019 for tenosynovial giant cell tumor~\citep{tap2015structure}.

The benchmark dataset, derived from the DrugImprover framework~\citep{liu2023drugimprover}, comprises 1 million small molecules sampled from the ZINC15 database~\citep{sterling2015zinc}, each annotated with docking scores computed using the OpenEye FRED software~\citep{mcgann2012fred}. The BO objective is to \emph{minimize} the docking score (more negative values indicate stronger predicted binding affinity). To enable continuous optimization, we represent molecules via 2048-bit Morgan fingerprints~\citep{rogers2010extended} compressed to $d=50$ dimensions using Principal Component Analysis, following standard practice in latent-space molecular optimization~\citep{gomez2018automatic}.

This benchmark poses two challenges characteristic of real-world drug discovery: (1) \textbf{high dimensionality}---the 50D latent space necessitates efficient exploration strategies; and (2) \textbf{multi-regime structure}---distinct molecular scaffolds (e.g., different ring systems, functional groups, or pharmacophores) occupy disjoint regions of the latent space with fundamentally different structure-activity relationships (SAR). A single stationary GP cannot capture these scaffold-dependent landscapes, as binding affinity varies non-smoothly across chemical families. RAMBO's regime-adaptive mechanism naturally partitions the chemical space into scaffold-specific clusters, enabling locally accurate surrogate modeling within each chemical series.

\subsection{Nuclear Fusion Reactor Design (ConStellaration, 80D)}
\label{app:data:constellaration}

We evaluate RAMBO on the ConStellaration benchmark~\citep{cadena2025constellaration}, a recently released dataset and optimization challenge for quasi-isodynamic (QI) stellarator design developed by Proxima Fusion in collaboration with Hugging Face. Stellarators are magnetic confinement devices that represent a promising path toward steady-state, disruption-free fusion energy~\citep{pedersen2020stellarator, helander2014theory}. Unlike tokamaks, stellarators rely entirely on external electromagnetic coils to confine the plasma, avoiding current-driven instabilities but requiring complex three-dimensional magnetic field geometries that must be carefully optimized~\citep{goodman2023constructing}.

\textbf{Dataset and Representation.} The ConStellaration dataset comprises approximately 158,000 QI-like stellarator plasma boundary configurations, each paired with ideal magnetohydrodynamic (MHD) equilibria computed using VMEC++~\citep{hirshman1986steepest} and associated performance metrics. The plasma boundary is parameterized by a truncated Fourier series in cylindrical coordinates $(R, Z)$:
\begin{equation}
	R(\theta, \phi) = \sum_{m,n} R_{mn} \cos(m\theta - n N_{\text{fp}} \phi), \quad
	Z(\theta, \phi) = \sum_{m,n} Z_{mn} \sin(m\theta - n N_{\text{fp}} \phi),\notag
\end{equation}
where $\theta$ and $\phi$ are poloidal and toroidal angles, and $N_{\text{fp}}$ is the number of field periods. Assuming stellarator symmetry (i.e., $R(\theta, \phi) = R(-\theta, -\phi)$ and $Z(\theta, \phi) = -Z(-\theta, -\phi)$) and fixing the major radius $R_{0,0} = 1$, the optimization problem has $d = 80$ degrees of freedom corresponding to the Fourier coefficients $\{R_{mn}, Z_{mn}\}$.

\textbf{Optimization Objective.} We adopt the ``simple-to-build QI stellarator'' benchmark, which seeks to maximize the quasi-isodynamic quality metric $Q_{\text{QI}}$ while satisfying constraints on aspect ratio, rotational transform, and coil complexity. The QI property ensures that trapped particle orbits have vanishing average radial drift, which is critical for minimizing neoclassical transport and eliminating bootstrap currents that can destabilize the plasma~\citep{helander2009bootstrap}. Formally, the objective combines multiple physics targets including: (1) minimization of the effective ripple $\epsilon_{\text{eff}}$ that governs neoclassical transport; (2) enforcement of poloidally closed $|\mathbf{B}|$ contours characteristic of QI fields; and (3) penalization of high-curvature boundary shapes that would require complex coils to reproduce.

\textbf{Multi-Regime Structure.} Stellarator design exhibits highly nonlinear physics with abrupt transitions between qualitatively different magnetic field topologies. Small perturbations in boundary shape can trigger transitions between nested flux surfaces and magnetic islands, between regions of good and poor particle confinement, or between MHD-stable and unstable configurations. The optimization landscape is thus characterized by disconnected basins corresponding to fundamentally different plasma geometries---configurations with different numbers of field periods (1, 2, 3, 4, or 5 in the dataset), different magnetic well depths, and different elongation profiles occupy distinct regions of the Fourier coefficient space with incommensurable local curvature. This patchy landscape, where smooth regions of high confinement quality are interspersed with sharp transitions to unstable configurations, exemplifies the multi-regime structure that RAMBO is designed to capture. A single stationary GP with global hyperparameters cannot simultaneously model the smooth variation within each topology class and the abrupt transitions between them.

\textbf{Computational Cost.} Each function evaluation requires solving the 3D ideal-MHD equilibrium equations via VMEC++, which takes $\mathcal{O}$(seconds to minutes) depending on resolution, making this a genuinely expensive black-box optimization problem well-suited for Bayesian optimization.

\clearpage

\section{Additional Experimental Results}
\label{app:extra_experiments}

This appendix collects the additional empirical analyses promised in the rebuttal: computational overhead, ablation of adaptive $\alpha$-scheduling, comparison against a recently proposed vanilla-BO baseline, regression-quality evaluation of the DPMM-GP surrogate independent of the BO loop, and per-regime hyperparameter diagnostics that illustrate the regime structure RAMBO discovers.

\subsection{Computational Overhead (Wall-Clock Time)}
\label{app:wallclock}

Table~\ref{tab:wallclock} reports per-iteration wall-clock time on the Levy-6D benchmark, measured on a single CPU. RAMBO's collapsed Gibbs sampler is the dominant cost—the per-iteration surrogate-fitting time is $32.27$~seconds, considerably higher than SAASBO's $18.10$~seconds and an order of magnitude higher than TuRBO. We emphasize, however, that these numbers measure only \emph{surrogate fitting} and do \emph{not} include the function evaluation cost. For the scientific design problems targeted by RAMBO (molecular conformer optimization with MMFF94 force field, drug discovery with docking simulation, stellarator equilibria via VMEC++), a single function evaluation typically takes seconds to minutes, so the surrogate overhead is negligible relative to the cost of the objective. The improved sample efficiency of RAMBO—reaching high-quality solutions with substantially fewer evaluations—therefore translates directly into lower total wall-clock time on the regime targeted by the method. Reducing the MCMC overhead via variational approximations remains a promising direction for future work.

\begin{table}[h]
\centering
\caption{Per-iteration wall-clock time (Levy-6D, single CPU, surrogate fitting only).}
\label{tab:wallclock}
\small
\begin{tabular}{lc}
\toprule
Method & Time/iter (sec) \\
\midrule
SMAC        & 0.140 \\
Vanilla BO  & 0.290 \\
Bounce      & 0.375 \\
TuRBO       & 0.542 \\
ALEBO       & 0.545 \\
BAxUS       & 0.738 \\
SGP         & 1.979 \\
HEBO        & 2.665 \\
COMBO       & 7.383 \\
SAASBO      & 18.095 \\
\textbf{RAMBO} & \textbf{32.269} \\
\bottomrule
\end{tabular}
\end{table}

\subsection{Ablation: Adaptive $\alpha$-Scheduling}
\label{app:alpha_ablation}

Table~\ref{tab:alpha_ablation} compares the Log-Sqrt adaptive schedule against fixed $\alpha$ values on Levy-6D. The optimal fixed $\alpha$ varies across benchmarks (e.g., $\alpha=0.5$ is the best fixed choice on Levy-6D, but $\alpha=1.0$ would be preferred on some other landscapes), and poor choices induce instability—$\alpha=2.0$ has a standard error of $0.578$, more than six times that of the adaptive schedule. The Log-Sqrt schedule provides a principled, benchmark-independent default that is competitive with the best fixed choice and avoids the catastrophic-failure mode of over-large $\alpha$.

\begin{table}[h]
\centering
\caption{Final-iteration best objective (Levy-6D, lower is better) under fixed vs. adaptive $\alpha$. Mean $\pm$ standard error over 5 seeds.}
\label{tab:alpha_ablation}
\small
\begin{tabular}{lc}
\toprule
Setting & Final best (Levy-6D) \\
\midrule
Fixed $\alpha = 0.5$  & $-0.319 \pm 0.063$ \\
Fixed $\alpha = 1.0$  & $-0.276 \pm 0.053$ \\
Fixed $\alpha = 2.0$  & $-0.653 \pm 0.578$ \\
Fixed $\alpha = 5.0$  & $-0.403 \pm 0.154$ \\
\textbf{Adaptive (Log-Sqrt)} & $\mathbf{-0.238 \pm 0.094}$ \\
\bottomrule
\end{tabular}
\end{table}

\subsection{Comparison Against Vanilla BO with Dimension-Dependent Length-Scale Priors}
\label{app:vanilla_bo}

Recent work~\citep{hvarfner2024vanilla} suggests that vanilla BO with dimension-dependent length-scale priors can rival specialized high-dimensional methods. Table~\ref{tab:vanilla_bo} reports a head-to-head comparison on all seven benchmarks. Vanilla BO closes some of the gap to specialized methods on smooth, low-dimensional landscapes but degrades significantly on multi-regime objectives (Levy-10D, Schwefel-6D/10D, Molecular Conformer, ConStellaration), where RAMBO outperforms by an order of magnitude. This confirms that, while priors are an effective remedy for high-dimensionality alone, they do not address the multi-regime structure that RAMBO is designed for.

\begin{table}[h]
\centering
\caption{Final-iteration best objective (mean $\pm$ standard error over 5 seeds). \emph{Lower is better} for Levy, Schwefel, Molecular Conformer; \emph{higher is better} for Drug Discovery and ConStellaration.}
\label{tab:vanilla_bo}
\small
\begin{tabular}{lccc}
\toprule
Benchmark & Vanilla BO~\citep{hvarfner2024vanilla} & \textbf{RAMBO} & SGP \\
\midrule
Levy-6D            & $-0.870 \pm 0.089$       & $\mathbf{-0.238 \pm 0.094}$    & $-1.154 \pm 0.519$ \\
Levy-10D           & $-17.009 \pm 11.698$     & $\mathbf{-2.095 \pm 0.606}$    & $-4.060 \pm 0.446$ \\
Schwefel-6D        & $-1280.188 \pm 167.773$  & $\mathbf{-495.102 \pm 107.224}$ & $-1192.102 \pm 69.720$ \\
Schwefel-10D       & $-2570.130 \pm 0.000$    & $\mathbf{-1420.399 \pm 331.557}$ & $-2555.652 \pm 22.865$ \\
Molecular Conformer & $-48.285 \pm 3.372$     & $\mathbf{-8.051 \pm 1.675}$    & $-66.740 \pm 20.267$ \\
Drug Discovery     & $12.009 \pm 0.482$       & $\mathbf{13.309 \pm 0.514}$    & $12.501 \pm 0.000$ \\
ConStellaration    & $0.120 \pm 0.081$        & $\mathbf{0.227 \pm 0.026}$     & $0.013 \pm 0.000$ \\
\bottomrule
\end{tabular}
\end{table}

\subsection{Surrogate Regression Quality (Independent of the BO Loop)}
\label{app:regression_quality}

To disentangle surrogate modeling from acquisition design, we evaluate the DPMM-GP surrogate as a stand-alone regressor on three synthetic targets with explicit multi-regime structure and two UCI tabular regression benchmarks. We report 5-fold cross-validated RMSE.

\textbf{Synthetic targets.}
\begin{itemize}[leftmargin=*, itemsep=0pt, topsep=0pt]
    \item \textbf{Piecewise-Smooth ($d=5$):} two regimes split by $x_0 = 0$. Low-frequency $y = 2\sin(0.5 x_0)$ for $x_0 < 0$, and a rough high-frequency regime with $\varepsilon \sim \mathcal{N}(0, 0.05^2)$ for $x_0 \ge 0$.
    \item \textbf{Heteroscedastic ($d=4$):} identical functional form $y = \sin(x_0) + 0.5 x_1 + 0.3 x_3$, but drastically different noise: $\sigma = 0.02$ for $x_0 > 0$ vs. $\sigma = 0.5$ for $x_0 \le 0$.
    \item \textbf{Multi-Scale ($d=6$):} high-frequency oscillations in the interior $|x_0| < 1.5$ and a smooth quadratic exterior, mimicking the multi-regime scientific landscapes RAMBO targets.
\end{itemize}

\begin{table}[h]
\centering
\caption{5-fold CV RMSE (300 training samples on synthetic targets; UCI dataset sizes shown). Lower is better.}
\label{tab:regression_rmse}
\small
\begin{tabular}{lccccc}
\toprule
Method & Piecewise-Smooth & Heteroscedastic & Multi-Scale & Diabetes (442) & Energy (768) \\
\midrule
SGP     & $0.4036 \pm 0.0724$ & $0.4467 \pm 0.0723$ & $0.6922 \pm 0.1095$ & $56.750 \pm 3.013$ & $1.436 \pm 0.398$ \\
MoE-GP  & $0.3587 \pm 0.0230$ & $0.4282 \pm 0.0758$ & $0.5679 \pm 0.0383$ & $56.522 \pm 2.311$ & $1.819 \pm 1.407$ \\
\textbf{DPMM-GP} & $\mathbf{0.3564 \pm 0.0260}$ & $\mathbf{0.3744 \pm 0.0482}$ & $\mathbf{0.5629 \pm 0.0394}$ & $\mathbf{56.171 \pm 3.094}$ & $\mathbf{0.957 \pm 0.045}$ \\
\bottomrule
\end{tabular}
\end{table}

Across all five targets, the DPMM-GP surrogate matches or exceeds both a standard SGP and a finite Mixture-of-Experts GP, with the largest gains on heteroscedastic and multi-scale landscapes—precisely the regime where RAMBO's regime-adaptive mechanism is intended to operate. The result confirms that the BO improvements reported in the main text are not an artifact of acquisition design alone.

\subsection{Per-Regime Hyperparameters Discovered by RAMBO}
\label{app:per_regime_hyperparams}

To illustrate the regime structure RAMBO discovers, Tables~\ref{tab:per_regime_schwefel} and \ref{tab:per_regime_molecule} report the per-regime kernel hyperparameters at the final iteration on Schwefel-10D and Molecular Conformer Optimization, alongside the global hyperparameters fit by a standard SGP on the same data.

\begin{table}[h]
\centering
\caption{Schwefel-10D: per-regime hyperparameters discovered by RAMBO vs.\ global SGP hyperparameters (final iteration, $n=219$).}
\label{tab:per_regime_schwefel}
\small
\begin{tabular}{lcccc}
\toprule
Regime & $\ell_k$ & $\sigma^2_{f,k}$ & $\sigma^2_{n,k}$ & \#Points \\
\midrule
1 & $0.326$  & $0.365$  & $0.003$ & $159$ \\
2 & $2.388$  & $1.560$  & $0.004$ & $59$  \\
3 & $0.376$  & $0.166$  & $0.006$ & $1$   \\
\midrule
SGP (global) & $241.144$ & $352518.696$ & $269153.624$ & $219$ \\
\bottomrule
\end{tabular}
\end{table}

\begin{table}[h]
\centering
\caption{Molecular Conformer: per-regime hyperparameters discovered by RAMBO vs.\ global SGP hyperparameters (final iteration, $n=219$).}
\label{tab:per_regime_molecule}
\small
\begin{tabular}{lcccc}
\toprule
Regime & $\ell_k$ & $\sigma^2_{f,k}$ & $\sigma^2_{n,k}$ & \#Points \\
\midrule
1 & $98.865$  & $0.092$ & $0.000$ & $27$ \\
2 & $279.259$ & $1.138$ & $0.000$ & $22$ \\
3 & $39.172$  & $0.077$ & $0.000$ & $37$ \\
4 & $20.433$  & $0.073$ & $0.000$ & $49$ \\
5 & $2.200$   & $0.098$ & $0.003$ & $66$ \\
6 & $2.170$   & $0.557$ & $0.008$ & $15$ \\
7 & $0.324$   & $0.120$ & $0.005$ & $3$  \\
\midrule
SGP (global) & $108.079$ & $67275.802$ & $9546.095$ & $219$ \\
\bottomrule
\end{tabular}
\end{table}

Two observations are noteworthy. (i) The regime hyperparameters exhibit clear divergence: on Schwefel-10D, RAMBO discovers a short-scale regime ($\ell = 0.326$) alongside a broad smooth regime ($\ell = 2.388$), with signal variance differing by a factor of $4\times$. The Molecular Conformer benchmark shows even sharper separation—length scales span two orders of magnitude across the 7 discovered regimes, consistent with conformational basins separated by sharp torsional barriers. (ii) The standard SGP's global hyperparameters are pathologically large ($\sigma^2_f > 10^5$, $\sigma^2_n > 10^4$ on Schwefel-10D)—the model is unable to fit heterogeneous hyperparameters and is forced into a degenerate compromise. This diagnostic directly visualizes the failure mode that RAMBO is designed to remedy.

\clearpage
\section{Adaptive Concentration Parameter Scheduling}
\label{sec:alpha_scheduling}

The concentration parameter $\alpha$ governs regime creation in the Dirichlet Process: larger $\alpha$ encourages more clusters, with $\mathbb{E}[K_n \mid \alpha] = \alpha \log(n/\alpha + 1) + O(1)$ as $n \to \infty$~\citep{antoniak1974mixtures}. Prior work either fixes $\alpha$ or learns it via MCMC~\citep{rasmussen2002infinite}, but for sequential optimization, the appropriate $\alpha$ varies with data availability. With few observations, the data cannot reliably distinguish true regime structure from noise, and premature fragmentation leaves each GP expert with insufficient data for stable hyperparameter estimation. As observations accumulate, finer regime structure becomes statistically identifiable. This motivates a principled scheduling strategy: start conservative, then allow complexity to grow.


We formalize this intuition by matching the prior's expected complexity to a target regime discovery rate.

\begin{proposition}
\label{prop:alpha_scaling}
Assume the number of discernible regimes $K^*$ grows with sample size $n$ at a polynomial rate $\mathcal{O}(n^\beta)$ for $\beta \in (0,1)$. To align the Dirichlet Process prior expectation $\mathbb{E}[K_n \mid \alpha]$ with this target rate, the concentration parameter must scale as:
\begin{equation}
    \alpha^*(n) \propto \frac{n^\beta}{\log n}.
    \label{eq:optimal_alpha}
\end{equation}
\end{proposition}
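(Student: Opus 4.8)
The plan is to set the Dirichlet Process prior expectation equal (up to a constant) to the assumed target growth rate and then solve for $\alpha$ asymptotically. I begin with the established expected-cluster formula from Theorem~\ref{thm:expected_clusters}, namely $\mathbb{E}[K_n \mid \alpha] \approx \alpha \log(1 + n/\alpha)$, and impose the matching condition $\alpha \log(1 + n/\alpha) \asymp K^*(n) = \Theta(n^\beta)$ for $\beta \in (0,1)$. The task reduces to inverting this transcendental relation for $\alpha$ as a function of $n$ in the large-$n$ regime.

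The key steps, in order, are as follows. First, I posit the ansatz $\alpha = \alpha^*(n) \propto n^\beta / \log n$ and verify it satisfies the matching condition self-consistently, rather than attempting a direct inversion from scratch. Substituting this ansatz, I compute $n/\alpha \propto n^{1-\beta} \log n \to \infty$ since $\beta < 1$, so that $\log(1 + n/\alpha) = \log(n/\alpha) + o(1)$. Expanding, $\log(n/\alpha) = \log n - \log \alpha = \log n - \beta \log n + \log\log n + O(1) = (1-\beta)\log n + O(\log\log n)$, which is $\Theta(\log n)$. Multiplying by $\alpha \propto n^\beta/\log n$ then yields $\mathbb{E}[K_n \mid \alpha] \propto (n^\beta / \log n)\cdot \Theta(\log n) = \Theta(n^\beta)$, matching the target rate and closing the self-consistency argument.

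To establish that the scaling is not merely sufficient but necessary (i.e.\ the unique polynomial-compatible choice up to constants), I would argue that any $\alpha$ achieving $\alpha \log(1+n/\alpha) = \Theta(n^\beta)$ must have $\alpha = \Theta(n^\beta/\log n)$: writing $\alpha = n^{\gamma} g(n)$ for slowly varying $g$, the dominant balance forces the polynomial exponent $\gamma = \beta$ (otherwise the product is either sub- or super-polynomial), and then the logarithmic factor $\log(1+n/\alpha) = \Theta(\log n)$ forces $g(n) = \Theta(1/\log n)$. This pins down the stated form in Eq.~\eqref{eq:optimal_alpha}.

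The main obstacle is handling the logarithmic correction rigorously: the relation is transcendental, and one must be careful that the $o(1)$ and $O(\log\log n)$ error terms genuinely remain subdominant after multiplication by $\alpha$, so that they do not perturb the leading polynomial order. I expect the cleanest route is to prove two-sided bounds $c_1 n^\beta \le \alpha^*(n)\log(1+n/\alpha^*(n)) \le c_2 n^\beta$ for large $n$ under the ansatz, making the asymptotic equivalence precise without requiring an exact closed-form inverse. The constant of proportionality is intentionally left unspecified, as it is absorbed into the base concentration $\alpha_0$ in the concrete Log-Sqrt schedule of Eq.~\eqref{eq:log_sqrt_schedule}, where $\beta = 1/2$ recovers $\alpha_t \propto \sqrt{t}/\log t$.
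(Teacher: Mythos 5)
Your proposal is correct and follows essentially the same route as the paper: posit the ansatz $\alpha \propto n^\beta/\log n$, substitute into $\mathbb{E}[K_n \mid \alpha] \approx \alpha\log(1+n/\alpha)$, and observe that the outer logarithm contributes $\Theta(\log n)$ (the paper writes this as the argument being dominated by $n^{1-\beta}$, giving a factor $(1-\beta)\log n$), so the product is $\Theta(n^\beta)$. The one place you go beyond the paper is the necessity sketch showing this is the unique scaling up to constants; the paper's proof only verifies sufficiency of the hypothesized form, so that addition strengthens the argument but is not part of the published derivation.
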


\begin{proof}
Under the Chinese Restaurant Process representation, the expected number of clusters after $n$ observations satisfies $\mathbb{E}[K_n \mid \alpha] = \alpha \log(1 + n/\alpha) + O(1)$ as $n \to \infty$~\citep{antoniak1974mixtures}. Suppose we seek a schedule $\alpha_n$ such that $\mathbb{E}[K_n \mid \alpha_n] \asymp n^\beta$. Hypothesizing $\alpha_n = c \cdot n^\beta / \log n$ for some constant $c > 0$ and substituting,
\begin{align}
    \mathbb{E}[K_n \mid \alpha_n] &= \frac{c n^\beta}{\log n} \log\left(1 + \frac{n^{1-\beta} \log n}{c}\right) + O(1).
\end{align}
As $n \to \infty$, the argument of the outer logarithm is dominated by $n^{1-\beta}$, so $\log(1 + n^{1-\beta}\log n/c) = (1-\beta)\log n + O(\log \log n)$. Therefore
\begin{align}
    \mathbb{E}[K_n \mid \alpha_n] &\sim c(1-\beta)\, n^\beta,
\end{align}
which matches the target growth rate $\Theta(n^\beta)$.
\end{proof}


Guided by Proposition~\ref{prop:alpha_scaling}, we adopt a \emph{square-root growth} assumption ($\beta = 1/2$), motivated by the observation that distinct basins of attraction in multi-modal landscapes are discovered at a rate analogous to Heap's Law in information retrieval or the ``square-root rule'' in clustering heuristics~\citep{heaps1978information}. This yields the \textbf{Log-Sqrt Schedule}:
\begin{equation}
    \alpha_t = \alpha_0 \cdot \frac{\sqrt{t}}{\log(t + e)},
\end{equation}
where $\alpha_0$ is a base concentration parameter (default $\alpha_0 = 1.0$) and the offset $e$ in the denominator ensures numerical stability at $t=1$.

This schedule provides three desirable properties: (i) \emph{early parsimony}---small $\alpha_t$ initially prevents premature fragmentation when observations are sparse; (ii) \emph{progressive refinement}---increasing $\alpha_t$ enables fine-grained regime discovery as evidence accumulates; and (iii) \emph{bias-variance balance}---the sub-linear growth rate avoids over-segmentation while permitting sufficient model flexibility. When $\alpha_t$ changes between iterations, we warm-start Gibbs sampling from the previous assignments $\mathbf{z}_{t-1}$ to preserve learned structure.

\end{document}